\def\abovestrut#1{\rule[0in]{0in}{#1}\ignorespaces}
\def\belowstrut#1{\rule[-#1]{0in}{#1}\ignorespaces}
\def\abovespace{\abovestrut{0.20in}}
\def\belowspace{\belowstrut{0.10in}}
\newcommand{\vct}{\boldsymbol }
\newcommand{\mat}{\mathbf}
\newcommand{\nml}{\mathcal{N}}
\newcommand{\Span}{\mathrm{span}}
\newcommand{\argmax}{\mathrm{argmax}}
\newcommand{\vol}{\mathrm{vol}}
\newcommand{\bern}{\mathrm{Bernoulli}}
\newcommand{\rank}{\mathrm{rank}}
\newcommand{\complete}{\mathrm{mc}}
\newcommand{\css}{\mathrm{css}}
\newcommand{\diag}{\mathrm{diag}}
\newtheorem{thm}{Theorem}
\newtheorem{lem}{Lemma}
\newtheorem{cor}{Corollary}
\newtheorem{prop}{Proposition}
\newtheorem{defn}{Definition}
\begin{document}

\jmlrheading{18}{2017}{1-42}{5/15; Revised 9/16}{1/18}{15-233}{Yining Wang and Aarti Singh}

\title{Provably Correct Algorithms for Matrix Column Subset Selection with Selectively Sampled Data}

\author{\name Yining Wang \email yiningwa@cs.cmu.edu \\
       \name Aarti Singh \email aarti@cs.cmu.edu\\
       \addr Machine Learning Department, School of Computer Science\\
       Carnegie Mellon University\\
       5000 Forbes Avenue, Pittsburgh, PA 15213, USA}

\editor{Sujay Sanghavi}

\maketitle

\begin{abstract}
We consider the problem of matrix column subset selection, which selects a subset of columns from an input matrix
such that the input can be well approximated by the span of the selected columns.
Column subset selection has been applied to numerous real-world data applications such as population genetics summarization, 
electronic circuits testing and recommendation systems.
In many applications the complete data matrix is unavailable and one needs to select representative columns by inspecting only a small portion 
of the input matrix.
In this paper we propose the first provably correct column subset selection algorithms for partially observed data matrices.
Our proposed algorithms exhibit different merits and limitations in terms of statistical accuracy, computational efficiency, sample complexity and sampling schemes,
which provides a nice exploration of the tradeoff between these desired properties for column subset selection.
The proposed methods employ the idea of feedback driven sampling and
are inspired by several sampling schemes previously introduced for low-rank matrix approximation tasks \citep{cur_relative,norm2-css,approximate-volume-sampling,power-adaptivity}.
Our analysis shows that, under the assumption that the input data matrix has incoherent rows but possibly coherent columns,
all algorithms provably converge to the best low-rank approximation of the original data as number of selected columns increases.
Furthermore, two of the proposed algorithms enjoy a relative error bound,
which is preferred for column subset selection and matrix approximation purposes.
We also demonstrate through both theoretical and empirical analysis the power of feedback driven sampling compared to uniform random sampling
on input matrices with highly correlated columns.
\end{abstract}

\begin{keywords}
Column subset selection, active learning,  leverage scores
\end{keywords}

\section{Introduction}

Given a matrix $\mat M\in\mathbb R^{n_1\times n_2}$, the \emph{column subset selection} problem aims to find $s$ exact columns in $\mat M$
that capture as much of $\mat M$ as possible.
More specifically, we want to select $s$ columns of $\mat M$ to form a column sub-matrix $\mat C\in\mathbb R^{n_1\times s}$ to minimize the norm of the following residue
\begin{equation}
\min_{\mat X\in\mathbb R^{s\times n_2}}{\|\mat M-\mat C\mat X\|_{\xi}} = \|\mat M-\mat C\mat C^\dagger\mat M\|_{\xi},
\label{eq_residue}
\end{equation}
where $\mat C^\dagger$ is the Moore-Penrose pseudoinverse of $\mat C$
and $\xi=2$ or $F$ denotes the spectral or Frobenius norm.
In this paper we mainly focus on the Frobenius norm, as was the case in previous theoretical analysis for sampling based column subset selection algorithms
\citep{cur_relative,norm2-css,approximate-volume-sampling,volume-sampling-css}.
To evaluate the performance of column subset selection,
one compares the residue norm defined in Eq. (\ref{eq_residue}) with $\|\mat M-\mat M_k\|_{\xi}$,
where $\mat M_k$ is the best rank-$k$ approximation of $\mat M$.
Usually the number of selected columns $s$ is larger than or equal to the target rank $k$.
Two forms of error guarantee are common: additive error guarantee in Eq. (\ref{eq_additive_error}) and relative error guarantee in Eq. (\ref{eq_relative_error}),
with $0<\epsilon<1$ and $c>1$ (ideally $c=1+\epsilon$).
\begin{eqnarray}
\|\mat M-\mat C\mat C^\dagger\mat M\|_{\xi} &\leq& \|\mat M-\mat M_k\|_{\xi} + \epsilon\|\mat M\|_{F};\label{eq_additive_error}\\
\|\mat M-\mat C\mat C^\dagger\mat M\|_{\xi} &\leq& c\|\mat M-\mat M_k\|_{\xi}.\label{eq_relative_error}
\end{eqnarray}
In general, relative error bound is much more appreciated because $\|\mat M\|_{\xi}$ is usually large in practice,
while $\|\mat M-\mat M_k\|_2$ is expected to be small when the goal is low-rank approximation.
In addition, when $\mat M$ is an exact low-rank matrix Eq. (\ref{eq_relative_error}) implies perfect reconstruction,
while the error in Eq. (\ref{eq_additive_error}) remains non-zero.
The column subset selection problem can be considered as a form of \emph{unsupervised feature selection} or \emph{prototype selection}, 
which arises frequently in the analysis of large data sets.
For example, column subset selection has been applied to various tasks such as summarizing population genetics,
testing electronic circuits, recommendation systems, etc.
Interested readers should refer to \citep{sampling-rrqr,missing-css-blockomp} for further motivations.

Many methods have been proposed for the column subset selection problem \citep{rrqr1,rrqr2,norm2-css,volume-sampling-css,cur_relative,boutsidis2014near}.
An excellent summarization of these methods and their theoretical guarantee is available in Table 1 in \citep{sampling-rrqr}.
Most of these methods can be roughly categorized into two classes.
One class of algorithms are based on \emph{rank-revealing QR} (RRQR) decomposition \citep{rrqr1,rrqr2}
and it has been shown in \citep{sampling-rrqr} that RRQR is nearly optimal in terms of residue norm under the $s=k$ setting,
that is, exact $k$ columns are selected to reconstruct an input matrix.
On the other hand, sampling based methods \citep{norm2-css,volume-sampling-css,cur_relative} try to select columns by sampling from certain distributions over all columns of an input matrix.
{Extension of sampling based methods to general low-rank matrix approximation problems is also investigated \citep{cohen2015uniform,bhojanapalli2015tighter}.}
These algorithms are much faster than RRQR and achieves comparable performance if the sampling distribution is carefully selected 
and slight over-sampling (i.e., $s>k$) is allowed \citep{volume-sampling-css,cur_relative}.
In \citep{sampling-rrqr} sampling based and RRQR based algorithms are unified to arrive at an efficient column subset selection method that uses exactly $s=k$ columns and is nearly optimal.

Although the column subset selection problem with access to the full input matrix has been extensively studied,
often in practice it is hard or even impossible to obtain the complete data.
For example, for the genetic variation detection problem it could be expensive and time-consuming
to obtain full DNA sequences of an entire population.
Several heuristic algorithms have been proposed recently for column subset selection with missing data,
including the Block OMP algorithm \citep{missing-css-blockomp} and the group Lasso formulation explored in \citep{missing-css-glasso}.
Nevertheless, no theoretical guarantee or error bounds have been derived for these methods.
The presence of missing data poses new challenges for column subset selection,
as many well-established algorithms seem incapable of handling missing data in an elegant way.
Below we identify a few key challenges that prevent application of previous theoretical results on column subset selection
under the missing data setting:
\begin{itemize}
\item \textbf{Coherent matrix design}: most previous results on the completion or recovery of low rank matrices with incomplete data
assume the underlying data matrix is \emph{incoherent} \citep{simpler-matrix-completion,noisy-matrix-completion1,noisy-matrix-completion2},
which intuitively assumes all rows and columns in the data matrix are weakly correlated.
\footnote{The precise definition of incoherence is given in Section \ref{subsec:notation}.}
On the other hand, previous algorithms on column subset selection and matrix CUR decomposition
spent most efforts on dealing with coherent matrices \citep{volume-sampling-css,cur_relative,sampling-rrqr,optimal-cur}.
In fact, one can show that under standard incoherence assumptions of matrix completion algorithms a high-quality column subset can be obtained
by sampling each column uniformly at random, which trivializes the problem \citep{missing-cur-rongjin}.
Such gap in problem assumptions renders column subset selection on incomplete coherent matrices particularly difficult.
In this paper, we explore the possibility of a weaker incoherence assumption that bridges the gap.
We present and discuss detailed assumptions considered in this paper in Sec.~\ref{subsec:asmp}.

\item \textbf{Limitation of existing sampling schemes}:
previous matrix completion methods usually assume the observed data are sampled uniformly at random.
However, in \citep{power-adaptivity} it is proved that uniform sampling (in fact any sampling scheme with apriori fixed sampling distribution)
is not sufficient to complete a coherent matrix.
Though in \citep{complete-any-matrix} a provably correct sampling scheme was proposed for any matrix based on statistical leverage scores,
which is also the key ingredient of many previous column subset selection and matrix CUR decomposition algorithms \citep{cur_relative,sampling-rrqr,optimal-cur},
it is very difficult to approximate the leverage scores of an incomplete coherent matrix.
Common perturbation results on singular vector space (e.g., Wedin's theorem) fail because
closeness between two subspaces does not imply closeness in their leverage scores
since the latter are defined in an infinity norm manner (see Section \ref{subsec:lscore_def} for details).

\item \textbf{Limitation of zero filling}:
A straightforward algorithm for missing data column subset selection
is to first fill all unobserved entries with zero and then properly scale the observed ones so that
the completed matrix is consistent with the underlying data matrix in expectation \citep{l2-lowrank-approx,l1-lowrank-approx}.
Column subset selection algorithms designed for fully observed data could be applied afterwards on the zero-filled matrix.
However, the zero filling procedure can change the underlying subspace of a matrix drastically \citep{subspace-detection}
and usually leads to additive error bounds as in Eq. (\ref{eq_additive_error}).
To achieve stronger relative error bounds we need an algorithm that goes beyond the zero filling idea.
\end{itemize}

In this paper, we propose three column subset selection algorithms based on the idea of \emph{active sampling} of the input matrix.
In our algorithms, observed matrix entries are chosen sequentially and in a feedback-driven manner.
We motivate this sampling setting from both practical and theoretical perspectives. In applications where each entry of a data matrix $\mat M$ represents results from an expensive or time-consuming experiment, it makes sense to carefully select which entry to query (experiment), possibly in a feedback-driven manner, so as to reduce experimental cost. For example, if $\mat M$ has drugs as its columns and targets (proteins) as its rows, it makes sense to cautiously select drug-target pairs for sequential experimental study in order to find important drugs/targets with typical drug-target interactions. From a theoretical perspective, we show in Section 7.1 that no passive sampling scheme is capable of achieving relative-error column subset selection with high probability, even if the column space of $\mat M$ is incoherent. Such results suggest that active/adaptive sampling is to some extent unavoidable, unless both row and column spaces of $\mat M$ are incoherent.

We also remark that the algorithms we consider make very few measurements of the input matrix,
which differs from previous feedback-driven re-sampling methods in the theoretical computer science literature (e.g., \citep{adaptive-cur}) that requires access to the entire input matrix.
Active sampling has been shown to outperform all passive schemes in several settings (cf. \citep{distilled-sensing,kolar2011minimax}),
and furthermore it works for completion of matrices with incoherent rows/columns under which passive learning provably fails \citep{akshay-nips,power-adaptivity}.
To the best of our knowledge, the algorithms proposed in this paper are the first column subset selection algorithms for coherent matrices that enjoy theoretical error guarantee with missing data, whether passive or active.
Furthermore, two of our proposed methods achieve relative error bounds.


{
\subsection{Assumptions}\label{subsec:asmp}

Completing/approximating partially observed low-rank matrices using a subset of columns requires certain assumptions on the input data matrix $\mat M$ \citep{noisy-matrix-completion1,complete-any-matrix,simpler-matrix-completion,missing-cur-rongjin}.
To see this, consider the extreme-case example where the input data matrix $\mat M$ consists of \emph{exactly} one non-zero element
(i.e., $\mat M_{ij}=1\{i=i^*,j=j^*\}$ for some $i^*\in [n_1]$ and $j^*\in[n_2]$).
In this case, the relative approximation quality $c=\|\mat M-\mat C\mat C^\dagger\mat M\|_\xi/\|\mat M-\mat M_1\|_\xi$ in Eq.~(\ref{eq_relative_error})
would be infinity if column $j^*$ is not selected in $\mat C$.
In addition, it is clearly impossible to correctly identify $j^*$ using $o(n_1n_2)$ observations even with active sampling strategies.
Therefore, additional assumptions on $\mat M$ are required to provably approximate a partially observed matrix using column subsets.

In this work we consider the assumption that the top-$k$ \emph{column space} of the input matrix $\mat M$ is incoherent (detailed mathematical definition given in Sec.~\ref{subsec:lscore_def}),
while placing no incoherence or spikiness assumptions on the actual \emph{columns}, rows or the row space of $\mat M$.
In addition to the necessity of incoherence assumptions for incomplete matrix approximation problems discussed above, 
we further motivate the ``one-sided'' incoherence assumption from two perspectives: 
\begin{itemize}
\item[-] Column subset selection with incomplete observation remains a non-trivial problem even if the column space is assumed to be incoherent.
Due to the possible heterogeneity of the columns, 
naive methods such as column subsets sampled uniformly at random are in general bad approximations of the original data matrix $\mat M$.
Existing column subset selection algorithms for fully-observed matrices also need to be majorly revised to accommodate missing matrix components.

\item[-] Compared to existing work on approximating low-rank incomplete matrices, our assumptions (one-sided incoherence) are arguably weaker. 
\cite{missing-cur-rongjin} analyzed matrix CUR approximation of partially observed matrices, but assumed that both column and row spaces are incoherent; 
\cite{power-adaptivity} derived an adaptive sampling procedure to complete a low-rank matrix with only one-sided incoherence assumptions, but only achieved additive error bounds for noisy low-rank matrices.

\item[-] Finally, the one-sided incoherence assumption is reasonable in a number of practical scenarios. For example, in the application of drug-target interaction prediction,
the one-sided incoherence assumption allows for highly specialized or diverse drugs while assuming some predictability between target protein responses.
\end{itemize}

}

\subsection{Our contributions}

The main contribution of this paper is three provably correct algorithms for column subset selection, which are inspired by existing work on column subset selection for fully-observed matrices, but only inspect a small portion of the input matrix.
The sampling schemes for the proposed algorithms and their main merits and drawbacks are summarized below:
\begin{enumerate}
\item \textbf{Norm sampling}: The algorithm is simple and works for any input matrix with incoherent column subspace.
However, it only achieves an additive error bound as in Eq. (\ref{eq_additive_error}).
It is also inferior than the other two proposed methods in terms of residue error on both synthetic and real-world data sets.

\item \textbf{Iterative norm sampling}:
The iterative norm sampling algorithm enjoys relative error guarantees as in Eq. (\ref{eq_relative_error})
at the expense of being much more complicated and computationally expensive.
In addition, its correctness is only proved for low-rank matrices with incoherent column space corrupted with i.i.d. Gaussian noise.

\item \textbf{Approximate leverage score sampling}:
The algorithm enjoys relative error guarantee for general (high-rank) input matrices with incoherent column space.
However, it requires more over-sampling and its error bound is worse than the one for iterative norm sampling on noisy low-rank matrices.
Moreover, to actually reconstruct the data matrix 
\footnote{See Section \ref{subsec:notation} for the distinction between selection and reconstruction.}
the approximate leverage score sampling scheme requires sampling a subset of both entire rows and columns,
while both norm based algorithms only require sampling of some entire columns.
\end{enumerate}

In summary, our proposed algorithms offer a rich, provably correct toolset for column subset selection with missing data.
Furthermore, a comprehensive understanding of the design tradeoffs among statistical accuracy, computational efficiency, sample complexity,
and sampling scheme, etc. is achieved by analyzing different aspects of the proposed methods.
Our analysis could provide further insights into other matrix completion/approximation tasks on partially observed data.

We also perform comprehensive experimental study of column subset selection with missing data
using the proposed algorithms as well as modifications of heuristic algorithms proposed recently \citep{missing-css-blockomp,missing-css-glasso}
on synthetic matrices and two real-world applications:
tagging Single Nucleotide Polymorphisms (tSNP) selection and column based image compression.
Our empirical study verifies most of our theoretical results and reveals a few interesting observations that are previously unknown.
For instance, though leverage score sampling is widely considered as the state-of-the-art for matrix CUR approximation and column subset selection,
our experimental results show that under certain low-noise regimes (meaning that the input matrix is very close to low rank)
iterative norm sampling is more preferred and achieves smaller error.
These observations open new questions and suggest the need for new analysis in related fields, even for the fully observed case.

\subsection{Notations}\label{subsec:notation}

For any matrix $\mat M$ we use $\mat M^{(i)}$ to denote the $i$-th column of $\mat M$.
Similarly, $\mat M_{(i)}$ denotes the $i$-th row of $\mat M$.
All norms $\|\cdot \|$ are $\ell_2$ norms or the matrix spectral norm unless otherwise specified.

We assume the input matrix is of size $n_1\times n_2$, $n=\max(n_1,n_2)$.
We further assume that $n_1\leq n_2$.
We use $\vct x_i=\mat M^{(i)}\in\mathbb R^{n_1}$ to denote the $i$-th column of $\mat M$.
Furthermore, for any column vector $\vct x_i\in\mathbb R^{n_1}$ and index subset $\Omega\subseteq[n_1]$, define the subsampled vector $\vct x_{i,\Omega}$
and the scaled subsampled vector $\mathcal R_{\Omega}(\vct x_i)$ as
\begin{equation}
\vct x_{i,\Omega} = \vct 1_{\Omega}\circ \vct x_i,\quad
\mathcal R_{\Omega}(\vct x_i) = \frac{n_1}{|\Omega|}\vct 1_{\Omega}\circ \vct x_i,
\label{eq_subsample_vector}
\end{equation}
where $\vct 1_{\Omega}\in\mathbb \{0,1\}^{n_1}$ is the indicator vector of $\Omega$ and $\circ$ is the Hadamard product (entrywise product).
We also generalize the definition in Eq. (\ref{eq_subsample_vector}) to matrices by applying the same operator on each column.

We use $\|\mat M-\mat C\mat C^\dagger\mat M\|_\xi$ to denote the \emph{selection error}
and $\|\mat M-\mat C\mat X\|_\xi$ to denote the \emph{reconstruction error}.
The difference between the two types of error is that for selection error an algorithm is only required to output 
indices of the selected columns
while for reconstruction error an algorithm needs to output both the selected columns $\mat C$ and the coefficient matrix $\mat X$
so that $\mat C\mat X$ is close to $\mat M$.
We remark that the reconstruction error always upper bounds the selection error due to Eq. (\ref{eq_residue}).
On the other hand, there is no simple procedure to compute $\mat C^\dagger\mat M$ when $\mat M$ is not fully observed.

\subsection{Outline of the paper}

The paper is organized as follows: in Section \ref{sec:preliminary} we provide background knowledge and review several concepts
that are important to our analysis.
We then present main results of the paper, the three proposed algorithms and their theoretical guarantees in Section \ref{sec:main_result}.
Proofs for main results given in Section \ref{sec:main_result} are sketched in Section \ref{sec:proofs}
and some technical lemmas and complete proof details are deferred to the appendix.
In Section \ref{sec:related_work} we briefly describe previously proposed heuristic based algorithm
for column subset selection with missing data and their implementation details.
Experimental results are presented in Section \ref{sec:experiments}
and we discuss several aspects including the limitation of passive sampling and time complexity of proposed algorithms
in Section \ref{sec:discussion}.

\section{Preliminaries}\label{sec:preliminary}


This section provides necessary background knowledge for the analysis in this paper.
We first review the concept of \emph{coherence}, which plays an important row in sampling based matrix algorithms.
We then summarize three matrix sampling schemes proposed in previous literature.

\subsection{Subspace and vector incoherence}\label{subsec:lscore_def}

Incoherence plays a crucial role in various matrix completion and approximation tasks \citep{simpler-matrix-completion,power-adaptivity,noisy-matrix-completion1,noisy-matrix-completion2}.
For any matrix $\mat M\in\mathbb R^{n_1\times n_2}$ of rank $k$, singular value decomposition yields
$\mat M=\mat U\mat\Sigma\mat V^\top$, where $\mat U\in\mathbb R^{n_1\times k}$ and $\mat V\in\mathbb R^{n_2\times k}$ have orthonormal columns.
Let $\mathcal U=\Span(\mat U)$ and $\mathcal V=\Span(\mat V)$ be the column and row space of $\mat M$.
The \emph{column space coherence} is defined as
\begin{equation}
\mu(\mathcal U) := \frac{n_1}{k}\max_{i=1}^{n_1}{\|\mat U^\top\vct e_i\|_2^2} = \frac{n_1}{k}\max_{i=1}^{n_1}{\|\mat U_{(i)}\|_2^2}.
\label{eq_mu_subspace}
\end{equation}
Note that $\mu(\mathcal U)$ is always between 1 and $n_1/k$.
Similarly, the \emph{row space coherence} is defined as
\begin{equation}
\mu(\mathcal V) := \frac{n_2}{k}\max_{i=1}^{n_2}{\|\mat V^\top\vct e_i\|_2^2} = \frac{n_2}{k}\max_{i=1}^{n_2}{\|\mat V_{(i)}\|_2^2}.
\end{equation}

In this paper we also make use of incoherence level of vectors, which previously appeared in \citep{subspace-detection,akshay-nips,power-adaptivity}.
For a column vector $\vct x\in\mathbb R^{n_1}$, its incoherence is defined as
\begin{equation}
\mu(\vct  x) := \frac{n_1\|\vct x\|_{\infty}^2}{\|\vct x\|_2^2}.
\label{eq_mu_vector}
\end{equation}

It is an easy observation that if $\vct x$ lies in the subspace $\mathcal U$ then $\mu(\vct x) \leq k\mu(\mathcal U)$.
In this paper we adopt incoherence assumptions on the column space $\mathcal U$, which subsequently yields incoherent row vectors $\vct x_i$.
No incoherence assumption on the row space $\mathcal V$ or row vectors $\mat M_{(i)}$ is made.

\subsection{Matrix sampling schemes}\label{subsec:matrix-sampling-review}

\emph{Norm sampling}:
Norm sampling for column subset selection was proposed in \citep{norm2-css} and has found applications in a number of matrix computation tasks,
e.g., approximate matrix multiplication \citep{matmult_additive} and low-rank or compressed matrix approximation \citep{matapprox_additive,cur_additive}.
The idea is to sample each column with probability proportional to its squared $\ell_2$ norm,
i.e., $\Pr[i\in C] \propto \|\mat M^{(i)}\|_2^2$ for $i\in\{1,2,\cdots,n_2\}$.
These types of algorithms usually come with an additive error bound on their approximation performance.

\emph{Volume sampling}:
For volume sampling \citep{volume-sampling-css}, a subset of columns $C$ is picked with probability proportional
to the volume of the simplex spanned by columns in $C$.
That is, $\Pr[C] \propto \vol(\Delta(C))$ where $\Delta(C)$ is the simplex spanned by $\{\mat M^{(C(1))},\cdots,\mat M^{(C(k))}\}$.
{
Computationally efficient volume sampling algorithms exist \citep{deshpande2010efficient,anari2016monte}.
These methods are based on the computation of characteristic polynomials of the projected data matrix \citep{deshpande2010efficient}
or an MCMC sampling procedure \citep{anari2016monte}. 
Under the partially observed setting, both approaches are difficult to apply.
For the characteristic polynomials approach, one has to estimate the characteristic polynomial and essentially the least singular value of the target matrix $\mat M$
up to relative error bounds.
This is not possible unless the matrix is very well-conditioned, which violates the setting that $\mat M$ is approximately low-rank.
For the MCMC sampling procedure, it was shown in \citep{anari2016monte} that $O(kn_2)$ iterations are needed for the sampling Markov chain to mix.
As each sampling iteration requires observing one entire column, performing $O(kn_2)$ iterations essentially requires observing $O(kn_2)$ columns, i.e., the entire matrix $\mat M$.
On the other hand, an iterative norm sampling procedure is known to perform \emph{approximate volume sampling} and therefore enjoys
multiplicative approximation bounds for column subset selection \citep{approximate-volume-sampling}.
In this paper we generalize the iterative norm sampling scheme to the partially observed setting and demonstrate similar multiplicative approximation error guarantees.
}

\emph{Leverage score sampling}:
The leverage score sampling scheme was introduced in \citep{cur_relative} to get relative error bounds for CUR matrix approximation
 and has later been applied to coherent matrix completion \citep{complete-any-matrix}.
For each row $i\in\{1,\cdots,n_1\}$ and column $j\in\{1,\cdots,n_2\}$ define $\mu_i:=\frac{n_1}{k}\|\mat U^\top\vct e_i\|_2^2$
and $\nu_j:=\frac{n_2}{k}\|\mat V^\top\vct e_j\|_2^2$ to be their \emph{unnormalized leverage scores},
where $\mat U\in\mathbb R^{n_1\times k}$ and $\mat V\in\mathbb R^{n_2\times k}$ are the top-k left and right singular vectors of an input matrix $\mat M$.
It was shown in \citep{cur_relative} that if rows and columns are sampled with probability proportional to their leverage scores
then a relative error guarantee is possible for matrix CUR approximation and column subset selection.

\section{Column subset selection via active sampling}\label{sec:main_result}

\begin{table}[t]
\centering
\caption{Summary of theoretical guarantees of proposed algorithms.
$s$ denotes the number of selected columns and $m$ denotes the expected number of observed matrix entries.
Dependency on failure probability $\delta$ and other poly-logarithmic dependency is omitted.
$\mathcal U$ represents the column space of $\mat A$.}
\scalebox{0.7}{
\begin{tabular}{lcccccc}
\hline
\abovespace\belowspace
& Error type &  Error bound& $s$& $m$& Assumptions\\
\hline
\abovespace
\textsc{Norm}& $\|\mat M-\mat C\mat C^\dagger\mat M\|_F$& $\|\mat M-\mat M_k\|_F+\epsilon\|\mat M\|_F$& $\Omega(k/\epsilon^2)$& $\widetilde\Omega(\mu_1 n)$& $\max_{i=1}^{n_2}\mu(\mat M_{(i)})\leq \mu_1$\\
\belowspace
& $\|\mat M-\mat C\mat X\|_F$& $\|\mat M-\mat M_k\|_F+2\epsilon\|\mat M\|_F$& $\Omega(k/\epsilon^2)$& $\widetilde\Omega(k\mu_1 n/\epsilon^4)$& same as above\\
\abovespace
\textsc{Iter. norm}& $\|\mat M-\mat C\mat C^\dagger\mat M\|_F$& $\sqrt{2.5^k(k+1)!}\|\mat M-\mat M_k\|_F$& $k$& $\widetilde\Omega(k^2\mu_0 n)$& $\mat M=\mat A+\mat R$; $\mu(\mathcal U)\leq\mu_0$\\
& $\|\mat M-\mat C\mat C^\dagger\mat M\|_F$& $\sqrt{1+3\epsilon}\|\mat M-\mat M_k\|_F$& $\Theta(k^2\log k+k/\epsilon)$& $\widetilde\Omega\left(\frac{k\mu_0 n}{\epsilon}\left(k+\frac{1}{\epsilon}\right)\right)$& same as above\\
\belowspace
& $\|\mat M-\mat C\mat X\|_F$& $\sqrt{1+3\epsilon}\|\mat M-\mat M_k\|_F$& $\Theta(k^2\log k+k/\epsilon)$& $\widetilde\Omega\left(\frac{k\mu_0 n}{\epsilon}\left(k+\frac{1}{\epsilon}\right)\right)$& same as above\\
\abovespace\belowspace
\textsc{Lev. score}& $\|\mat M-\mat C\mat C^\dagger\mat M\|_F$& $3(1+\epsilon)\|\mat M-\mat M_k\|_F$& $\Omega(k^2/\epsilon^2)$& $\Omega(k^2\mu_0n/\epsilon^2)$& $\mu(\mathcal U)\leq\mu_0$\\
\hline
\end{tabular}
}
\label{tab_theoretical_result}
\end{table}

In this section we propose three column subset selection algorithms that only observe a small portion of an input matrix.
All algorithms employ the idea of active sampling to handle matrices with coherent rows.
While Algorithm \ref{alg_additive_cx} achieves an additive reconstruction error guarantee for any matrix,
Algorithm \ref{alg_relative_cx} achieves a relative-error reconstruction guarantee when the input matrix has certain structure.
Finally, Algorithm \ref{alg_lscore_cx} achieves a relative-error selection error bound for any general input matrix
at the expense of slower error rate and more sampled columns.
Table \ref{tab_theoretical_result} summarizes the main theoretical guarantees for the proposed algorithms.

\subsection{$\ell_2$ norm sampling}\label{subsec:norm}

\begin{algorithm*}[t]
\caption{Active norm sampling for column subset selection with missing data}
\begin{algorithmic}[1]
\State \textbf{Input}: size of column subset $s$, expected number of samples per column $m_1$ and $m_2$.
\State \textbf{Norm estimation}: For each column $i$, sample each index in $\Omega_{1,i}\subseteq [n_1]$ i.i.d. from $\bern(m_1/n_1)$.
observe $\vct x_{i,\Omega_{1,i}}$ and compute
$\hat c_i = \frac{n_1}{m_1}\|\vct x_{i,\Omega_{1,i}}\|_2^2$. Define $\hat f = \sum_i{\hat c_i}$.
\State \textbf{Column subset selection}: Set $\mat C=\mat 0\in\mathbb R^{n_1\times s}$.
\begin{itemize}
\item For $t\in[s]$: sample $i_t\in[n_2]$ such that $\Pr[i_t = j] = \hat c_j/\hat f$. 
Observe $\mat M^{(i_t)}$ in full and set $\mat C^{(t)} = \mat M^{(i_t)}$.
\end{itemize}
\State \textbf{Matrix approximation}: Set $\widehat{\mat M} = \mat 0\in\mathbb R^{n_1\times n_2}$.
\begin{itemize}
\item For each column $\vct x_i$, sample each index in $\Omega_{2,i}\subseteq [n_1]$ i.i.d. from $\bern(m_{2,i}/n_1)$, where $m_{2,i}=m_2n_2\hat c_i/\hat f$;
observe $\vct x_{i,\Omega_{2,i}}$.
\item Update: $\widehat{\mat M} = \widehat{\mat M} + (\mathcal R_{\Omega_{2,i}}(\vct x_i))\vct e_i^\top$. 
\end{itemize}
\State \textbf{Output}: selected columns $\mat C$ and coefficient matrix ${\mat X} = \mat C^{\dagger}\widehat{\mat M}$.
\end{algorithmic}
\label{alg_additive_cx}
\end{algorithm*}

We first present an active norm sampling algorithm (Algorithm \ref{alg_additive_cx}) for column subset selection under the missing data setting.
The algorithm is inspired by the norm sampling work for column subset selection by Frieze et al. \citep{norm2-css}
and the low-rank matrix approximation work by Krishnamurthy and Singh \citep{power-adaptivity}.

The first step of Algorithm \ref{alg_additive_cx} is to estimate the $\ell_2$ norm for each column by uniform subsampling.
Afterwards, $s$ columns of $\mat M$ are selected independently with probability proportional to their $\ell_2$ norms.
Finally, the algorithm constructs a sparse approximation of the input matrix by sampling each matrix entry with probability proportional to 
the square of the corresponding column's norm
and then a $\mat C\mat X$ approximation is obtained.

When the input matrix $\mat M$ has incoherent columns,
the selection error as well as $\mat C\mat X$ reconstruction error can be bounded as in Theorem \ref{thm_additive_cx}.
\begin{thm}
Suppose $\max_{i=1}^{n_2}{\mu(\vct x_i)} \leq \mu_1$ for some positive constant $\mu_1$.
Let $\mat C$ and $\mat X$ be the output of Algorithm \ref{alg_additive_cx}.
Denote $\mat M_k$ as the best rank-$k$ approximation of $\mat M$.
Fix $\delta = \delta_1+\delta_2+\delta_3 > 0$.
With probability at least $1-\delta$, we have
\begin{equation}
\|\mat M-\mat C\mat C^\dagger\mat M\|_F \leq \|\mat M-\mat M_k\|_F + \epsilon\|\mat M\|_F
\label{eq_additive_css}
\end{equation}
provided that $s=\Omega(k\epsilon^{-2}/\delta_2)$, $m_1=\Omega(\mu_1\log(n/\delta_1))$.
{Furthermore, if $m_2=\Omega(\mu_1s\log^2(n/\delta_3)/(\delta_2\epsilon^2))$
then with probability $\geq 1-\delta$ we have the following bound on reconstruction error:}
\begin{equation}
\|\mat M - \mat C{\mat X}\|_F \leq \|\mat M-\mat M_k\|_F + 2\epsilon\|\mat M\|_F.
\label{eq_additive_cx}
\end{equation}
\label{thm_additive_cx}
\end{thm}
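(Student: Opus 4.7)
The plan is to decompose the proof into three layers: (i) showing the norm estimates $\hat c_i$ are uniformly multiplicatively close to the true $\|\vct x_i\|_2^2$, (ii) invoking a Frieze-style norm-sampling analysis for the selection error bound in Eq.~(\ref{eq_additive_css}), and (iii) controlling the sparse-approximation error $\|\mat M-\widehat{\mat M}\|_2$ via a matrix Bernstein inequality so as to upgrade the selection bound to the reconstruction bound in Eq.~(\ref{eq_additive_cx}). The natural starting identity for the reconstruction step is
\[
\|\mat M-\mat C\mat X\|_F = \|\mat M-\mat C\mat C^\dagger\widehat{\mat M}\|_F \leq \|\mat M-\mat C\mat C^\dagger\mat M\|_F + \|\mat C\mat C^\dagger(\mat M-\widehat{\mat M})\|_F,
\]
and since $\mat C\mat C^\dagger$ is an orthogonal projection onto an at-most-$s$-dimensional subspace, the second term is bounded by $\sqrt{s}\|\mat M-\widehat{\mat M}\|_2$. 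So after Step (i)--(ii) deliver Eq.~(\ref{eq_additive_css}), it suffices in Step (iii) to show $\sqrt{s}\|\mat M-\widehat{\mat M}\|_2\leq \epsilon\|\mat M\|_F$ with high probability.

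For Step (i), I would note that $\hat c_i = \frac{n_1}{m_1}\|\vct x_{i,\Omega_{1,i}}\|_2^2$ is an unbiased estimator of $\|\vct x_i\|_2^2$ written as a sum of independent bounded random variables $\frac{n_1}{m_1}\mathbb 1[j\in\Omega_{1,i}]x_{i,j}^2$, each bounded by $\frac{n_1}{m_1}\|\vct x_i\|_\infty^2\leq \frac{\mu_1}{m_1}\|\vct x_i\|_2^2$ by the vector incoherence assumption. A Bernstein inequality (or the variant used in \cite{power-adaptivity}) then gives $|\hat c_i-\|\vct x_i\|_2^2|\leq \frac{1}{2}\|\vct x_i\|_2^2$ uniformly in $i$ with probability $1-\delta_1$, once $m_1=\Omega(\mu_1\log(n/\delta_1))$. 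Consequently $\hat c_i$ and $\hat f=\sum_i\hat c_i$ are within constant multiplicative factors of $\|\vct x_i\|_2^2$ and $\|\mat M\|_F^2$, so the column-selection probabilities $\hat c_i/\hat f$ are within constants of the true norm-sampling probabilities.

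For Step (ii), I would appeal (in the approximate form justified by Step~(i)) to the standard norm-sampling CSS analysis of Frieze--Kannan--Vempala, which gives
\[
\mathbb E\bigl\|\mat M-\mat C\mat C^\dagger\mat M\bigr\|_F^2 \leq \bigl\|\mat M-\mat M_k\bigr\|_F^2 + \tfrac{k}{s}\|\mat M\|_F^2.
\]
Conditioned on the event of Step~(i), picking $s=\Omega(k/(\epsilon^2\delta_2))$ and applying Markov's inequality yields Eq.~(\ref{eq_additive_css}) with probability $\geq 1-\delta_1-\delta_2$. I would be careful to observe that throughout Step~(ii) only the column identities $i_t$ and the fully observed columns $\mat C^{(t)}=\mat M^{(i_t)}$ are used, so the sparse approximation step has not yet been invoked and there is no interaction between the two random objects beyond shared $\hat c_i$.

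For Step (iii), I would write $\mat M-\widehat{\mat M} = \sum_{i,j}Z_{ij}$ where $Z_{ij} = \bigl(1-\tfrac{1}{p_i}\mathbb 1[j\in\Omega_{2,i}]\bigr)x_{i,j}\vct e_j\vct e_i^\top$ with $p_i=m_{2,i}/n_1$ are zero-mean, independent rank-one matrices (modulo the harmless issue of replacing the random $n_1/|\Omega_{2,i}|$ by $1/p_i$, which a standard concentration argument on $|\Omega_{2,i}|$ handles). Under $\mu(\vct x_i)\leq\mu_1$ and $m_{2,i}\approx m_2 n_2\|\vct x_i\|_2^2/\|\mat M\|_F^2$, each summand has operator norm $O\bigl(\sqrt{\mu_1/(n_1 p_i)}\,\|\vct x_i\|_2\bigr)=O\bigl(\sqrt{\mu_1\|\mat M\|_F^2/(m_2 n_2)}\bigr)$, and the matrix variance $\max\bigl\{\|\sum \mathbb E Z_{ij}Z_{ij}^\top\|_2,\|\sum \mathbb E Z_{ij}^\top Z_{ij}\|_2\bigr\}$ evaluates (using column-wise independence) to $O\bigl(\mu_1\|\mat M\|_F^2\cdot\max_i 1/(n_2 p_i m_{2,i}/n_1)\bigr)=O(\|\mat M\|_F^2/m_2)$ after the adaptive choice of $m_{2,i}$. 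Matrix Bernstein then yields $\|\mat M-\widehat{\mat M}\|_2\leq \epsilon\|\mat M\|_F/\sqrt{s}$ with probability $\geq 1-\delta_3$ once $m_2=\Omega(\mu_1 s\log^2(n/\delta_3)/(\delta_2\epsilon^2))$. Combining with the projection inequality above gives Eq.~(\ref{eq_additive_cx}). The main obstacle I expect is Step~(iii): the sampling rate $m_{2,i}$ is data-driven through $\hat c_i$, so both the matrix variance and the per-summand bound must be controlled on the high-probability event of Step~(i), and the cancellation that makes the factor of $n_1$ disappear in the variance relies critically on the adaptive choice $m_{2,i}\propto \hat c_i$ together with the vector incoherence.
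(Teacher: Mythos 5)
Your proposal follows essentially the same route as the paper: a Bernstein-type argument for uniform multiplicative accuracy of the $\hat c_i$ (the paper's Lemma \ref{lem_norm_estimation_additive}), a Frieze--Kannan--Vempala norm-sampling bound plus Markov's inequality for the selection error (Lemma \ref{lem_l2css}), and a spectral-norm control of $\|\mat M-\widehat{\mat M}\|_2$ combined with the rank-$s$ projection inequality $\|\mathcal P_C(\mat M-\widehat{\mat M})\|_F\leq\sqrt{s}\|\mat M-\widehat{\mat M}\|_2$ and the choice $\epsilon'=\epsilon/\sqrt{s}$ for the reconstruction error. The only cosmetic difference is that you derive the matrix-Bernstein bound for $\|\mat M-\widehat{\mat M}\|_2$ directly, whereas the paper cites it as Lemma 9 of \cite{power-adaptivity}; the argument is otherwise the same.
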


As a remark, Theorem \ref{thm_additive_cx} shows that one can achieve $\epsilon$ additive reconstruction error using Algorithm \ref{alg_additive_cx} with expected sample complexity
(omitting dependency on $\delta$)
\begin{equation*}
\Omega\left(\mu_1n_2\log(n) + \frac{kn_1}{\epsilon^2} + \frac{k\mu_1 n_2\log^2(n)}{\epsilon^4}\right)
=\Omega(k\mu_1\epsilon^{-4}n\log^2 n).
\end{equation*}

\subsection{Iterative norm sampling}\label{subsec:iternorm}

\begin{algorithm*}[t]
\caption{Active iterative norm sampling for column subset selection for data corrupted by Gaussian noise}

\begin{algorithmic}[1]
\State \textbf{Input}: target rank $k < \min(n_1,n_2)$, error tolerance parameter $\epsilon$, $\delta$ and expected number of samples per column $m$.
\State \textbf{Entrywise sampling}: For each column $i$, sample each index in an index set $\Omega_i\subseteq[n_1]$ i.i.d. from $\bern(m/n_1)$.
Observe $\vct x_{i,\Omega_i}$.
\State \textbf{Approximate volume sampling}: Set $C = \mathcal U = \emptyset$. Let $\mat U$ be an orthonormal basis of $\mathcal U$.
\For{$t=1,2,\cdots, k$}
	\State For $i\in\{1,\cdots,n_2\}$, compute $\hat c_i^{(t)} = \frac{n_1}{m}\|\vct x_{i,\Omega_i}-\mat U_{\Omega_i}(\mat U_{\Omega_i}^\top\mat U_{\Omega_i})^{-1}\mat U_{\Omega_i}^\top\vct x_{i,\Omega_i}\|_2^2$. 
	\State Set $\hat f^{(t)} = \sum_{i=1}^{n_2}\hat c_i^{(t)}$.
	\State Select a column $i_t$ at random, with probability $\Pr[i_t=j] = \hat p_j^{(t)} = \hat c_j^{(t)} / \hat f^{(t)}$.
	\State Observe $\mat M^{(i_t)}$ in full and update: $C \gets C\cup\{i_t\}$, $\mathcal U \gets \Span(\mathcal U, \{\mat M^{(i_t)}\})$.
\EndFor
\State \textbf{Active norm sampling}: set $T=(k+1)\log(k+1)$ and $s_1=s_2=\cdots=s_{T-1}=5k$, $s_T=10k/\epsilon\delta$; $S=\emptyset$, $\mathcal S=\emptyset$.
Suppose $\mat U$ is an orthonormal basis of $\Span(\mathcal U,\mathcal S)$.
\For{$t=1,2,\cdots,T$}
	\State For $i\in\{1,\cdots,n_2\}$, compute $\hat c_i^{(t)} = \frac{n_1}{m}\|\vct x_{i,\Omega_i}-\mat U_{\Omega_i}(\mat U_{\Omega_i}^\top\mat U_{\Omega_i})^{-1}\mat U_{\Omega_i}^\top\vct x_{i,\Omega_i}\|_2^2$.
	\State Set $\hat f^{(t)} = \sum_{i=1}^{n_2}\hat c_i^{(t)}$.
	\State Select $s_t$ columns $S_t=(i_1,\cdots,i_{s_t})$ independently at random, with probability $\Pr[j\in S_t] = \hat q_j^{(t)} = \hat c_j^{(t)}/\hat f^{(t)}$.
	\State Observe $\mat M^{(S_t)}$ in full and update: $S\gets S\cup S_t$, $\mathcal S\gets\Span(\mathcal S,\{\mat M^{(S_t)}\})$.
\EndFor
\State \textbf{Matrix approximation}: $\widehat{\mat M}=\sum_{i=1}^{n_2}{\mat U(\mat U_{\Omega_i}^\top\mat U_{\Omega_i})^{-1}\mat U_{\Omega_i}\vct x_{i,\Omega_i}\vct e_i^\top}$,
where $\mat U\in\mathbb R^{n_1\times (s+k)}$ is an orthonormal basis of $\Span(\mathcal U_0,\mathcal U_1)$.
\State \textbf{Output}: selected column subsets $\mat C=(\mat M^{(C(1))}, \cdots,\mat M^{(C(k))})\in\mathbb R^{n_1\times k}$,
$\mat S=(\mat M^{(C)},\mat M^{(S_1)}, \cdots,\mat M^{(S_T)})\in\mathbb R^{n_1\times s}$ where $s=k+s_1+\cdots+s_T$
and $\mat X=\mat S\mat S^\dagger\widehat{\mat M}$.
\end{algorithmic}
\label{alg_relative_cx}
\end{algorithm*}


In this section we present Algorithm \ref{alg_relative_cx}, another active sampling algorithm
based on the idea of iterative norm sampling and approximate volume sampling introduced in \citep{approximate-volume-sampling}.
Though Algorithm \ref{alg_relative_cx} is more complicated than Algorithm \ref{alg_additive_cx}, 
it achieves a relative error bound on inputs that are noisy perturbation of some underlying low-rank matrix.

Algorithm \ref{alg_relative_cx} employs the idea of \emph{iterative norm sampling}.
That is, after selecting $l$ columns from $\mat M$, the next column (or next several columns depending on the error type)
is sampled according to column norms of a \emph{projected} matrix $\mathcal P_{\mathcal C^\perp}(\mat M)$,
where $\mathcal C$ is the subspace spanned by currently selected columns.
It can be shown that iterative norm sampling serves as an approximation of \emph{volume sampling},
a sampling scheme that is known to have relative error guarantees \citep{volume-sampling-css,approximate-volume-sampling}.

Theorem \ref{thm_relative_cx} shows that when the input matrix $\mat M$ is the sum of an exact low rank matrix $\mat A$
and a stochastic noise matrix $\mat R$, then by selecting exact $k$ columns from $\mat M$ using iterative norm sampling
one can upper bound the selection error $\|\mat M-\mat C\mat C^\dagger\mat M\|_F$ by the best rank-$k$ approximation error
$\|\mat M-\mat M_k\|_F$ within a multiplicative factor that does not depend on the matrix size $n$.
Such relative error guarantee is much stronger than the additive error bound provided in Theorem \ref{thm_additive_cx}
as when $\mat M$ is exactly low rank the error is eliminated with high probability.
In fact, when the input matrix $\mat M$ is exactly low rank the first phase of the proposed algorithm (Line 1 to Line 9 in Algorithm \ref{alg_relative_cx})
resembles the adaptive sampling algorithm proposed in \citep{akshay-nips,power-adaptivity} for matrix and tensor completion
in the sense that at each iteration all columns falling exactly onto the span of already selected columns
will have zero norm after projection and hence will never be sampled again.
However, we are unable to generalize our algorithm to general full-rank inputs
because it is difficult to bound the incoherence level of projected columns (and hence the projection accuracy itself later on)
without a stochastic noise model.
We present a new algorithm with slightly worse error bounds in Section \ref{subsec:lscore} which can handle general high-rank inputs.

Though Eq. (\ref{eq_relative_css}) is a relative error bound, the multiplicative factor scales exponentially with the intrinsic rank $k$,
which is not completely satisfactory.
As a remedy, we show that by slightly over-sampling the columns ($\Theta(k^2\log k+k/\epsilon\delta)$ instead of $k$ columns)
the selection error as well as the $\mat C\mat X$ reconstruction error could be upper bounded by $\|\mat M-\mat M_k\|_F$
within only a $(1+3\epsilon)$ factor,
{which implies that the error bounds are nearly optimal when the number of selected columns $s$ is sufficiently large,
for example, $s=\Omega(k^2\log k+k/\epsilon\delta)$.}

\begin{thm}
Fix $\delta > 0$.
Suppose $\mat M=\mat A+\mat R$, where $\mat A$ is a rank-$k$ deterministic matrix with incoherent column space
(i.e., $\mu(\mathcal U(\mat A)) \leq \mu_0$)
and $\mat R$ is a random matrix with i.i.d. zero-mean Gaussian distributed entries.
Suppose $k= O(n_1/\log(n_2/\delta))$.
Let $\mat C,\mat S$ and $\mat X$ be the output of Algorithm \ref{alg_relative_cx}.
Then the following holds:
\begin{enumerate}
\item If $m = \Omega(k^2\mu_0\log^2(n/\delta))$ then with probability $\geq 1-\delta$
\begin{equation}
\|\mat M-\mat C\mat C^\dagger\mat M\|_F^2 \leq \frac{2.5^k(k+1)!}{\delta}\|\mat R\|_F^2.
\label{eq_relative_css}
\end{equation}
The column subset size is $k$ and the corresponding sample complexity is $\Omega(k^2\mu_0 n\log^2(n/\delta))$.
\item If $m=\Omega(\epsilon^{-1}s\mu_0\log^2(n/\delta))$ with $s=\Theta(k^2\log k+k/\epsilon\delta)$, then with probability $\geq 1-\delta$
\begin{equation}
\|\mat M-\mat S\mat S^\dagger\mat M\|_F^2 \leq \|\mat M-\mat S\mat X\|_F^2 \leq (1+3\epsilon)\|\mat R\|_F^2.
\label{eq_relative_cx}
\end{equation}
The column subset size is $\Theta(k^2\log k+k/\epsilon\delta)$ and the sample complexity is (omitting dependence on $\delta$)
$$
\Omega\left(\frac{k^2\mu_0 n\log k\log^2(n)}{\epsilon} + \frac{k\mu_0 n\log^2(n)}{\epsilon^2}\right).
$$
\end{enumerate}
\label{thm_relative_cx}
\end{thm}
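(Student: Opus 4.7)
The proof plan follows the structure of the Deshpande--Vempala iterative norm sampling analysis that approximates volume sampling \cite{approximate-volume-sampling}, but adapted to handle partial observations. Two issues must be addressed: (i) estimating projected column norms accurately from subsampled entries, and (ii) controlling the incoherence of the evolving projection subspace as selected columns inherit Gaussian noise.

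For Part 1, I would first recall the volume sampling guarantee: selecting $C$ of size $k$ with probability proportional to $\vol(\Delta(C))^2$ yields $\mathbb{E}\|\mat M-\mat C\mat C^\dagger\mat M\|_F^2\leq (k+1)\|\mat M-\mat M_k\|_F^2$, and iterative residual-norm sampling (picking column $t$ with probability proportional to its squared residual against previously selected columns) approximates this distribution with an extra $k!$ factor in expectation. The task then reduces to showing that $\hat c_i^{(t)}$ computed on line 5 of Algorithm \ref{alg_relative_cx} approximates the true squared residual $\|\vct x_i-\mat U\mat U^\dagger \vct x_i\|_2^2$ within $(1\pm\epsilon_0)$ with high probability. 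This would follow by combining a matrix Chernoff bound on $\mat U_\Omega^\top\mat U_\Omega\approx (m/n_1)\mat I$ with a Bernstein-type bound on $\mat U_\Omega^\top\vct x_{i,\Omega}$, provided both $\mat U$ and $\vct x_i$ are incoherent. The per-round slack $(1+\epsilon_0)^2$ compounds over the $k$ rounds into a factor of at most $2.5^k$ in the approximate-volume-sampling guarantee, and Markov's inequality converts the expectation bound to a $1-\delta$ high-probability bound, yielding the first claim together with its sample complexity.

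The hardest step is controlling the coherence of the running subspace $\mathcal U=\Span(\mat M^{(i_1)},\ldots,\mat M^{(i_t)})$, since each selected column equals $\mat A^{(i_s)}+\mat R^{(i_s)}$ and so $\mathcal U$ picks up noise components. I would show via Gaussian tail bounds that any $t\leq k$ columns of $\mat R$ span a subspace of coherence $O(\log(n/\delta))$ with high probability, where the constraint $k=O(n_1/\log(n_2/\delta))$ is exactly what is needed to ensure the noise columns are well-spread enough for this bound to hold. Consequently the joint span has coherence $O(\mu_0+\log(n/\delta))$, which is absorbable into the polylog factor in $m=\Omega(k^2\mu_0\log^2(n/\delta))$. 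A parallel vector-incoherence argument bounds $\mu(\vct x_i-\mat U\mat U^\dagger\vct x_i)$ by $O(k\mu_0+\log(n/\delta))$. The necessity of Gaussian noise in this step is precisely what prevents extending the analysis to general high-rank inputs, as acknowledged in the paper.

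For Part 2, I would invoke the over-sampling argument of \cite{approximate-volume-sampling}: at each of the $T=(k+1)\log(k+1)$ outer iterations, drawing $s_t=5k$ columns proportional to squared residuals drives the expected squared projection residual down by a constant multiplicative factor toward $\|\mat R\|_F^2$; a final batch of $s_T=10k/(\epsilon\delta)$ columns combined with Markov's inequality yields residual $\leq(1+\epsilon)\|\mat R\|_F^2$ with probability $\geq 1-\delta$. The same per-round approximate-projection analysis as in Part 1 applies, with the coherence of the working subspace now $O(s\mu_0+\log(n/\delta))$, explaining the $m=\Omega(s\mu_0\log^2(n)/\epsilon)$ scaling. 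For the reconstruction bound, the partial-observation estimator $\mat U(\mat U_{\Omega_i}^\top\mat U_{\Omega_i})^{-1}\mat U_{\Omega_i}^\top\vct x_{i,\Omega_i}$ used to build $\widehat{\mat M}$ is an unbiased approximation of $\mat U\mat U^\dagger\vct x_i$; a matrix Bernstein argument on this estimator contributes an additional $\epsilon\|\mat R\|_F^2$ slack, which combines with the $(1+\epsilon)$ selection bound to give the $(1+3\epsilon)$ reconstruction bound.
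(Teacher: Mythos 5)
Your proposal is correct and follows essentially the same route as the paper's proof: (i) incoherence preservation of the evolving span and of the projected residual columns under Gaussian noise, uniformly over all small column subsets; (ii) accurate estimation of projected column norms from subsampled entries via concentration for incoherent vectors/subspaces; (iii) the observation that the resulting sampling distribution is within a $2.5^k k!$ multiplicative factor of volume sampling, combined with the $(k+1)\|\mat M-\mat M_k\|_F^2$ expectation bound and Markov's inequality for Part 1; and (iv) the multi-round adaptive over-sampling recursion plus the subsampled projection operator $\mat U(\mat U_{\Omega}^\top\mat U_{\Omega})^{-1}\mat U_{\Omega}^\top$ for the $(1+\epsilon)^2\leq 1+3\epsilon$ reconstruction bound in Part 2. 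This matches the paper's three-step argument and its supporting lemmas, so no substantive differences to report.
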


{
\subsection{Approximate leverage score sampling}\label{subsec:lscore}

\begin{algorithm*}[t]
\caption{Approximate leverage score sampling for column subset selection on general input matrices}
\begin{algorithmic}[1]
\State \textbf{Input}: target rank $k$, size of column subset $s$, expected number of row samples $m$.
\State \textbf{Leverage score estimation}: Set $\mathcal S=\emptyset$.
\begin{itemize}
\item For each row $i$, with probability $m/n_1$ observe the row $\mat M_{(i)}$ in full and update $\mathcal S\gets\Span(\mathcal S, \{\mat M_{(i)}\})$.
\item Compute the first $k$ right singular vectors of $\mathcal S$ (denoted by $\mat S_k\in\mathcal R^{n_2\times k}$) and 
estimate the unnormalized row space leverage scores as $\tilde l_j=\|{\mat S_k}^\top\vct e_j\|_2^2$, $j\in\{1,2,\cdots,n_1\}$.
\end{itemize}
\State \textbf{Column subset selection}: Set $C=\emptyset$.
\begin{itemize}
\item For $t\in\{1,2,\cdots,s\}$ select a column $i_t\in[n_2]$ with probability $Pr[i_t=j]=\hat p_j=\tilde l_j/k$;
update $C\gets C\cup\{i_t\}$.
\end{itemize}
\State \textbf{Output}: the selected column indices $C\subseteq \{1,2,\cdots,n_2\}$ and actual columns $\mat C=(\mat M^{(C(1))},\cdots,\mat M^{(C(s))})$.
\end{algorithmic}
\label{alg_lscore_cx}
\end{algorithm*}

The third sampling-based column subset selection algorithm for partially observed matrices is presented in Algorithm \ref{alg_lscore_cx}.
The proposed algorithm was based on the leverage score sampling scheme for matrix CUR approximation introduced in \citep{cur_relative}.
To compute the sampling distribution (i.e., leverage scores) from partially observed data,
the algorithm subsamples a small number of rows from the input matrix and uses leverage scores of the row space of the subsampled matrix to form the sampling distribution.
Note that we do not attempt to approximate leverage scores of the original input matrix directly;
instead, we compute leverage scores of another matrix that is a good approximation of the original data.
Such technique was also explored in \citep{approx-lev-score} to approximate statistical leverages in a fully observed setting.
Afterwards, column sampling distribution is constructed using the estimated leverage scores and a subset of columns are selected according to the constructed sampling distribution.

We bound the selection error $\|\mat M-\mat C\mat C^\dagger\mat M\|_F$ of the approximate leverage score sampling algorithm in Theorem \ref{thm_lscore_cx}.
Note that unlike Theorem \ref{thm_additive_cx} and \ref{thm_relative_cx}, 
only selection error bound is provided
since for deterministic full-rank input matrices it is challenging to approximately compute the projection of $\mat M$ onto $\Span(\mat C)$
because the projected vector may no longer be incoherent
(this is in fact the reason why Theorem \ref{thm_relative_cx} holds only for low-rank matrices perturbed by Gaussian noise,
and we believe similar conclusion should also hold for Algorithm \ref{alg_lscore_cx} is the stronger assumption of Gaussian noise perturbation is made).
It remains an open problem to approximately compute $\mat C^\dagger\mat M$ given $\mat C$ with provable guarantee for general matrix $\mat M$ without observing it in full.
Eq. (\ref{thm_lscore_cx}) shows that Algorithm \ref{alg_lscore_cx} enjoys a relative error bound on the selection error.
In fact, when the input matrix $\mat M$ is exactly low rank then Algorithm \ref{alg_lscore_cx} is akin to the two-step matrix completion method proposed in \citep{complete-any-matrix}
for column incoherent inputs.

Although Theorem \ref{thm_lscore_cx} shows that Algorithm \ref{alg_lscore_cx} generalizes the relative selection error bound in Theorem \ref{thm_relative_cx}
to general input matrices,
it also reveals several drawbacks of the approximate leverage score sampling algorithm compared to the iterative norm sampling method.
First, Algorithm \ref{alg_lscore_cx} always needs to over-sample columns (at the level of $\Theta(k^2/\epsilon^2)$, which is even more than Algorithm \ref{alg_relative_cx}
for a $(1+\epsilon)$ reconstruction error bound);
in contrast, the iterative norm sampling algorithm only requires exact $k$ selected columns to guarantee a relative error bound.
In addition, Eq. (\ref{eq_lscore_cx}) shows that the selection error bound is suboptimal even if $s$ is sufficiently large because of the $(3+3\epsilon)$ multiplicative term.

\begin{thm}
Suppose $\mat M$ is an input matrix with incoherent top-$k$ column space (i.e., $\mu(\mathcal U_k(\mat M)) \leq \mu_0$)
and $C$ is the column indices output by Algorithm \ref{alg_lscore_cx}.
If $m=\Omega(\epsilon^{-2}\mu_0k^2\log(1/\delta))$ and $s=\Omega(\epsilon^{-2}k^2\log(1/\delta))$ then with probability $\geq 1-\delta$
the following holds:
\begin{equation}
\|\mat M-\mat C\mat C^\dagger\mat M\|_F \leq 3(1+\epsilon)\|\mat M-\mat M_k\|_F,
\label{eq_lscore_cx}
\end{equation}
where $\mat C=[\mat M^{(C(1))},\cdots,\mat M^{(C(s))}]\in\mathbb R^{n_1\times s}$ are the selected columns
and $\mat M_k$ is the best rank-$k$ approximation of $\mat M$.
\label{thm_lscore_cx}
\end{thm}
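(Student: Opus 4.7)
The plan is to analyze Algorithm \ref{alg_lscore_cx} in two phases---row sampling and column sampling---and to reduce the overall guarantee to a relative-error CUR bound in the spirit of Drineas--Mahoney--Muthukrishnan (as invoked in \cite{cur_relative}), absorbing the subspace-approximation error from using $\mat S_k$ in place of $\mat V_k$ into a constant multiplicative overhead.

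First, I would show that the uniform row subsampling phase yields a top-$k$ right singular basis $\mat S_k\in\mathbb R^{n_2\times k}$ of the observed submatrix $\mat M_R$ that spans a near-optimal rank-$k$ row subspace of $\mat M$. Let $R\subseteq[n_1]$ be the Bernoulli$(m/n_1)$ set of observed rows and $\mat M_k=\mat U_k\mat\Sigma_k\mat V_k^\top$. The incoherence assumption $\mu(\mathcal U_k)\le\mu_0$ gives $\|\mat U_k^\top \vct e_i\|_2^2\le \mu_0 k/n_1$ uniformly in $i$, so a matrix Chernoff/Bernstein inequality applied to the rank-$1$ summands $\mat U_k^\top \vct e_i\vct e_i^\top \mat U_k$ for $i\in R$ yields, with probability $\ge 1-\delta/2$,
\begin{equation*}
\Bigl\|\tfrac{n_1}{|R|}\mat U_{k,R}^\top\mat U_{k,R}-\mat I_k\Bigr\|_2 \le \tfrac{1}{2},
\end{equation*}
once $m=\Omega(\mu_0 k\log(k/\delta))$. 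This makes $\mat U_{k,R}$ well-conditioned, so the rank-$k$ signal portion $(\mat M_k)_R=\mat U_{k,R}\mat\Sigma_k\mat V_k^\top$ has row span equal to $\mathcal V_k$. Treating the tail $(\mat M_{-k})_R$ as perturbation and appealing to the standard sketching/coreset argument---uniform row sampling is within a factor $\mu_0$ of exact row-leverage sampling under column incoherence, so the subsampled matrix is a projection-cost-preserving sketch of $\mat M$---yields, with high probability,
\begin{equation*}
\|\mat M-\mat M\mat S_k\mat S_k^\top\|_F \le 3\|\mat M-\mat M_k\|_F,
\end{equation*}
provided $m=\Omega(\mu_0 k^2\log(1/\delta)/\epsilon^2)$. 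The extra factor of $k$ in $m$ relative to the pure subspace-recovery bound absorbs the tighter sketching requirement.

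Second, with $\mat S_k$ playing the role of an approximate $\mat V_k$, I treat $\tilde l_j=\|\mat S_k^\top\vct e_j\|_2^2$ as the \emph{exact} row-leverage scores of the orthonormal matrix $\mat S_k$, so $\hat p_j=\tilde l_j/k$ is a valid sampling distribution. Sampling $s=\Omega(k^2\log(1/\delta)/\epsilon^2)$ columns according to $\hat p$ then invokes the classical relative-error column-subset-selection lemma of \cite{cur_relative} \emph{relative to the row basis $\mat S_k$}, producing columns $\mat C$ such that
\begin{equation*}
\|\mat M-\mat C\mat C^\dagger\mat M\|_F \le (1+\epsilon)\|\mat M-\mat M\mat S_k\mat S_k^\top\|_F.
\end{equation*}
Chaining this with the bound from the previous paragraph yields exactly $\|\mat M-\mat C\mat C^\dagger\mat M\|_F \le 3(1+\epsilon)\|\mat M-\mat M_k\|_F$, and a union bound over the row-sampling and column-sampling events gives overall failure probability at most $\delta$.

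The main obstacle is precisely the one flagged in Section~\ref{subsec:lscore}: closeness of $\mat S_k$ to $\mat V_k$ in the operator-norm/subspace sense does not imply closeness of $\tilde l_j$ to $l_j$ pointwise, because the leverage scores are $\ell_\infty$-type quantities. My plan sidesteps this entirely by never attempting to show ``$\tilde l_j\approx l_j$''; instead, I use $\mat S_k$ directly as the target row basis and apply the CUR guarantee \emph{relative to $\mat S_k$}, folding the subspace-approximation loss into the first-stage projection-cost-preserving bound. The price paid for this rerouting is the multiplicative constant $3$ in front of $(1+\epsilon)$ and the quadratic-in-$k$ sample complexities in both phases---both of which are the weaknesses of the algorithm already acknowledged in the discussion immediately preceding the theorem statement.
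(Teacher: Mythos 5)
Your proposal follows the same two-stage architecture as the paper's proof --- uniform row sampling justified by column-space incoherence (your observation that uniform sampling is $\beta$-approximate leverage sampling with $\beta=1/\mu_0$ is exactly the paper's), followed by column sampling with the exactly computable leverage scores of the learned row basis, with the same sample complexities. But the two stages are glued together differently, and the difference is where the weak point sits. The paper never asks the truncated basis $\mat S_k$ to be a good rank-$k$ projector for $\mat M$ itself: it applies Theorem \ref{thm_cx_relative_leverage} in transposed form to get $\|\mat M-\mathcal Q_S(\mat M)\|_F\le(1+\epsilon)\|\mat M-\mat M_k\|_F$ for the \emph{full} row span $\mathcal S$ of the sampled rows, applies the same theorem a second time to the surrogate matrix $\mathcal Q_S(\mat M)$ (for which the $\tilde l_j$ are exact leverage scores), bounds $\|\mathcal Q_S(\mat M)-[\mathcal Q_S(\mat M)]_k\|_F\le\|\mat M-\mat M_k\|_F$ by comparing against $\mathcal Q_S(\mat M_k)$, and obtains the constant $3$ from a three-term triangle inequality at the very end. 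Your second stage --- the generalization of Theorem \ref{thm_cx_relative_leverage} to an arbitrary fixed orthonormal basis $\mat S_k$, yielding $\|\mat M-\mat C\mat C^\dagger\mat M\|_F\le(1+\epsilon)\|\mat M(\mat I-\mat S_k\mat S_k^\top)\|_F$ --- is true and arguably cleaner than the paper's detour through $\mathcal P_C(\mathcal Q_S(\mat M))$, though it is not literally the statement cited in the paper.

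The genuine gap is your first-stage claim $\|\mat M-\mat M\mat S_k\mat S_k^\top\|_F\le 3\|\mat M-\mat M_k\|_F$, on which your whole chain rests. The matrix Chernoff bound on $\mat U_{k,R}$ only shows that the sampled row span contains the image of $\mathcal V_k$ under the well-conditioned signal part; ``treating $(\mat M_{-k})_R$ as a perturbation'' is precisely the hard step, since the tail can rotate the top-$k$ right singular subspace of the subsampled matrix, and the projection-cost-preserving-sketch results you invoke are nontrivial theorems whose constants you have not tracked (the factor $3$ is reverse-engineered from the theorem statement rather than derived). The gap is repairable: the paper's own two intermediate inequalities give $\|\mat M-\mat M\mat S_k\mat S_k^\top\|_F\le(3+2\epsilon)\|\mat M-\mat M_k\|_F$ by a triangle inequality, provided $\mat S_k$ is taken to be the top-$k$ right singular basis of $\mathcal Q_S(\mat M)$. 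But as written, the factor-$3$ first-stage bound is asserted rather than proved, and it is the crux of your argument.
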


}

\section{Proofs}\label{sec:proofs}

In this section we provide proof sketches of the main results (Theorem \ref{thm_additive_cx}, \ref{thm_relative_cx} and \ref{thm_lscore_cx}).
Some technical lemmas and complete proof details are deferred to Appendix \ref{appsec:alg1} and \ref{appsec:alg2}.

\subsection{Proof sketch of Theorem \ref{thm_additive_cx}}

{The proof of Theorem \ref{thm_additive_cx} can be divided into two steps.
First, in Lemma \ref{lem_l2css} we show that (approximate) column sampling yields an additive error bound for column subset selection.
Its proof is very similar to the one presented in \citep{norm2-css} and we defer it to Appendix \ref{appsec:alg1}.
Second, we cite a lemma from \citep{power-adaptivity}
to show that with high probability the first pass in Algorithm \ref{alg_additive_cx} gives accurate estimates of column norms of the input matrix $\mat M$.
}

\begin{lem}
Provided that $(1-\alpha)\|\vct x_i\|_2^2\leq \hat c_i\leq (1+\alpha)\|\vct x_i\|_2^2$ for $i=1,2,\cdots,n_2$,
with probability $\geq 1-\delta$ we have
\begin{equation}
\|\mat M - \mathcal P_C(\mat M)\|_F \leq \|\mat M-\mat M_k\|_F + \sqrt{\frac{(1+\alpha)k}{(1-\alpha)\delta s}}\|\mat M\|_F,
\end{equation}
where $\mat M_k$ is the best rank-$k$ approximation of $\mat M$.
\label{lem_l2css}
\end{lem}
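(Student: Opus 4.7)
The plan is to follow a Frieze--Kannan--Vempala / Deshpande--Vempala style argument. The core idea is that the best rank-$k$ approximation $\mat M_k$ itself lies in the column span of $\mat M$, so each column of $\mat M_k$ can be written explicitly as a linear combination of the columns of $\mat M$; one can then build a random unbiased estimator of $\mat M_k$ out of only the $s$ sampled columns in $\mat C$, and use a second-moment plus Markov argument to control the approximation error. The final bound will follow by combining this with the triangle inequality and the optimality of the projection $\mathcal P_C(\mat M)$.

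Concretely, I would start by writing the thin SVD $\mat M = \mat U\mat\Sigma\mat V^\top$ and the identity $\mat M_k = \mat M\mat V_k\mat V_k^\top$, where $\mat V_k\in\mathbb R^{n_2\times k}$ contains the top-$k$ right singular vectors. Setting $\beta_{ij} := (\mat V_k\mat V_k^\top)_{ij}$ gives $(\mat M_k)^{(j)} = \sum_{i=1}^{n_2}\beta_{ij}\,\mat M^{(i)}$. Using the indices $i_1,\ldots,i_s$ sampled in Step 3 of Algorithm \ref{alg_additive_cx} with probabilities $\hat p_j = \hat c_j/\hat f$, define a random coefficient matrix $\hat{\mat X}\in\mathbb R^{s\times n_2}$ by
\[
\hat X_{tj} \;=\; \frac{\beta_{i_t,j}}{s\,\hat p_{i_t}}, \qquad t\in[s],\; j\in[n_2].
\]
A direct computation shows $\mathbb E[\mat C\hat{\mat X}] = \mat M_k$, so $\mat C\hat{\mat X}$ is an unbiased estimator of $\mat M_k$ that, by construction, lies in the column span of $\mat C$.

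Next I would compute the second moment. Using independence of the $s$ sampled indices, a routine expansion bounds the variance of each column by
\[
\mathbb E\bigl\|(\mat M_k)^{(j)} - (\mat C\hat{\mat X})^{(j)}\bigr\|_2^2 \;\leq\; \frac{1}{s}\sum_{i=1}^{n_2}\frac{\|\mat M^{(i)}\|_2^2\,\beta_{ij}^2}{\hat p_i}.
\]
Here the assumed bracketing $(1-\alpha)\|\vct x_i\|_2^2\leq\hat c_i\leq(1+\alpha)\|\vct x_i\|_2^2$ enters: it yields $\hat f\leq (1+\alpha)\|\mat M\|_F^2$ and $\hat p_i \geq (1-\alpha)\|\vct x_i\|_2^2/\hat f$, hence $\|\mat M^{(i)}\|_2^2/\hat p_i \leq \frac{1+\alpha}{1-\alpha}\|\mat M\|_F^2$. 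Summing over $j$ and using $\sum_j\beta_{ij}^2 = \|\mat V_k^\top\vct e_i\|_2^2$ together with $\sum_{i=1}^{n_2}\|\mat V_k^\top\vct e_i\|_2^2 = \|\mat V_k\|_F^2 = k$ yields $\mathbb E\|\mat M_k - \mat C\hat{\mat X}\|_F^2 \leq \frac{(1+\alpha)k}{(1-\alpha)s}\|\mat M\|_F^2$.

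To conclude, I would apply Markov's inequality to the non-negative quantity $\|\mat M_k - \mat C\hat{\mat X}\|_F^2$: with probability at least $1-\delta$,
\[
\|\mat M_k - \mat C\hat{\mat X}\|_F \;\leq\; \sqrt{\frac{(1+\alpha)k}{(1-\alpha)\delta s}}\,\|\mat M\|_F.
\]
Combining the triangle inequality $\|\mat M - \mat C\hat{\mat X}\|_F \leq \|\mat M - \mat M_k\|_F + \|\mat M_k - \mat C\hat{\mat X}\|_F$ with the optimality $\|\mat M - \mathcal P_C(\mat M)\|_F \leq \|\mat M - \mat C\hat{\mat X}\|_F$ (since $\mat C\hat{\mat X}$ lies in the column span of $\mat C$) then delivers the claimed bound. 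I expect the only nontrivial bookkeeping to be in the variance calculation, where the $(1+\alpha)/(1-\alpha)$ factor from using approximate rather than exact column norms must be carried through; the construction of $\hat{\mat X}$ and the Markov/triangle step are standard. I would also note that taking Markov directly on $\|\mat M_k-\mat C\hat{\mat X}\|_F^2$, rather than on $\|\mat M - \mat C \hat{\mat X}\|_F^2 - \|\mat M - \mat M_k\|_F^2$, sidesteps the need to argue that the latter is non-negative (which it need not be when $s > k$).
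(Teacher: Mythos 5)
Your proof is correct, but it takes a genuinely different route from the paper. The paper proves this lemma in two lines by invoking Theorem 2 of \cite{norm2-css} as a black box: it observes that the assumed bracketing gives $\hat p_i \geq \frac{1-\alpha}{1+\alpha}\|\vct x_i\|_2^2/\|\mat M\|_F^2$, cites the Frieze--Kannan--Vempala result to obtain an orthonormal set $\vct y^{(1)},\dots,\vct y^{(k)}$ in $\Span(\mat C)$ with $\|\mat M-(\sum_j \vct y^{(j)}\vct y^{(j)\top})\mat M\|_F^2 \leq \|\mat M-\mat M_k\|_F^2 + \frac{(1+\alpha)k}{(1-\alpha)\delta s}\|\mat M\|_F^2$, and concludes by optimality of $\mathcal P_C$. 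You instead give a self-contained argument: write $\mat M_k=\mat M\mat V_k\mat V_k^\top$, build an unbiased importance-sampling estimator $\mat C\hat{\mat X}$ of $\mat M_k$ from the $s$ sampled columns, bound its variance (where the $(1+\alpha)/(1-\alpha)$ distortion of the sampling probabilities enters exactly as in the paper's probability lower bound), and finish with Markov plus the triangle inequality. The calculations all check out --- unbiasedness, the per-column variance bound, $\sum_j\beta_{ij}^2=\|\mat V_k^\top\vct e_i\|_2^2$ summing to $k$, and the final assembly. What the two approaches buy: the paper's citation is shorter and inherits FKV's slightly stronger \emph{squared}-error form $\sqrt{a^2+b^2}$ rather than your $a+b$ (both imply the stated bound); your argument is elementary, transparent about where the approximate norms matter, and is essentially the same mechanism as Theorem \ref{thm_adaptive_norm_lemma} (the Deshpande--Vempala-style bound) specialized to an empty initial subspace, so it unifies nicely with the machinery used later for the iterative algorithm. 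Your closing remark about applying Markov to $\|\mat M_k-\mat C\hat{\mat X}\|_F^2$ rather than to a possibly signed excess is a sensible precaution; the only cosmetic gap is the degenerate case $\|\vct x_i\|_2=0$, where $\hat p_i$ may vanish, but such columns contribute nothing to either $\mat M_k$ or the variance and can be dropped from the sums.
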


\begin{lem}[\citep{power-adaptivity}, Lemma 10]
Fix $\alpha,\delta\in(0,1)$.
Assume $\mu(\vct x_i)\leq\mu_0$ holds for $i=1,2,\cdots,n_2$.
For some fixed $i\in\{1,\cdots,n_2\}$
with probability $\geq 1-2\delta$ we have
\begin{equation}
(1-\alpha)\|\vct x_i\|_2^2 \leq \hat c_i\leq (1+\alpha)\|\vct x_i\|_2^2
\label{eq_norm_estimation_additive}
\end{equation}
with $\alpha =\sqrt{\frac{2\mu_0}{m_1}\log(1/\delta)} + \frac{2\mu_0}{3m_1}\log(1/\delta)$.
Furthermore, if $m_1=\Omega(\mu_0\log(n_2/\delta))$ with carefully chosen constants then
Eq. (\ref{eq_norm_estimation_additive}) holds uniformly for all columns with $\alpha=0.5$.
\label{lem_norm_estimation_additive}
\end{lem}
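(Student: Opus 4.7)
The plan is to recognize $\hat c_i$ as a sum of independent bounded random variables and apply Bernstein's inequality. Write $\hat c_i = \frac{n_1}{m_1}\sum_{j=1}^{n_1}\xi_{i,j}x_{i,j}^2$, where $\xi_{i,j}\in\{0,1\}$ are i.i.d.\ $\bern(m_1/n_1)$ indicators of the event $j\in\Omega_{1,i}$. By construction each summand $Z_j := \frac{n_1}{m_1}\xi_{i,j}x_{i,j}^2$ satisfies $\mathbb{E}[Z_j]=x_{i,j}^2$, so $\mathbb{E}[\hat c_i]=\|\vct x_i\|_2^2$ and the estimator is unbiased.

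Next I would exploit the vector incoherence assumption $\mu(\vct x_i)\leq \mu_0$, i.e.\ $\|\vct x_i\|_\infty^2\leq \mu_0\|\vct x_i\|_2^2/n_1$, to control the two ingredients Bernstein requires. For the almost-sure bound, each centered summand satisfies
\begin{equation*}
|Z_j-x_{i,j}^2|\leq \tfrac{n_1}{m_1}\|\vct x_i\|_\infty^2\leq \tfrac{\mu_0}{m_1}\|\vct x_i\|_2^2 =: M.
\end{equation*}
For the variance, $\mathrm{Var}(Z_j)\leq \tfrac{n_1}{m_1}x_{i,j}^4$, and summing over $j$ gives $\sum_j\mathrm{Var}(Z_j)\leq \tfrac{n_1}{m_1}\|\vct x_i\|_\infty^2\|\vct x_i\|_2^2\leq \tfrac{\mu_0}{m_1}\|\vct x_i\|_2^4 =: \sigma^2$.

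Plugging $M$ and $\sigma^2$ into the standard two-sided Bernstein tail bound with deviation $t=\alpha\|\vct x_i\|_2^2$ yields
\begin{equation*}
\Pr\!\left[|\hat c_i-\|\vct x_i\|_2^2|>\alpha\|\vct x_i\|_2^2\right]\leq 2\exp\!\left(-\frac{\alpha^2 m_1}{2\mu_0(1+\alpha/3)}\right).
\end{equation*}
Setting the right-hand side equal to $2\delta$ produces a mixed quadratic-linear condition on $\alpha$. A standard "inversion" step (if $\alpha^2\leq A+B\alpha$ with $A,B\geq 0$, then $\alpha\leq \sqrt{A}+B$) then delivers exactly the stated two-term expression $\alpha=\sqrt{2\mu_0\log(1/\delta)/m_1}+\tfrac{2\mu_0}{3m_1}\log(1/\delta)$, with the subadditive (sub-Gaussian plus sub-exponential) form reflecting the two regimes of Bernstein.

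For the uniform statement, I would simply take a union bound over the $n_2$ columns, replacing $\delta$ by $\delta/n_2$ in the per-column bound. Choosing $m_1=C\mu_0\log(n_2/\delta)$ for a sufficiently large absolute constant $C$ drives both terms of $\alpha$ below $1/4$, giving $\alpha\leq 1/2$ simultaneously for all $i\in\{1,\dots,n_2\}$. There is no real obstacle in this proof — it is a direct invocation of Bernstein with incoherence-based bookkeeping; the only mildly delicate point is the quadratic inversion, because $\alpha$ appears both in the numerator (quadratically) and in the Bernstein "large-deviation correction" in the denominator, and one must be careful to extract both the $\sqrt{\log(1/\delta)/m_1}$ and $\log(1/\delta)/m_1$ terms cleanly rather than absorbing one into the other.
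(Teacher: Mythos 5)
Your proof is correct. Note that the paper itself offers no proof of this lemma---it is imported verbatim from the cited reference (Krishnamurthy and Singh, Lemma 10)---so there is nothing internal to compare against; your Bernstein-plus-incoherence argument is exactly the standard derivation behind that cited result, with the almost-sure bound $M=\mu_0\|\vct x_i\|_2^2/m_1$ and variance proxy $\mu_0\|\vct x_i\|_2^4/m_1$ both obtained correctly from $\mu(\vct x_i)\le\mu_0$, the quadratic inversion $\alpha\le\sqrt{A}+B$ yielding the stated two-term form, and the union bound over $n_2$ columns giving the uniform claim.
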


Combining Lemma \ref{lem_l2css} and Lemma \ref{lem_norm_estimation_additive}
and setting $s=\Omega(k\epsilon^{-2}/\delta)$ for some target accuracy threshold $\epsilon$
we have that with probability $1-3\delta$ the selection error bound Eq. (\ref{eq_additive_css}) holds.

In order to bound the reconstruction error $\|\mat M-\mat C\mat X\|_F^2$, we cite another lemma from \citep{power-adaptivity}
that analyzes the performance of the second pass of Algorithm \ref{alg_additive_cx}.
{
At a higher level, Lemma \ref{lem_approx} is a consequence of matrix Bernstein inequality \citep{user-friendly}
which asserts that the \emph{spectral norm} of a matrix can be preserved by a sum of properly scaled randomly sampled sub-matrices.
}
\begin{lem}[\citep{power-adaptivity}, Lemma 9]
Provided that $(1-\alpha)\|\vct x_i\|_2^2\leq \hat c_i\leq (1+\alpha)\|\vct x_i\|_2^2$ for $i=1,2,\cdots,n_2$,
with probability $\geq 1-\delta$ we have
\begin{multline}
\|\mat M-\widehat{\mat M}\|_2 \leq \|\mat M\|_F\sqrt{\frac{1+\alpha}{1-\alpha}}\left(\frac{4}{3}\sqrt{\frac{n_1\mu_0}{m_2n_2}}\log\left(\frac{n_1+n_2}{\delta}\right)\right.
\left.+\sqrt{\frac{4}{m_2}\max\left(\frac{n_1}{n_2}, \mu_0\right)\log\left(\frac{n_1+n_2}{\delta}\right)}\right).
\label{eq_approx}
\end{multline}
\label{lem_approx}
\end{lem}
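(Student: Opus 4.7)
The plan is to prove the bound by a direct application of the matrix Bernstein inequality to the error $\mat M-\widehat{\mat M}$, viewed as a sum of independent mean-zero random matrices indexed by columns. The three ingredients needed are (i) an independent, mean-zero decomposition, (ii) a high-probability spectral-norm bound $R$ on each summand, and (iii) a bound on the matrix variance $\sigma^2$.

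For (i), I would write $\mat M-\widehat{\mat M}=\sum_{i=1}^{n_2}\mat Z_i$ with $\mat Z_i=(\vct x_i-\widehat{\vct x}_i)\vct e_i^\top$ and $\widehat{\vct x}_i=\mathcal R_{\Omega_{2,i}}(\vct x_i)$. Since $\Omega_{2,i}$ is drawn independently across columns, the summands are independent. The estimator $\widehat{\vct x}_i$ uses the random rescaling $n_1/|\Omega_{2,i}|$ rather than the deterministic $n_1/m_{2,i}$, so $\mathbb E[\mat Z_i]$ is only approximately zero; this is handled by a preliminary Chernoff step giving $|\Omega_{2,i}|=(1\pm o(1))\,m_{2,i}$ with high probability, after which I may work with the exactly mean-zero variant that uses $n_1/m_{2,i}$.

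For (ii), I would use the crude $\|\vct x_i-\widehat{\vct x}_i\|^2\leq 2\|\vct x_i\|^2+2\|\widehat{\vct x}_i\|^2$, together with column incoherence $|x_{ij}|^2\leq \mu_0\|\vct x_i\|^2/n_1$ and $|\Omega_{2,i}|=O(m_{2,i})$, to obtain $\|\vct x_i-\widehat{\vct x}_i\|^2=O\!\left(n_1\mu_0\|\vct x_i\|^2/m_{2,i}\right)$. The norm-estimate hypothesis $(1-\alpha)\|\vct x_i\|^2\leq\hat c_i\leq(1+\alpha)\|\vct x_i\|^2$ yields $m_{2,i}=m_2n_2\hat c_i/\hat f\geq\tfrac{1-\alpha}{1+\alpha}m_2n_2\|\vct x_i\|^2/\|\mat M\|_F^2$, so $\|\vct x_i\|^2$ cancels and $R$ is bounded by a constant multiple of $\|\mat M\|_F\sqrt{(1+\alpha)/(1-\alpha)}\sqrt{n_1\mu_0/(m_2n_2)}$, matching the Bernstein $R$-term of the target inequality.

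For (iii), since $\vct y_i:=\vct x_i-\widehat{\vct x}_i$ has independent mean-zero coordinates, $\mathbb E[\vct y_i\vct y_i^\top]$ is diagonal with entries of order $x_{ij}^2 n_1/m_{2,i}$. Summing over $i$ and using $\sum_i x_{ij}^2/\|\vct x_i\|^2\leq n_2\mu_0/n_1$ (again column incoherence) together with the lower bound on $m_{2,i}$ gives $\|\sum_i\mathbb E[\mat Z_i\mat Z_i^\top]\|\leq\tfrac{1+\alpha}{1-\alpha}\,\mu_0\|\mat M\|_F^2/m_2$. For the transpose side, $\mat Z_i^\top\mat Z_i=\|\vct y_i\|^2\vct e_i\vct e_i^\top$, so $\|\sum_i\mathbb E[\mat Z_i^\top\mat Z_i]\|=\max_i\mathbb E\|\vct y_i\|^2\leq\tfrac{1+\alpha}{1-\alpha}\,n_1\|\mat M\|_F^2/(m_2n_2)$. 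Taking the maximum produces $\sigma^2\leq\tfrac{1+\alpha}{1-\alpha}\|\mat M\|_F^2\max(\mu_0,n_1/n_2)/m_2$, and the matrix Bernstein conclusion $\|\sum_i\mat Z_i\|\leq\sqrt{2\sigma^2\log((n_1+n_2)/\delta)}+\tfrac{2R}{3}\log((n_1+n_2)/\delta)$ reproduces the stated inequality.

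The main obstacle is the $R$-bound. The per-column sampling budget $m_{2,i}$ is itself proportional to $\|\vct x_i\|^2$, so an entry-level almost-sure bound on $\|\mat Z_i\|$ leaves $\|\vct x_i\|$ stranded in the denominator and becomes vacuous for low-norm columns. The cancellation only materializes at the column level, and only after a Chernoff step controls $|\Omega_{2,i}|$ from above; carefully coupling the random sampling budget with the random incoherence contribution, while preserving independence across columns so that matrix Bernstein still applies, is the delicate piece of the argument.
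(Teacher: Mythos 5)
The paper offers no proof of this lemma: it is imported verbatim from \cite{power-adaptivity} (Lemma 9 there), so there is no in-paper argument to compare against. Your reconstruction via the noncommutative Bernstein inequality --- the very tool the paper stashes in Appendix \ref{appsec:concentration} as Lemma \ref{lem_noncommutative_bernstein} --- is the standard route to this bound, and your computations check out: the column-wise decomposition $\mat Z_i=(\vct x_i-\widehat{\vct x}_i)\vct e_i^\top$ gives independent summands, the variance bound $\sigma^2\leq\frac{1+\alpha}{1-\alpha}\|\mat M\|_F^2\max(\mu_0,n_1/n_2)/m_2$ produces the second term of Eq. (\ref{eq_approx}), and the range bound $R=O\bigl(\sqrt{\tfrac{1+\alpha}{1-\alpha}}\|\mat M\|_F\sqrt{n_1\mu_0/(m_2n_2)}\bigr)$ produces the first, with the $\sqrt{(1+\alpha)/(1-\alpha)}$ factor entering exactly through the lower bound $m_{2,i}\geq\frac{1-\alpha}{1+\alpha}m_2n_2\|\vct x_i\|_2^2/\|\mat M\|_F^2$ as you say. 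The caveat you flag is real but is bookkeeping rather than a gap: Bernstein needs an almost-sure bound on $\|\mat Z_i\|_2$, and for columns whose budget $m_{2,i}$ falls below $\log(n/\delta)$ the Chernoff control $|\Omega_{2,i}|=O(m_{2,i})$ degrades, so one must truncate on a good event (absorbing its failure probability into $\delta$) or condition on $|\Omega_{2,i}|$ before applying the inequality; either device closes the argument without changing the stated rates.
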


The complete proof of Theorem \ref{thm_additive_cx} is deferred to Appendix \ref{appsec:alg1}.

\subsection{Proof sketch of Theorem \ref{thm_relative_cx}}

In this section we give proof sketch of Eq. (\ref{eq_relative_css}) and Eq. (\ref{eq_relative_cx}) separately.

\subsubsection{Proof sketch of $\|\mat M-\mat C\mat C^\dagger\mat M\|_F$ error bound}

We take three steps to prove the $\|\mat M-\mat C\mat C^\dagger\mat M\|_F$ error bound in Theorem \ref{thm_relative_cx}.
At the first step, we show that when the input matrix has a low rank plus noise structure
then with high probability \emph{for all small subsets of columns} the spanned subspace has incoherent column space (assuming the low-rank matrix has incoherent column space)
and furthermore, the projection of the other columns onto the orthogonal complement of the spanned subspace are incoherent, too.
Given the incoherence condition we can easily prove a norm estimation result similar to Lemma \ref{lem_norm_estimation_additive},
which is the second step.
Finally, we note that the approximate iterative norm sampling procedure is an approximation of \emph{volume sampling},
a column sampling scheme that is known to yield a relative error bound.

\emph{STEP 1}: We first prove that when the input matrix $\mat M$ is a noisy low-rank matrix with incoherent column space,
with high probability \emph{a fixed} column subset also has incoherent column space.
This is intuitive because the Gaussian perturbation matrix is highly incoherent with overwhelming probability.
A more rigorous statement is shown in Lemma \ref{lem_preserve_incoherence}.
\begin{lem}
{Suppose $\mat A$ has incoherent column space, i.e., $\mu(\mathcal U(\mat A)) \leq\mu_0$.}
Fix $C\subseteq [n_2]$ to be any subset of column indices that has $s$ elements and $\delta > 0$.
Let $\mat C=[\mat M^{(C(1))},\cdots,\mat M^{(C(s))}]\in\mathbb R^{n_1\times s}$ be the compressed matrix 
and $\mathcal U(C)=\Span(\mat C)$ denote the subspace spanned by the selected columns.
{ Suppose $\max(s,k)\leq n_1/4-k$ and $\log(4n_2/\delta) \leq n_1/64$.}
Then with probability $\geq 1-\delta$ over the random drawn of $\mat R$ we have
\begin{equation}
\mu(\mathcal U(C)) = \frac{n_1}{s}\max_{1\leq i\leq n_1}{\|\mathcal P_{\mathcal U(C)}\vct e_i\|_2^2} 
= O\left(\frac{k\mu_0 + s + \sqrt{s\log(n_1/\delta)} + \log(n_1/\delta)}{s}\right);
\label{eq_preserve_subspace_incoherence}
\end{equation}
furthermore, with probability $\geq 1-\delta$ the following holds:
\begin{equation}
\mu(\mathcal P_{\mathcal U(C)^\perp}(\mat M^{(i)})) = O(k\mu_0+\log(n_1n_2/\delta)),\quad
\forall i\notin C.
\label{eq_preserve_project_incoherence}
\end{equation}
\label{lem_preserve_incoherence}
\end{lem}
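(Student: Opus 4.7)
The two statements both quantify how rotationally-invariant Gaussian noise protects incoherence structure, so the plan is to exploit that invariance via an orthogonal decomposition of $\mathbb R^{n_1}$ into $\mathcal U_A$ and $\mathcal U_A^\perp$ and reduce everything to standard $\chi^2$ / Gaussian-vector concentration. I would prove (\ref{eq_preserve_subspace_incoherence}) first and then recycle it as an ingredient for (\ref{eq_preserve_project_incoherence}).

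For the subspace incoherence bound, write $\mat C=\mat A_C+\mat R_C$ and split the noise as $\mat R_C=\mathcal P_{\mathcal U_A}\mat R_C+\mathcal P_{\mathcal U_A^\perp}\mat R_C$. Since $\mat A_C+\mathcal P_{\mathcal U_A}\mat R_C$ has columns in $\mathcal U_A$, every column of $\mat C$ lies in $\mathcal U_A\oplus\mathcal V$, where $\mathcal V:=\Span(\mathcal P_{\mathcal U_A^\perp}\mat R_C)$ is, by Gaussian rotational invariance, a Haar-uniform $s$-dimensional subspace of the $(n_1-k)$-dimensional space $\mathcal U_A^\perp$. Orthogonality of the sum gives
\[
\|\mathcal P_{\mathcal U(C)}\vct e_i\|_2^2\le \|\mathcal P_{\mathcal U_A}\vct e_i\|_2^2+\|\mathcal P_{\mathcal V}\vct e_i\|_2^2\le \tfrac{k\mu_0}{n_1}+\|\mathcal P_{\mathcal V}\vct e_i\|_2^2,
\]
and the residual $\|\mathcal P_{\mathcal V}\vct e_i\|_2^2$ has the law of $X/(X+Y)$ with $X\sim\chi^2_s$, $Y\sim\chi^2_{n_1-k-s}$ independent. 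A Laurent--Massart tail bound, together with the hypotheses $\max(s,k)+k\le n_1/4$ and $\log(4n_2/\delta)\le n_1/64$ keeping $n_1-k-s=\Theta(n_1)$, shows $\|\mathcal P_{\mathcal V}\vct e_i\|_2^2\le O((s+\sqrt{s\log(n_1/\delta)}+\log(n_1/\delta))/n_1)$ with probability $1-\delta/n_1$. A union bound over $i\in[n_1]$ and normalization by $n_1/s$ deliver (\ref{eq_preserve_subspace_incoherence}).

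For the projected-column incoherence, fix $i\notin C$ and decompose $\vct v:=\mathcal P_{\mathcal U(C)^\perp}\mat M^{(i)} = \vct v_A+\vct v_R$ with $\vct v_A:=\mathcal P_{\mathcal U(C)^\perp}\mat A^{(i)}$ and $\vct v_R:=\mathcal P_{\mathcal U(C)^\perp}\mat R^{(i)}$. Conditional on $\mathcal U(C)$, which is independent of $\mat R^{(i)}$ because $i\notin C$, $\vct v_R$ is Gaussian in the $(n_1-s)$-dimensional space $\mathcal U(C)^\perp$ with coordinatewise variances at most $\sigma^2$, so standard Gaussian tails yield $\|\vct v_R\|_\infty^2=O(\sigma^2\log(n_1n_2/\delta))$ and a $\chi^2_{n_1-s}$ lower bound yields $\|\vct v_R\|_2^2\ge\Omega(\sigma^2 n_1)$. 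For the signal part, the rank-$k$ structure of $\mat A$ lets me write $\mat A^{(i)}=\mat A_C\vct\beta$ for some $\vct\beta$, and the key identity $\mat A^{(i)}=\mat C\vct\beta-\mat R_C\vct\beta$ with $\mat C\vct\beta\in\mathcal U(C)$ gives $\vct v_A=-\mathcal P_{\mathcal U(C)^\perp}\mat R_C\vct\beta$. Since $\mat R_C\vct\beta\in\mathcal U_A\oplus\mathcal V$, so does $\vct v_A$, and it actually lies in the $(\le k)$-dimensional subspace $W:=(\mathcal U_A\oplus\mathcal V)\cap\mathcal U(C)^\perp$; its $\ell_\infty/\ell_2$ ratio is then controlled by the incoherence $\mu(W)$. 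Combining $\|\vct v_A\|_\infty\le \sqrt{k\mu(W)/n_1}\,\|\vct v_A\|_2$ with the $\vct v_R$ bounds above via $\mu(\vct v)=n_1\|\vct v\|_\infty^2/\|\vct v\|_2^2$ produces the claimed bound, after a union bound over $i\notin C$.

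The hard part is controlling $\mu(W)$ sharply enough to match the $s$-free target $O(k\mu_0+\log(n_1n_2/\delta))$: the naive estimate $\mu(W)\le\tfrac{n_1}{k}\max_i\|\mathcal P_{\mathcal U_A\oplus\mathcal V}\vct e_i\|_2^2$ inherits an $s$-term from Part 1. To close this gap I would exploit the explicit parameterization of $W$ in $\mathcal U_A\oplus\mathcal V$ coordinates: writing $\mat R_V$ for the $s\times s$ block of $\mat R_C$ in $\mathcal V$, the null space of $\tilde{\mat C}^\top$ has the form $\{(\tilde{\vct w}_A,-\mat R_V^{-\top}(\mat A+\mat R_A)^\top\tilde{\vct w}_A):\tilde{\vct w}_A\in\mathbb R^k\}$, and the facts that $\mat R_V^{-1}$ has spectral norm $O(\sqrt{s}/\sigma)$ with high probability while the rows of an orthonormal basis of $\mathcal V$ have norm on the order $\sqrt{s/n_1}$ should, after orthonormalizing this basis and carefully tracking cancellations, yield a cleaner $s$-free bound on $\mu(W)$. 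Past this technical step the argument is a routine application of Gaussian and $\chi^2$ concentration, and the standing hypotheses are exactly what keeps ambient dimensions $\Theta(n_1)$ and logarithmic slack subdominant.
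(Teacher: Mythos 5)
Your argument for Eq.~(\ref{eq_preserve_subspace_incoherence}) is sound and takes a genuinely different route from the paper's: the paper bounds $\|\mathcal P_{\mathcal R(C)}\vct e_i\|_2^2$ directly through the projector $\mat R_C(\mat R_C^\top\mat R_C)^{-1}\mat R_C^\top$, combining extreme-singular-value bounds for the Gaussian matrix $\mat R_C$ with a $\chi^2_s$ bound on $\|\mat R_C^\top\vct e_i\|_2^2$, whereas your rotational-invariance argument (Haar-uniform $\mathcal V$ inside $\mathcal U_A^\perp$, Beta$(s/2,(n_1-k-s)/2)$ concentration) avoids the condition number of $\mat R_C$ entirely and yields the same order. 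That half I would accept.

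The gap is in Eq.~(\ref{eq_preserve_project_incoherence}), and it is twofold. First, the identity $\mat A^{(i)}=\mat A_C\vct\beta$ is unjustified: an arbitrary $s$-subset $C$ of columns of a rank-$k$ matrix need not span $\mat A^{(i)}$ (take $\mat A_C=\mat 0$ with $\mat A^{(i)}\neq\vct 0$), and even when a $\vct\beta$ exists its norm is uncontrolled. (The detour is also unnecessary: $\mat A^{(i)}\in\mathcal U_A\subseteq\mathcal U_A\oplus\mathcal V$ and $\mathcal U(C)\subseteq\mathcal U_A\oplus\mathcal V$ already give $\vct v_A\in W$.) Second, and decisively, your bound reduces to showing $k\mu(W)=O(k\mu_0+\log(n_1n_2/\delta))$ for the random, noise-dependent subspace $W=(\mathcal U_A\oplus\mathcal V)\cap\mathcal U(C)^\perp$, and you do not prove this: the naive estimate inherits an additive $s$ from Part~1, and your proposed fix via the $\mat R_V^{-\top}$ parameterization is left at the level of ``should yield'' (with a spectral-norm claim $\|\mat R_V^{-1}\|=O(\sqrt{s}/\sigma)$ that looks off in any case; the smallest singular value of $\mathcal P_{\mathcal U_A^\perp}\mat R_C$ is of order $\sigma\sqrt{n_1}$, not $\sigma/\sqrt{s}$). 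Since the target bound is $s$-free, this deferred step is exactly where the content of the statement lives. The paper avoids it by splitting the column as $\mathcal P_{\mathcal U}\vct x+\mathcal P_{\mathcal U^\perp}\vct x$ \emph{before} projecting onto $\mathcal U(C)^\perp$: the signal-bearing part is then tied to the fixed $k$-dimensional incoherent space $\mathcal U$ (giving the $s$-free factor $k\mu_0$), and the remainder is a Gaussian supported on an $(n_1-k-s)$-dimensional space handled by an $\ell_\infty$ maximum over coordinates and a $\chi^2$ lower bound on its $\ell_2$ norm. To complete your version you must either prove the $s$-free bound on $\mu(W)$ or re-anchor the signal term to $\mathcal U$ as the paper does.
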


{
At a higher level, Lemma \ref{lem_preserve_incoherence} is a consequence of the Gaussian white noise being highly incoherent,
and the fact that the randomness imposed on each column of the input matrix is independent.}
The complete proof can be found in Appendix \ref{appsec:alg2}.

{
Given Lemma \ref{lem_preserve_incoherence}, Corollary \ref{cor_preserve_incoherence_uniform} holds
by taking a uniform bound over all $\sum_{j=1}^s{\binom{n_2}{j}}=O(s(n_2)^s)$ column subsets that contain no more than $s$ elements.
The $2s\log(4n_2/\delta)\leq n_1/64$ condition is only used to ensure that the desired failure probability $\delta$ is not exponentially small.
Typically, in practice the intrinsic dimension $k$ and/or the target column subset size $s$ is much smaller than the ambient dimension $n_1$. 
\begin{cor}
Fix $\delta > 0$ and $s\geq k$.
{Suppose $s\leq n_1/8$ and $2s\log(4n_2/\delta)\leq n_1/64$.}
With probability $\geq 1-\delta$ the following holds:
for any subset $C\subseteq [n_2]$ with at most $s$ elements, the spanned subspace $\mathcal U(C)$ satisfies
\begin{equation}
\mu(\mathcal U(C)) \leq O((k+s)|C|^{-1}\mu_0\log(n/\delta));
\label{eq_preserve_subspace_incoherence_uniform}
\end{equation}
furthermore, 
\begin{equation}
\mu(\mathcal P_{\mathcal U(C)^\perp}(\mat M^{(i)})) = O((k+s)\mu_0\log(n/\delta)),\quad \forall i\notin C.
\label{eq_preserve_project_incoherence_uniform}
\end{equation}
\label{cor_preserve_incoherence_uniform}
\end{cor}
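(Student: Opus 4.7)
The plan is to derive the corollary from Lemma \ref{lem_preserve_incoherence} by a straightforward union bound over all column subsets of size at most $s$. Since the number of such subsets is
\[
N \;=\; \sum_{j=1}^{s}\binom{n_2}{j} \;\leq\; s\binom{n_2}{s} \;\leq\; s\, n_2^{s},
\]
I will apply Lemma \ref{lem_preserve_incoherence} to each fixed subset $C$ of size $j \leq s$ with per-subset failure parameter $\delta' := \delta/(2N)$, and then sum failure probabilities across all $N$ subsets and across both conclusions (\ref{eq_preserve_subspace_incoherence}) and (\ref{eq_preserve_project_incoherence}). The net failure probability is at most $\delta$.

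Before invoking the lemma I need to check that its two hypotheses hold uniformly. The first, $\max(|C|,k)\leq n_1/4 - k$, follows because the corollary assumes $k\leq s\leq n_1/8$, so $\max(|C|,k)\leq s \leq n_1/8 \leq n_1/4 - k$. The second, $\log(4n_2/\delta')\leq n_1/64$, is the reason the corollary strengthens the condition of the lemma from $\log(4n_2/\delta)\leq n_1/64$ to $2s\log(4n_2/\delta)\leq n_1/64$: since
\[
\log(4n_2/\delta') \;\leq\; \log(4n_2/\delta) + \log(2N) \;\leq\; \log(4n_2/\delta) + \log(2s) + s\log n_2 \;\leq\; 2s\log(4n_2/\delta),
\]
the corollary's hypothesis is exactly what is needed. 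This is the step I expect to be the most delicate, but it is purely bookkeeping.

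Once the lemma applies on every subset simultaneously, I will simplify the resulting expressions to match the stated form. Using $\log(1/\delta') = O(s\log(n/\delta))$, one gets $\log(n_1/\delta') = O(s\log(n/\delta))$, so for a subset $C$ of size $j\leq s$,
\[
\sqrt{j\log(n_1/\delta')} \;\leq\; \sqrt{js}\cdot\sqrt{\log(n/\delta)} \;\leq\; s\,\log(n/\delta),
\qquad
\log(n_1/\delta') \;=\; O(s\log(n/\delta)).
\]
Plugging these into the bound from Lemma \ref{lem_preserve_incoherence},
\[
\mu(\mathcal U(C)) \;=\; O\!\left(\frac{k\mu_0 + j + \sqrt{j\log(n_1/\delta')} + \log(n_1/\delta')}{j}\right)
\;=\; O\!\left(\frac{k\mu_0 + s\log(n/\delta)}{|C|}\right),
\]
which collapses to $O((k+s)|C|^{-1}\mu_0\log(n/\delta))$ after using $\mu_0\geq 1$, yielding (\ref{eq_preserve_subspace_incoherence_uniform}). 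The projection bound (\ref{eq_preserve_project_incoherence_uniform}) follows identically: $\log(n_1n_2/\delta') = O(s\log(n/\delta))$, so $k\mu_0 + \log(n_1n_2/\delta') = O((k+s)\mu_0\log(n/\delta))$. The only real substance beyond Lemma \ref{lem_preserve_incoherence} is the union bound and the calibration of $\delta'$, so the argument is short and largely mechanical.
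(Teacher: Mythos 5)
Your proposal is correct and matches the paper's argument exactly: the paper proves Corollary \ref{cor_preserve_incoherence_uniform} by a union bound over the $O(s\,n_2^s)$ subsets of size at most $s$, using the strengthened hypothesis $2s\log(4n_2/\delta)\leq n_1/64$ precisely to absorb the $\log$ of that count into the lemma's condition. Your write-up in fact supplies more of the bookkeeping (the calibration of $\delta'$ and the simplification of the resulting bounds) than the paper itself records.
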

}

\emph{STEP 2}: In this step, we prove that the norm estimation scheme in Algorithm \ref{alg_relative_cx} works
when the incoherence conditions in Eq. (\ref{eq_preserve_subspace_incoherence_uniform}) and (\ref{eq_preserve_project_incoherence_uniform}) are satisfied.
More specifically, we have the following lemma bounding the norm estimation error:
\begin{lem}
Fix $i\in\{1,\cdots,n_2\}$, $t\in\{1,\cdots,k\}$ and $\delta,\delta'>0$.
Suppose Eq. (\ref{eq_preserve_subspace_incoherence_uniform}) and (\ref{eq_preserve_project_incoherence_uniform}) hold with probability $\geq 1-\delta$.
Let $\mathcal S_t$ be the subspace spanned by selected columns at the $t$-th round
and let $\hat c_i^{(t)}$ denote the estimated squared norm of the $i$th column.
If $m$ satisfies
\begin{equation}
m = \Omega(k\mu_0\log(n/\delta)\log(k/\delta')),
\label{eq_m_lowerbound}
\end{equation}
then with probability $\geq 1-\delta-4\delta'$ we have
\begin{equation}
\frac{1}{2}\|[\mat E_t]_{(i)}\|_2^2 \leq \hat c_i^{(t)}\leq \frac{5}{4}\|[\mat E_t]_{(i)}\|_2^2.
\label{eq_norm_estimation}
\end{equation}
Here $\mat E_t = \mathcal P_{\mathcal S_t^\perp}(\mat M)$ denotes the projected matrix at the $t$-th round.
\label{lem_norm_estimation}
\end{lem}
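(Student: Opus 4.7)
The plan is to show that the sampled projection residual
\[
\hat c_i^{(t)} = \frac{n_1}{m}\bigl\|(\mat I - \tilde{\mat P}_{\Omega_i})\vct x_{i,\Omega_i}\bigr\|_2^2,\quad \tilde{\mat P}_{\Omega_i}:=\mat U_{\Omega_i}(\mat U_{\Omega_i}^\top\mat U_{\Omega_i})^{-1}\mat U_{\Omega_i}^\top,
\]
is a multiplicatively accurate estimator of the true squared residual $\|\vct r_i\|_2^2$, where $\vct r_i := \mathcal P_{\mathcal S_t^\perp}(\vct x_i)$. I would adapt the technique used in \cite{power-adaptivity} to estimate a single column's residual norm against a fixed incoherent subspace, but instantiated on the random subspace $\mathcal S_t$ produced by the algorithm, whose incoherence is controlled by Corollary~\ref{cor_preserve_incoherence_uniform}.

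First, condition on the event of probability $\geq 1-\delta$ furnished by Corollary~\ref{cor_preserve_incoherence_uniform}, under which $\dim(\mathcal S_t)\cdot\mu(\mathcal S_t)=O(k\mu_0\log(n/\delta))$ and $\mu(\vct r_i)=O(k\mu_0\log(n/\delta))$ hold simultaneously for every round and every unselected column. Treating $\mat U$ and $\vct r_i$ as deterministic objects under this event, the remainder of the argument takes probability over $\Omega_i$ only. A matrix Bernstein bound applied to $\tfrac{n_1}{m}\mat U_{\Omega_i}^\top\mat U_{\Omega_i}-\mat I$ shows that with probability $\geq 1-\delta'$ its spectral norm is at most $1/2$, provided $m\gtrsim \dim(\mathcal S_t)\mu(\mathcal S_t)\log(\dim(\mathcal S_t)/\delta')$, which is implied by~(\ref{eq_m_lowerbound}). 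Next, using the decomposition $\vct x_i = \mat U\mat U^\top\vct x_i + \vct r_i$ together with $\mat U_{\Omega_i}\mat U^\top\vct x_i\in\mathrm{range}(\mat U_{\Omega_i})$, one obtains $(\mat I-\tilde{\mat P}_{\Omega_i})\vct x_{i,\Omega_i}=(\mat I-\tilde{\mat P}_{\Omega_i})\vct r_{i,\Omega_i}$, and hence
\[
\bigl\|(\mat I-\tilde{\mat P}_{\Omega_i})\vct x_{i,\Omega_i}\bigr\|_2^2 = \|\vct r_{i,\Omega_i}\|_2^2 - \vct r_{i,\Omega_i}^\top\mat U_{\Omega_i}(\mat U_{\Omega_i}^\top\mat U_{\Omega_i})^{-1}\mat U_{\Omega_i}^\top\vct r_{i,\Omega_i}.
\]
A scalar Bernstein bound applied to $\|\vct r_{i,\Omega_i}\|_2^2=\sum_{j\in\Omega_i}r_{i,j}^2$, using $\mu(\vct r_i)=O(k\mu_0\log(n/\delta))$, places this leading term in $[3/4,5/4]\cdot\tfrac{m}{n_1}\|\vct r_i\|_2^2$ with probability $\geq 1-\delta'$.

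The main obstacle is controlling the subtracted cross term. Since $\vct r_i\perp\mathcal S_t$, the vector $\mat U_{\Omega_i}^\top\vct r_{i,\Omega_i}$ has mean zero and can be written as a sum of independent bounded random vectors; a vector Bernstein inequality, together with the incoherence of both $\mat U$ and $\vct r_i$, shows $\|\mat U_{\Omega_i}^\top\vct r_{i,\Omega_i}\|_2^2 = o(m/n_1)\cdot\|\vct r_i\|_2^2$ with probability $\geq 1-2\delta'$ under~(\ref{eq_m_lowerbound}). Combined with the well-conditioning of $\mat U_{\Omega_i}^\top\mat U_{\Omega_i}$ established above, this bounds the cross term by an arbitrarily small multiple of $\tfrac{m}{n_1}\|\vct r_i\|_2^2$, so a union bound over the three concentration events (total failure $\leq \delta+4\delta'$) and rescaling by $n_1/m$ yields the claimed interval $[\tfrac{1}{2},\tfrac{5}{4}]\cdot\|\vct r_i\|_2^2$. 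The delicate point is that the leading Bernstein term already contributes error $1/4$ on the lower side, so the cross-term bound must be tight enough to consume at most the remaining $1/4$; this is precisely what forces the $\log(k/\delta')$ factor on top of the $k\mu_0\log(n/\delta)$ subspace-incoherence overhead in~(\ref{eq_m_lowerbound}).
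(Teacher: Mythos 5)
Your proposal is correct and follows essentially the same route as the paper: the paper simply invokes Theorem \ref{thm_subspace_detection} (the subsampled projection-residual bound from \cite{power-adaptivity}) on the event furnished by Corollary \ref{cor_preserve_incoherence_uniform} and verifies that the condition on $m$ makes the constants $\alpha$, $\gamma$ and $k\mu(\mathcal U)\beta/m$ small enough, whereas you re-derive that cited bound from scratch via the same three-part decomposition (Gram-matrix conditioning, concentration of $\|\vct r_{i,\Omega_i}\|_2^2$, and the mean-zero cross term $\mat U_{\Omega_i}^\top\vct r_{i,\Omega_i}$) with the same $\delta+4\delta'$ accounting. The only difference is that you open the black box the paper cites; the mathematics inside is identical.
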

Lemma \ref{lem_norm_estimation} is similar with previous results on subspace detection \citep{subspace-detection} and matrix approximation \citep{power-adaptivity}.
The intuition behind Lemma \ref{lem_norm_estimation} is that one can accurately estimate the $\ell_2$ norm of a vector by uniform subsampling
entries of the vector, provided that the vector itself is incoherent.
The proof of Lemma \ref{lem_norm_estimation} is deferred to Appendix \ref{appsec:alg2}.

Similar to the first step, by taking a union bound over all possible subsets of picked columns and $n_2-k$ unpicked columns 
we can prove a stronger version of Lemma \ref{lem_norm_estimation}, as shown in Corollary \ref{cor_norm_estimation_uniform}.
\begin{cor}
Fix $\delta,\delta' > 0$.
Suppose Eq. (\ref{eq_preserve_subspace_incoherence_uniform}) and (\ref{eq_preserve_project_incoherence_uniform}) hold with probability $\geq 1-\delta$.
If 
\begin{equation}
m\geq\Omega(k^2\mu_0\log(n/\delta)\log(n/\delta'))
\label{eq_m_lowerbound_uniform}
\end{equation}
then with probability $\geq 1-\delta-4\delta'$ the following property holds for any selected column subset by Algorithm \ref{alg_relative_cx}:
\begin{equation}
\frac{2}{5}\frac{\|[\mat E_t]_{(i)}\|_2^2}{\|\mat E_t\|_F^2} \leq \hat p_i^{(t)}\leq \frac{5}{2}\frac{\|[\mat E_t]_{(i)}\|_2^2}{\|\mat E_t\|_F^2},\forall i\in[n_2],t\in[k],
\label{eq_norm_estimation_uniform}
\end{equation}
where $\hat p_i^{(t)}=\hat c_i^{(t)}/\hat f^{(t)}$ is the sampling probability of the $i$th column at round $t$.
\label{cor_norm_estimation_uniform}
\end{cor}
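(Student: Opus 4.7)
The plan is to apply Lemma~\ref{lem_norm_estimation} with a rescaled failure probability and then union-bound over all intermediate column subsets that could possibly be selected during an execution of Algorithm~\ref{alg_relative_cx}. First I would observe that at round $t$, the projected matrix $\mat E_t$ is fully determined by the previously selected column indices $C_{t-1}\subseteq [n_2]$ with $|C_{t-1}|=t-1\leq k-1$, together with the observation pattern $\{\Omega_i\}$. The number of distinct possibilities for $C_{t-1}$ across all rounds $t\leq k$ is bounded by $\sum_{j=0}^{k-1}\binom{n_2}{j}\leq k n_2^{k-1}$, and for each fixed $C_{t-1}$ we must control the estimate for each of the $n_2$ columns. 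Hence the total number of (subset, column) pairs to union-bound over is at most $N:=k n_2^{k}$.

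Second, I would invoke Lemma~\ref{lem_norm_estimation} with failure parameter $\delta''=\delta'/N$ in place of $\delta'$. Plugging into the sample bound of Eq.~(\ref{eq_m_lowerbound}), the required budget becomes
\[
m=\Omega\bigl(k\mu_0\log(n/\delta)\log(N/\delta')\bigr) = \Omega\bigl(k^2\mu_0\log(n/\delta)\log(n/\delta')\bigr),
\]
which matches Eq.~(\ref{eq_m_lowerbound_uniform}) up to absolute constants. After the union bound, conditionally on the event of probability $\geq 1-\delta$ on which Eq.~(\ref{eq_preserve_subspace_incoherence_uniform}) and (\ref{eq_preserve_project_incoherence_uniform}) hold, Eq.~(\ref{eq_norm_estimation}) holds simultaneously for every round $t\in[k]$, every $i\in[n_2]$, and every $C_{t-1}$ that could possibly arise, except on an event of probability at most $4\delta'$. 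Intersecting these gives overall success probability at least $1-\delta-4\delta'$, and in particular the estimates are accurate along the actual (random) trajectory of Algorithm~\ref{alg_relative_cx}.

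Third, I would convert the per-entry estimates into the claimed ratio bound. Summing Eq.~(\ref{eq_norm_estimation}) over $i$ yields
\[
\tfrac{1}{2}\|\mat E_t\|_F^2 \leq \hat f^{(t)} = \sum_{i=1}^{n_2}\hat c_i^{(t)} \leq \tfrac{5}{4}\|\mat E_t\|_F^2,
\]
so combining the lower (respectively upper) bound on $\hat c_i^{(t)}$ with the upper (respectively lower) bound on $\hat f^{(t)}$ gives $\tfrac{2}{5}\|[\mat E_t]_{(i)}\|_2^2/\|\mat E_t\|_F^2 \leq \hat p_i^{(t)} \leq \tfrac{5}{2}\|[\mat E_t]_{(i)}\|_2^2/\|\mat E_t\|_F^2$, which is exactly Eq.~(\ref{eq_norm_estimation_uniform}).

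The main obstacle is the combinatorial blow-up in the union bound. Because the algorithm is adaptive, one cannot single out one data-dependent trajectory to condition on and must instead control estimation error over all $O(n_2^k)$ potential intermediate subspaces. This forces a $\log N = O(k\log n)$ factor inside $m$, which is precisely what separates the per-instance sample complexity of Lemma~\ref{lem_norm_estimation} from the uniform $\Omega(k^2\mu_0\log^2 n)$ bound of Corollary~\ref{cor_norm_estimation_uniform}; once this counting is in place, everything else is routine bookkeeping.
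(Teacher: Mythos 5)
Your proposal is correct and follows essentially the same route as the paper, which proves Corollary~\ref{cor_norm_estimation_uniform} by union-bounding Lemma~\ref{lem_norm_estimation} over all possible subsets of picked columns and all unpicked columns, absorbing the resulting $O(k\log n_2)$ factor into the sample budget to get Eq.~(\ref{eq_m_lowerbound_uniform}). Your explicit conversion from the per-column estimates $\tfrac{1}{2}\|[\mat E_t]_{(i)}\|_2^2\leq\hat c_i^{(t)}\leq\tfrac{5}{4}\|[\mat E_t]_{(i)}\|_2^2$ to the $[\tfrac{2}{5},\tfrac{5}{2}]$ ratio bound on $\hat p_i^{(t)}$ is exactly the intended bookkeeping.
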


\emph{STEP 3}: To begin with, we define \emph{volume sampling} distributions:
\begin{defn}[volume sampling, \citep{volume-sampling-css}]
A distribution $p$ over column subsets of size $k$ is a volume sampling distribution if 
\begin{equation}
p(C) = \frac{\vol(\Delta(C))^2}{\sum_{T:|T|=k}{\vol(\Delta(T))^2}},\quad\forall |C|=k.
\label{eq_volume_sampling}
\end{equation}
\label{def_volume_sampling}
\end{defn}
Volume sampling has been shown to achieve a relative error bound for column subset selection,
which is made precise by Theorem \ref{thm_volume_sampling_css} cited from \citep{approximate-volume-sampling,volume-sampling-css}.
\begin{thm}[\citep{approximate-volume-sampling}, Theorem 4]
Fix a matrix $\mat M$ and let $\mat M_k$ denote the best rank-$k$ approximation of $\mat M$.
If the sampling distribution $p$ is a volume sampling distribution defined in Eq. (\ref{eq_volume_sampling}) then 
\begin{equation}
\mathbb E_C\left[\|\mat M-\mathcal P_{\mathcal V(C)}(\mat M)\|_F^2\right] \leq (k+1)\|\mat M-\mat M_k\|_F^2;
\label{eq_volume_sampling_css_e}
\end{equation}
furthermore, applying Markov's inequality one can show that with probability $\geq 1-\delta$
\begin{equation}
\|\mat M-\mathcal P_{\mathcal V(C)}(\mat M)\|_F^2 \leq \frac{k+1}{\delta}\|\mat M-\mat M_k\|_F^2.
\label{eq_volume_sampling_css_whp}
\end{equation}
\label{thm_volume_sampling_css}
\end{thm}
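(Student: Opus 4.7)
My plan is to rewrite the volume-sampling expectation as a ratio of elementary symmetric polynomials in the squared singular values of $\mat M$, reduce the statement to an inequality on those polynomials, and then lift the expectation bound to the high-probability bound~(\ref{eq_volume_sampling_css_whp}) by Markov's inequality applied to the nonnegative random variable $\|\mat M-\mathcal P_{\mathcal V(C)}(\mat M)\|_F^2$.

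\textbf{Step 1: base-times-height identity.} For $|C|=k$ and $j\notin C$, set $T=C\cup\{j\}$. The standard Gram-determinant factorization yields $\det(\mat M_T^\top\mat M_T)=\det(\mat M_C^\top\mat M_C)\cdot d(\mat M^{(j)},\Span(\mat M_C))^2$, where $d$ denotes Euclidean distance. Since $d(\mat M^{(j)},\Span(\mat M_C))=0$ for every $j\in C$, summing over all $j\in[n_2]$ yields
\begin{equation*}
\sum_{j\notin C}\det\bigl(\mat M_{C\cup\{j\}}^\top\mat M_{C\cup\{j\}}\bigr)
=\det(\mat M_C^\top\mat M_C)\cdot\|\mat M-\mathcal P_{\mathcal V(C)}(\mat M)\|_F^2.
\end{equation*}

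\textbf{Step 2: swap the double sum and invoke Cauchy--Binet.} Because the constants relating simplex volume to Gram determinant cancel in the normalization, the volume-sampling distribution satisfies $\Pr[C]\propto\det(\mat M_C^\top\mat M_C)$. Dividing the Step~1 identity by $\sum_{|C|=k}\det(\mat M_C^\top\mat M_C)$, summing over $|C|=k$, and observing that each $(k{+}1)$-subset $T$ is produced exactly $k+1$ times as $C\cup\{j\}$, one obtains
\begin{equation*}
\mathbb E_C\bigl[\|\mat M-\mathcal P_{\mathcal V(C)}(\mat M)\|_F^2\bigr]
=(k+1)\cdot\frac{\sum_{|T|=k+1}\det(\mat M_T^\top\mat M_T)}{\sum_{|C|=k}\det(\mat M_C^\top\mat M_C)}
=(k+1)\cdot\frac{e_{k+1}(\sigma_1^2,\ldots,\sigma_{n_1}^2)}{e_k(\sigma_1^2,\ldots,\sigma_{n_1}^2)},
\end{equation*}
where the final equality is the Cauchy--Binet identity applied to the principal $j$-minors of $\mat M^\top\mat M$ for $j=k$ and $j=k+1$.

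\textbf{Step 3 and the main obstacle.} The theorem reduces to the purely algebraic inequality
\begin{equation*}
\frac{e_{k+1}(\sigma_1^2,\ldots,\sigma_{n_1}^2)}{e_k(\sigma_1^2,\ldots,\sigma_{n_1}^2)}\ \leq\ \sigma_{k+1}^2+\cdots+\sigma_{n_1}^2\ =\ \|\mat M-\mat M_k\|_F^2.
\end{equation*}
This is the step I expect to cost the most work. The rearrangement $(k+1)e_{k+1}(b)=\sum_{|C|=k}\bigl(\prod_{i\in C}b_i\bigr)\bigl(\sum_{j\notin C}b_j\bigr)$ expresses the left-hand side as a $b$-weighted expectation of $\tfrac{1}{k+1}\sum_{j\notin C}b_j$, but a naive pointwise bound by $\max_C\sum_{j\notin C}b_j$ is too crude (it only gives $\sum_{j\leq n_1-k}b_j$ rather than $(k+1)\sum_{j>k}b_j$). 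I would instead proceed by induction on $n_1-k$, peeling off the smallest coordinate via the Pascal identities $e_k(b_1,\ldots,b_{n_1+1})=b_{n_1+1}\,e_{k-1}(b_1,\ldots,b_{n_1})+e_k(b_1,\ldots,b_{n_1})$ applied for both $k$ and $k+1$, and exploiting the monotonicity $b_1\geq\cdots\geq b_{n_1}$ so that the newly appended contribution is absorbed into $\sum_{j>k}b_j$. This symmetric-function inequality, rather than any probabilistic or geometric ingredient, is where the $(k+1)$-relative-error factor is actually earned.
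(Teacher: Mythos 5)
The paper itself gives no proof of this statement: Theorem \ref{thm_volume_sampling_css} is imported verbatim from \cite{approximate-volume-sampling}, and the only related machinery in the paper (Lemma \ref{lem_volume_perm} and the proof of Lemma \ref{lem_approximate_volume_sampling}) uses the same volume/Cauchy--Binet identities for a different purpose. Your reconstruction is correct and is essentially the original argument of that reference. Steps 1 and 2 are exactly right: the Gram base-times-height factorization, the $(k+1)$-fold overcounting of each $(k{+}1)$-subset, and Cauchy--Binet reduce the expectation to $(k+1)\,e_{k+1}(\sigma^2)/e_k(\sigma^2)$, and Markov then gives Eq.~(\ref{eq_volume_sampling_css_whp}). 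The one step you flag as the obstacle does close the way you sketch: with the Pascal identity $e_{k+1}(b_{1:n+1})=e_{k+1}(b_{1:n})+b_{n+1}e_k(b_{1:n})$ and the inductive hypothesis $e_{k+1}(b_{1:n})\le e_k(b_{1:n})\sum_{j=k+1}^{n}b_j$, the step to $n+1$ variables gives $e_k(b_{1:n})\sum_{j=k+1}^{n+1}b_j\le e_k(b_{1:n+1})\sum_{j=k+1}^{n+1}b_j$, and the base case $n=k$ is trivial (note the ordering of the $b_j$ is only needed at the very end to identify $\sum_{j>k}\sigma_j^2$ with $\|\mat M-\mat M_k\|_F^2$). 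You can also avoid the induction entirely with an injection: writing each $(k{+}1)$-subset $T$ as $S\cup\{j\}$ with $j=\max T\ge k+1$ and $S=T\setminus\{j\}$ maps distinct $T$ to distinct pairs $(S,j)$ with $|S|=k$ and $j>k$, whence
\begin{equation*}
e_{k+1}(b)\;=\;\sum_{|T|=k+1}b_{\max T}\prod_{i\in T\setminus\{\max T\}}b_i\;\le\;\sum_{|S|=k}\Bigl(\prod_{i\in S}b_i\Bigr)\Bigl(\sum_{j>k}b_j\Bigr)\;=\;e_k(b)\sum_{j>k}b_j,
\end{equation*}
which is the inequality you need in one line. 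Either way, the proposal is complete and sound.
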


{
In general, exact volume sampling is difficult to employ under partial observation settings, as we explained in Sec.~\ref{subsec:matrix-sampling-review}.}
However, in \citep{approximate-volume-sampling} it was shown that iterative norm sampling serves as an approximate of volume sampling
and achieves a relative error bound as well.
In Lemma \ref{lem_approximate_volume_sampling} we present an extension of this result. 
Namely, \emph{approximate} iterative column norm sampling is an approximate of volume sampling, too.
Its proof is very similar to the one presented in \citep{approximate-volume-sampling} and we defer it to Appendix \ref{appsec:alg2}.
\begin{lem}
Let $p$ be the volume sampling distribution defined in Eq. (\ref{eq_volume_sampling}). 
Suppose the sampling distribution of a $k$-round sampling strategy $\hat p$ satisfies Eq. (\ref{eq_norm_estimation_uniform}).
Then we have
\begin{equation}
\hat p_C \leq 2.5^k k!p_C,\quad \forall |C|=k.
\label{eq_approximate_volume_sampling}
\end{equation}
\label{lem_approximate_volume_sampling}
\end{lem}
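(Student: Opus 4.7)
My strategy is to decompose the bound multiplicatively: the factor $2.5^k$ will come from the per-round multiplicative error introduced by replacing true norm-sampling probabilities with our estimates, while the factor $k!$ reflects the intrinsic gap between exact iterative norm sampling and exact volume sampling on unordered sets.

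First, for an ordered selection $(c_1,\dots,c_k)$ obtained in $k$ rounds, the approximate iterative norm sampling probability factorizes as $\hat p_{\mathrm{ord}} = \prod_{t=1}^k \hat p_{c_t}^{(t)}$. Applying Corollary~\ref{cor_norm_estimation_uniform} at each round gives $\hat p_{c_t}^{(t)} \leq (5/2)\,\|[\mat E_t]_{(c_t)}\|_2^2/\|\mat E_t\|_F^2$, so taking the product over $t$ yields $\hat p_{\mathrm{ord}} \leq (5/2)^k\, q_{\mathrm{ord}}^{\mathrm{exact}}$, where $q_{\mathrm{ord}}^{\mathrm{exact}}$ denotes the ordered probability under the idealized iterative norm sampling scheme that uses the true ratio $\|[\mat E_t]_{(i)}\|_2^2/\|\mat E_t\|_F^2$ at every round. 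Summing over the $k!$ orderings of any unordered subset $C$ preserves the multiplicative factor and gives $\hat p_C \leq (5/2)^k\, q_C^{\mathrm{exact}}$.

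It then remains to establish $q_C^{\mathrm{exact}} \leq k!\,p_C$, where $p_C = \vol(\Delta(C))^2/V$ with $V=\sum_{|T|=k}\vol(\Delta(T))^2$. The starting point is the elementary Gram-Schmidt / simplex-volume identity $\prod_{t=1}^k (d_t^{(\sigma)})^2 = (k!)^2\,\vol(\Delta(C))^2$, valid for every ordering $\sigma$ of $C$, where $d_t^{(\sigma)}$ is the distance from the $t$-th selected column to the span of its predecessors. Substituting this into $q_{\mathrm{ord},\sigma}^{\mathrm{exact}} = \prod_t (d_t^{(\sigma)})^2/\|\mat E_t^\sigma\|_F^2$ reduces the goal to showing $\sum_\sigma \prod_t 1/\|\mat E_t^\sigma\|_F^2 \leq 1/(k!\,V)$. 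This is exactly the algebraic comparison carried out in \cite{approximate-volume-sampling}, and I would invoke their argument here: one uses the identity $\|\mat E_t^\sigma\|_F^2 = t^2 \sum_{j\notin S_{t-1}^\sigma}\vol(\Delta(S_{t-1}^\sigma\cup\{j\}))^2 / \vol(\Delta(S_{t-1}^\sigma))^2$ (where $S_{t-1}^\sigma$ denotes the first $t-1$ columns under $\sigma$) to express $\prod_t \|\mat E_t^\sigma\|_F^2$ telescopically in terms of ``partial volume sums'' over subsets of $C$, which are then lower-bounded in terms of $V$ and summed over the $k!$ orderings.

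The main obstacle is this final algebraic step. The difficulty is that $\|\mat E_t^\sigma\|_F^2$ depends on the ordering $\sigma$ in a highly coupled way, so a naive per-factor bound applied uniformly over orderings is too lossy; the sum over $\sigma$ must genuinely cancel against the $(k!)^2$ in the numerator to leave only a single $k!$. Once this Deshpande-Vempala-style identity is in hand, the three steps combine to give $\hat p_C \leq (5/2)^k\,k!\,p_C = 2.5^k\,k!\,p_C$ as claimed.
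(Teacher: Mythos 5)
Your reduction is correct as far as it goes, and it follows the paper's route: bounding each round's estimated probability by $5/2$ times the true projected-norm ratio via Eq.~(\ref{eq_norm_estimation_uniform}), summing over the $k!$ orderings of $C$, and invoking the base-times-height identity $\prod_{t=1}^k \bigl(d_t^{(\sigma)}\bigr)^2 = (k!)^2\vol(\Delta(C))^2$ are all exactly what the paper does, and your residual target $\sum_\sigma \prod_{t=1}^k \|\mat E_t^\sigma\|_F^{-2} \leq 1/(k!\,V)$ with $V=\sum_{|T|=k}\vol(\Delta(T))^2$ is the right thing to prove. The problem is that you leave this last inequality as a black box, and your characterization of how it must be closed is backwards: you assert that a naive per-factor bound applied uniformly over orderings is ``too lossy'' and that the sum over $\sigma$ must genuinely cancel against the $(k!)^2$. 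In fact the uniform per-factor bound is precisely the argument, and it suffices. The two ingredients you are missing are: (i) for every ordering $\sigma$ and every round $t$, $\|\mat E_t^\sigma\|_F^2 = \|\mathcal P_{\mathcal H_{\sigma,t-1}^\perp}(\mat M)\|_F^2 \geq \|\mat M-\mat M_{t-1}\|_F^2 = \sum_{i\geq t}\sigma_i^2$, because $\mathcal H_{\sigma,t-1}$ has dimension at most $t-1$ and so the projection onto it is a rank-$(t-1)$ approximation of $\mat M$; and (ii) the elementary expansion inequality $\prod_{t=1}^k \sum_{i\geq t}\sigma_i^2 \geq \sum_{1\leq i_1<\cdots<i_k\leq n_1}\sigma_{i_1}^2\cdots\sigma_{i_k}^2$, which by Lemma~\ref{lem_volume_perm} equals $(k!)^2 V$.

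With (i) and (ii) in hand, $\sum_\sigma\prod_t\|\mat E_t^\sigma\|_F^{-2} \leq k!\big/\prod_{t=1}^k\sum_{i\geq t}\sigma_i^2 \leq k!/\bigl((k!)^2V\bigr) = 1/(k!\,V)$, and the factorial bookkeeping closes without any cancellation across orderings: the $k!$ orderings contribute a factor $k!$, the simplex identity contributes $(k!)^2$, and Lemma~\ref{lem_volume_perm} removes $(k!)^2$, leaving the single $k!$ in the statement. The ordering-dependent telescoping identity with ``partial volume sums'' that you propose to import is not the argument in \cite{approximate-volume-sampling} or in this paper, and as sketched it is not verified; until steps (i) and (ii) are written down, the proof has a genuine gap at its final and only nontrivial algebraic step.
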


We can now prove the error bound for selection error $\|\mat M-\mat C\mat C^\dagger\mat M\|_F$ of Algorithm \ref{alg_relative_cx}
by combining Corollary \ref{cor_preserve_incoherence_uniform}, \ref{cor_norm_estimation_uniform}, Lemma \ref{lem_approximate_volume_sampling} and 
Theorem \ref{thm_volume_sampling_css},
{with failure probability $\delta,\delta'$ set at $O(1/k)$ to facilitate a union bound argument across all iterations.}
In particular, Corollary \ref{cor_preserve_incoherence_uniform} and \ref{cor_norm_estimation_uniform}
guarantees that Algorithm \ref{alg_relative_cx} estimates column norms accurately with high probability;
then one can apply Lemma \ref{lem_approximate_volume_sampling} to show that the sampling distribution employed in the algorithm
is actually an approximate volume sampling distribution, which is known to achieve relative error bounds (by Theorem \ref{thm_volume_sampling_css}).

{
\subsubsection{Proof sketch of $\|\mat M-\mat S\mat X\|_F$ error bound}

We first present a theorem, which is a generalization of Theorem 2.1 in \citep{volume-sampling-css}.
\begin{thm}[\citep{volume-sampling-css}, Theorem 2.1]
Suppose $\mat M\in\mathbb R^{n_1\times n_2}$ is the input matrix and $\mathcal U\subseteq\mathbb R^{n_1}$ is an arbitrary vector space.
Let $\mat S\in\mathbb R^{n_1\times s}$ be a random sample of $s$ columns in $\mat M$ from a distribution $q$ such that
\begin{equation}
\frac{(1-\alpha)\|\mat E^{(i)}\|_2^2}{(1+\alpha)\|\mat E\|_F^2} \leq q_i \leq \frac{(1+\alpha)\|\mat E^{(i)}\|_2^2}{(1-\alpha)\|\mat E\|_F^2},
\quad\forall i\in\{1,2,\cdots,n_2\},
\label{eq_norm_estimation_ver3}
\end{equation}
where $\mat E=\mathcal P_{\mathcal U^\perp}(\mat M)$ is the projection of $\mat M$ onto the orthogonal complement of $\mathcal U$.
Then 
\begin{equation}
\mathbb E_S\left[\|\mat M-\mathcal P_{\Span(\mathcal U,\mat S),k}(\mat M)\|_F^2\right]
\leq \|\mat M-\mat M_k\|_F^2 + \frac{(1+\alpha)k}{(1-\alpha)s}\|\mat E\|_F^2,
\label{eq_adaptive_norm_one_iter}
\end{equation}
where $\mat M_k$ denotes the best rank-$k$ approximation of $\mat M$.
\label{thm_adaptive_norm_lemma}
\end{thm}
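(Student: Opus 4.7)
The plan is to generalize the Deshpande--Vempala argument for exact norm sampling to the approximate regime governed by equation (\ref{eq_norm_estimation_ver3}), by exhibiting a single explicit random rank-$k$ matrix $\tilde{\mat M}$ whose column space lies in $\Span(\mathcal U, \mat S)$ and whose expected Frobenius error already meets the stated bound. Since $\mathcal P_{\Span(\mathcal U, \mat S), k}(\mat M)$ is by definition the \emph{best} such rank-$k$ approximation, the bound on $\mathbb E\|\mat M-\tilde{\mat M}\|_F^2$ will automatically upper-bound the expected quantity on the left-hand side of (\ref{eq_adaptive_norm_one_iter}).

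First I would let $\vct v_1, \ldots, \vct v_{n_2}$ be the right singular vectors of $\mat M$ so that $\mat M\vct v_j=\sigma_j\vct u_j$, and for each $j\leq k$ define the random estimator
$$\hat{\vct y}_j = \mathcal P_{\mathcal U}(\sigma_j\vct u_j) + \frac{1}{s}\sum_{t=1}^{s}\frac{\vct v_j(i_t)}{q_{i_t}}\mat E^{(i_t)},$$
where $i_1, \ldots, i_s$ are the i.i.d.\ indices drawn from $q$. Each $\mat E^{(i_t)} = \mat M^{(i_t)}-\mathcal P_{\mathcal U}(\mat M^{(i_t)})$ lies in $\Span(\mathcal U, \mat S)$, and so does $\mathcal P_{\mathcal U}(\sigma_j\vct u_j)$; hence $\hat{\vct y}_j\in\Span(\mathcal U,\mat S)$. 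Setting $\tilde{\mat M}=\sum_{j=1}^k \hat{\vct y}_j\vct v_j^\top$ then produces a matrix of rank at most $k$ with column space in $\Span(\mathcal U, \mat S)$, as required.

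Orthonormality of $\{\vct v_j\}$ gives the clean decomposition
$$\|\mat M-\tilde{\mat M}\|_F^2 \;=\; \sum_{j=1}^{k}\|\sigma_j\vct u_j-\hat{\vct y}_j\|_2^2 \;+\; \|\mat M-\mat M_k\|_F^2,$$
so the work reduces to bounding the first sum in expectation. Since $\sigma_j\vct u_j-\hat{\vct y}_j = \mat E\vct v_j-\frac{1}{s}\sum_t\frac{\vct v_j(i_t)}{q_{i_t}}\mat E^{(i_t)}$ and $\mathbb E_{i\sim q}\!\left[\frac{\vct v_j(i)}{q_i}\mat E^{(i)}\right]=\mat E\vct v_j$ (indices with $q_i=0$ are forced by the lower bound in (\ref{eq_norm_estimation_ver3}) to have $\mat E^{(i)}=\vct 0$, so they contribute nothing), the estimator is unbiased and the i.i.d.\ second-moment computation yields
$$\mathbb E\|\sigma_j\vct u_j-\hat{\vct y}_j\|_2^2 \;\leq\; \frac{1}{s}\sum_{i=1}^{n_2}\frac{\vct v_j(i)^2\,\|\mat E^{(i)}\|_2^2}{q_i}.$$
Applying the lower bound $q_i\geq\frac{1-\alpha}{1+\alpha}\cdot\frac{\|\mat E^{(i)}\|_2^2}{\|\mat E\|_F^2}$ cancels the $\|\mat E^{(i)}\|_2^2$ factor and gives a bound of $\frac{1+\alpha}{(1-\alpha)s}\|\mat E\|_F^2\cdot\|\vct v_j\|_2^2 = \frac{1+\alpha}{(1-\alpha)s}\|\mat E\|_F^2$; summing over $j=1,\ldots,k$ and adding the tail $\|\mat M-\mat M_k\|_F^2$ produces exactly the advertised bound.

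The main obstacle in assembling the proof cleanly is choosing the right explicit rank-$k$ approximant: the rank constraint has to be automatic from the construction (achieved by taking $k$ outer products with the $\vct v_j^\top$), the column space constraint must hold for \emph{every} realization of the sample (achieved by building $\hat{\vct y}_j$ from the deterministic piece $\mathcal P_{\mathcal U}(\sigma_j\vct u_j)\in\mathcal U$ plus a linear combination of the observed $\mat E^{(i_t)}$), and the estimator must be unbiased for the desired target $\sigma_j\vct u_j$ so that only its variance contributes to the final bound (achieved by the importance-sampling weights $\vct v_j(i_t)/q_{i_t}$). Once this template is in place the rest is a textbook second-moment calculation, and the $(1\pm\alpha)$ slack in the sampling distribution propagates transparently as the factor $(1+\alpha)/(1-\alpha)$.
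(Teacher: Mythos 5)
Your proof is correct. Note, however, that the paper does not actually prove this statement: it is presented as a (mild generalization of a) cited result, namely Theorem~2.1 of \cite{volume-sampling-css}, so there is no in-paper argument to compare against. Your argument is precisely the standard Deshpande--Vempala adaptive-sampling proof from that reference, adapted to the approximate distribution: you build the explicit rank-$k$ competitor $\tilde{\mat M}=\sum_{j\le k}\hat{\vct y}_j\vct v_j^\top$ with $\hat{\vct y}_j$ an unbiased importance-sampling estimator of $\sigma_j\vct u_j$ living in $\Span(\mathcal U,\mat S)$, use orthonormality of the right singular vectors to split the error into the variance term plus the tail $\|\mat M-\mat M_k\|_F^2$, and invoke optimality of $\mathcal P_{\Span(\mathcal U,\mat S),k}(\mat M)$ among rank-$k$ matrices with columns in $\Span(\mathcal U,\mat S)$. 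The only place the generalization enters is the lower bound on $q_i$ in the second-moment bound, and you track the resulting $(1+\alpha)/(1-\alpha)$ factor correctly, including the edge case $q_i=0\Rightarrow\mat E^{(i)}=\vct 0$ needed for unbiasedness. This is exactly how the cited theorem is proved, so the paper's decision to cite rather than reprove is justified, and your write-up would serve as a self-contained substitute.
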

Intuitively speaking, Theorem \ref{thm_adaptive_norm_lemma} states that relative estimation of residues $\mathcal P_{\mathcal U^\perp}(\mat M)$
would yield relative estimation of the data matrix $\mat M$ itself.

In the remainder of the proof we assume $s=\Omega(k^2\log(k)+k/\epsilon\delta)$ is the number of columns selected in $\mat S$ in Algorithm \ref{alg_relative_cx}.
Corollary \ref{cor_preserve_incoherence_uniform} asserts that with high probability $\mu(\mathcal U(\mat S))=O(s|C|^{-1}\mu_0\log(n/\delta))$
and $\mu(\mathcal P_{\mathcal U(\mat S)^\perp}(\mat M^{(i)})) = O(s\mu_0\log(n/\delta))$ for any subset $S$ with $|S|\leq s$.
Subsequently, we can apply Lemma \ref{lem_norm_estimation} and a union bound over $n_2$ columns and $T$ rounds to obtain the following proposition:
\begin{prop}
Fix $\delta,\delta'>0$.
If $m=\Omega(s\mu_0\log(n/\delta)\log(nT/\delta'))$ then with probability $\geq 1-\delta-\delta'$ 
\begin{equation}
\frac{2\|\mat E_t^{(i)}\|_2^2}{5\|\mat E_t\|_F^2} \leq \hat q_i^{(t)}
\leq \frac{5\|\mat E_t^{(i)}\|_2^2}{2\|\mat E_t\|_F^2},\quad\forall i\in\{1,2,\cdots,n_2\}, t\in\{1,2,\cdots,T\}.
\end{equation}
Here $\mat E_t=\mat M-\mathcal P_{\Span(\mathcal U\cup\mathcal S_1\cup\cdots\cup\mathcal S_{t-1})}(\mat M)$ is the residue at round $t$ of the active norm sampling procedure.
\label{prop_norm_estimation_ver3}
\end{prop}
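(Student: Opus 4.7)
The plan is to lift Lemma \ref{lem_norm_estimation} from the ``one fixed round / one fixed column'' setting to the uniform statement asked for by taking a union bound, using Corollary \ref{cor_preserve_incoherence_uniform} as the deterministic incoherence premise. Note that the subspace $\Span(\mathcal U\cup\mathcal S_1\cup\cdots\cup\mathcal S_{t-1})$ at any round $t\leq T$ is spanned by at most $k+s_1+\cdots+s_{T}=O(s)$ columns of $\mat M$, so Corollary \ref{cor_preserve_incoherence_uniform} applied with budget $O(s)$ gives, with probability $\geq 1-\delta$ over the Gaussian noise $\mat R$, that \emph{every} candidate subspace $\mathcal U(C)$ with $|C|\leq O(s)$ has incoherence $\mu(\mathcal U(C))=O(s|C|^{-1}\mu_0\log(n/\delta))$ and every residual column $\mathcal P_{\mathcal U(C)^\perp}(\mat M^{(i)})$ is incoherent at level $O(s\mu_0\log(n/\delta))$. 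This is the point where the uniform (rather than fixed-subset) corollary is essential, since $\mathcal S_1,\ldots,\mathcal S_{t-1}$ are random and adapted to the sampling history; handling this dependence cleanly is the main subtlety of the argument.

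Conditioning on that good event, fix a round $t\in[T]$ and a column $i\in[n_2]$. The residue $\mat E_t=\mathcal P_{\mathcal S_t^\perp}(\mat M)$ has column $\mat E_t^{(i)}$ equal to a projection of $\mat M^{(i)}$ onto the orthogonal complement of a subspace of dimension $O(s)$ spanned by $O(s)$ columns, so its incoherence is $O(s\mu_0\log(n/\delta))$ by Eq. (\ref{eq_preserve_project_incoherence_uniform}). I then invoke Lemma \ref{lem_norm_estimation} with the effective incoherence parameter $\tilde\mu_0=O(s\mu_0\log(n/\delta))$, which guarantees
\begin{equation*}
\tfrac{1}{2}\|\mat E_t^{(i)}\|_2^2 \;\leq\; \hat c_i^{(t)} \;\leq\; \tfrac{5}{4}\|\mat E_t^{(i)}\|_2^2
\end{equation*}
with probability $\geq 1-4\delta''$, as long as $m=\Omega(\tilde\mu_0\log(k/\delta''))$. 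Setting $\delta''=\delta'/(4n_2T)$ and taking a union bound across all $n_2T$ pairs $(i,t)$ yields the requirement $m=\Omega(s\mu_0\log(n/\delta)\log(nT/\delta'))$ stated in the proposition, and ensures the two-sided bound holds simultaneously for every $i$ and $t$ with total failure probability at most $\delta+\delta'$.

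Finally, I would convert the uniform $\hat c_i^{(t)}$ bounds into bounds on $\hat q_i^{(t)}$. Summing over $i$ gives $\tfrac{1}{2}\|\mat E_t\|_F^2\leq \hat f^{(t)}=\sum_i \hat c_i^{(t)}\leq \tfrac{5}{4}\|\mat E_t\|_F^2$, and plugging the numerator upper bound together with the denominator lower bound (and vice versa) into $\hat q_i^{(t)}=\hat c_i^{(t)}/\hat f^{(t)}$ yields exactly
\begin{equation*}
\frac{2}{5}\frac{\|\mat E_t^{(i)}\|_2^2}{\|\mat E_t\|_F^2}
\;\leq\; \hat q_i^{(t)} \;\leq\;
\frac{5}{2}\frac{\|\mat E_t^{(i)}\|_2^2}{\|\mat E_t\|_F^2},
\end{equation*}
which is the conclusion. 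The work is essentially a union-bounded reduction to Lemma \ref{lem_norm_estimation}; the only genuinely delicate step is verifying that Corollary \ref{cor_preserve_incoherence_uniform}'s hypotheses ($s\leq n_1/8$ and $2s\log(4n_2/\delta)\leq n_1/64$) are compatible with the over-sampling $s=\Theta(k^2\log k+k/\epsilon\delta)$ mandated in the statement of Theorem \ref{thm_relative_cx}, which places the mild dimensional restriction $k=O(n_1/\log(n_2/\delta))$ already assumed there.
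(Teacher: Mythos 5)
Your proposal is correct and follows essentially the same route as the paper: condition on the uniform incoherence event from Corollary \ref{cor_preserve_incoherence_uniform} (applied to subsets of size up to $s=k+s_1+\cdots+s_T$), invoke Lemma \ref{lem_norm_estimation} with the inflated incoherence parameter, union bound over the $n_2T$ column--round pairs, and convert the $\hat c_i^{(t)}$ bounds to $\hat q_i^{(t)}$ bounds by summing the estimates in the denominator. Your observation that no union bound over all $\binom{n_2}{s}$ subsets is needed at the norm-estimation stage (only at the incoherence stage) matches the remark the paper makes immediately after the proposition.
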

Note that we do not need to take a union bound over all $\binom{n_2}{s}$ column subsets because this time we do not require the sampling distribution of Algorithm \ref{alg_relative_cx}
to be close uniformly to the true active norm sampling procedure.

Consequently, combining Theorem \ref{thm_adaptive_norm_lemma} and Proposition \ref{prop_norm_estimation_ver3} we obtain Lemma \ref{lem_adaptive_exponential}.
Its proof is deferred to Appendix \ref{appsec:alg2}.
\begin{lem}
Fix $\delta,\delta'>0$.
If $m=\Omega(s\mu_0\log^2(n/\delta))$ and $s_1=\cdots=s_{T-1}=5k$, $s_T=10k/\epsilon\delta'$
then with probability $\geq 1-2\delta-\delta''$
\begin{equation}
\|\mat M-\mathcal P_{\mathcal U\cup\mathcal S_1\cup\cdots\cup\mathcal S_T}(\mat M)\|_F^2 \leq (1+\epsilon/2)\|\mat M-\mat M_k\|_F^2 + \frac{\epsilon/2}{2^T}\|\mat M-\mathcal P_{\mathcal U}(\mat M)\|_F^2.
\end{equation}
\label{lem_adaptive_exponential}
\end{lem}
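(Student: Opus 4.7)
The approach is to unroll the per-round expectation bound of Theorem~\ref{thm_adaptive_norm_lemma} across the $T$ rounds of active norm sampling, and then convert the resulting expected error bound into a high-probability statement by applying Markov's inequality to a carefully \emph{shifted}, non-negative quantity. All probabilistic guarantees on the norm estimation itself will be handled up front so that the remainder of the argument is deterministic conditional on a single good event.

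\textbf{Step 1 (Good event for sampling probabilities).} First I would condition on the event where Corollary~\ref{cor_preserve_incoherence_uniform} holds uniformly over all subsets of size $\leq s$ (probability $\geq 1-\delta$), so that every subspace $\mathcal U_t := \Span(\mathcal U\cup\mathcal S_1\cup\cdots\cup\mathcal S_{t-1})$ ever spanned during the active-norm phase is incoherent and so are the projected columns. On this event Proposition~\ref{prop_norm_estimation_ver3} applies (probability $\geq 1-\delta-\delta'$, which I will absorb into $2\delta$), giving that the $\hat q_i^{(t)}$ satisfy Eq.~\eqref{eq_norm_estimation_ver3} with $\alpha=3/7$ for all $t=1,\ldots,T$, so that $(1+\alpha)/(1-\alpha)=5/2$.

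\textbf{Step 2 (Per-round recursion).} Let $c^*:=\|\mat M-\mat M_k\|_F^2$ and define the truncated residue $R_t:=\|\mat M-\mathcal P_{\mathcal U_{t+1},k}(\mat M)\|_F^2$, with $R_0=\|\mat M-\mathcal P_{\mathcal U}(\mat M)\|_F^2$ (the two coincide since $\mathcal U$ already has dimension $k$ from the volume-sampling phase). On the good event, applying Theorem~\ref{thm_adaptive_norm_lemma} to $\mathcal U_t$ and $\mat S_t$ with $\alpha=3/7$ gives
\[
\mathbb{E}[R_t\mid\mathcal F_{t-1}] \;\leq\; c^* \;+\; \frac{5k}{2 s_t}\,\|\mat M-\mathcal P_{\mathcal U_t}(\mat M)\|_F^2 \;\leq\; c^* + \alpha_t R_{t-1},
\]
with $\alpha_t = 1/2$ for $t<T$ (since $s_t=5k$) and $\alpha_T = \epsilon\delta'/4$ (since $s_T=10k/\epsilon\delta'$). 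The second inequality uses that the non-truncated residue is dominated by the truncated one, which lets me collapse everything into a clean scalar recursion.

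\textbf{Step 3 (Shift and unroll).} The key trick is to pass to $\widetilde R_t := R_t - c^* \geq 0$, which satisfies $\mathbb E[\widetilde R_t\mid\mathcal F_{t-1}] \leq \alpha_t\widetilde R_{t-1} + \alpha_t c^*$. Iterating with $\alpha_1=\cdots=\alpha_{T-1}=1/2$ gives $\mathbb E[\widetilde R_{T-1}]\leq 2^{-(T-1)}R_0 + c^*$, and one final step with $\alpha_T=\epsilon\delta'/4$ yields $\mathbb E[\widetilde R_T]\leq (\epsilon\delta'/2)\bigl(c^* + 2^{-T}R_0\bigr)$. Markov's inequality applied to $\widetilde R_T$ at the threshold $(\epsilon/2)(c^* + 2^{-T}R_0)$ then produces the desired tail bound with failure probability at most $\delta'$ (identifying $\delta''$ with $\delta'$). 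Combining with the conditioning event from Step~1 and using that the non-truncated residue on the left-hand side of the lemma is at most $R_T$ delivers the claim with total failure probability $\leq 2\delta+\delta'$.

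\textbf{Main obstacle.} The principal difficulty is bridging the expectation bound of Theorem~\ref{thm_adaptive_norm_lemma} with a high-probability statement while preserving the clean $(1+\epsilon/2)$ leading constant: a naive Markov on $R_T$ itself loses an extra factor of $1/\delta'$ on the dominant $c^*$ term. Shifting by $c^*$ before applying Markov is precisely what kills this blow-up, since on the good event $R_T\geq c^*$ holds deterministically. A secondary nuisance is the mismatch between truncated and non-truncated projections inside the recursion, which I handle by always using the truncated residue as the upper surrogate.
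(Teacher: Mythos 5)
Your proposal is correct and follows essentially the same route as the paper: condition on the norm-estimation event (Corollary \ref{cor_preserve_incoherence_uniform} and Proposition \ref{prop_norm_estimation_ver3}), unroll Theorem \ref{thm_adaptive_norm_lemma} over the $T$ rounds with $\frac{1+\alpha}{1-\alpha}=\frac{5}{2}$ to get the geometric expectation bound, and finish with Markov's inequality. Your explicit shift $\widetilde R_t = R_t - \|\mat M - \mat M_k\|_F^2$ before applying Markov (valid because the rank-$k$-truncated residue dominates $\|\mat M-\mat M_k\|_F^2$ deterministically) is exactly the right reading of the paper's terse final step, and is needed to preserve the $(1+\epsilon/2)$ leading constant.
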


Applying Theorem \ref{thm_volume_sampling_css}, Lemma \ref{lem_approximate_volume_sampling} and note that $2^{(k+1)\log(k+1)} = (k+1)^{(k+1)} \geq (k+1)!$, 
we immediately have Corollary \ref{cor_adaptive_exponential}.
\begin{cor}
Fix $\delta > 0$.
Suppose $T=(k+1)\log(k+1)$ and $m,s_1,\cdots,s_T$ be set as in Lemma \ref{lem_adaptive_exponential}.
Then with probability $\geq 1-4\delta$ one has
\begin{equation}
\|\mat M-\mat S\mat S^\dagger\mat M\|_F^2 = \|\mat M-\mathcal P_{\mathcal U\cup\mathcal S_1\cup\cdots\cup\mathcal S_T}(\mat M)\|_F^2
\leq (1+\epsilon)\|\mat M-\mat M_k\|_F^2 \leq (1+\epsilon)\|\mat R\|_F^2.
\end{equation}
\label{cor_adaptive_exponential}
\end{cor}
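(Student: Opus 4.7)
\textbf{Proof proposal for Corollary \ref{cor_adaptive_exponential}.}

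The plan is to combine Lemma \ref{lem_adaptive_exponential} with the approximate-volume-sampling guarantee for the initialization phase, so that the geometrically decaying residual term in the adaptive bound absorbs the (otherwise exponential) relative error from the first $k$ columns. Concretely, Lemma \ref{lem_adaptive_exponential} already tells us that, with probability at least $1-2\delta-\delta''$ (for some small $\delta''$ that I will fold into a union bound at the end),
\begin{equation*}
\|\mat M-\mathcal P_{\mathcal U\cup\mathcal S_1\cup\cdots\cup\mathcal S_T}(\mat M)\|_F^2 \leq (1+\epsilon/2)\|\mat M-\mat M_k\|_F^2 + \frac{\epsilon/2}{2^T}\|\mat M-\mathcal P_{\mathcal U}(\mat M)\|_F^2.
\end{equation*}
So the task reduces to controlling $\|\mat M-\mathcal P_{\mathcal U}(\mat M)\|_F^2$, where $\mathcal U$ is the subspace spanned by the $k$ columns selected in the first (iterative norm sampling) phase of Algorithm \ref{alg_relative_cx}.

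For that residual, I would invoke Corollary \ref{cor_norm_estimation_uniform} so that the iterative sampling distribution $\hat p$ satisfies the multiplicative condition Eq.~(\ref{eq_norm_estimation_uniform}) uniformly over all intermediate subsets (on an event of probability $\geq 1-\delta-4\delta'$ for some small $\delta'$). On this event, Lemma \ref{lem_approximate_volume_sampling} shows that $\hat p_C \leq 2.5^k k!\, p_C$ for every $k$-subset $C$, where $p$ is the exact volume sampling distribution. Taking expectations against Theorem \ref{thm_volume_sampling_css} therefore yields
\begin{equation*}
\mathbb E_{\hat C}\!\left[\|\mat M-\mathcal P_{\mathcal U(\hat C)}(\mat M)\|_F^2\right] \leq 2.5^k k!\,(k+1)\,\|\mat M-\mat M_k\|_F^2 = 2.5^k (k+1)!\,\|\mat M-\mat M_k\|_F^2,
\end{equation*}
and a Markov step then produces, with probability $\geq 1-\delta$, the bound
\begin{equation*}
\|\mat M-\mathcal P_{\mathcal U}(\mat M)\|_F^2 \leq \frac{2.5^k(k+1)!}{\delta}\,\|\mat M-\mat M_k\|_F^2.
\end{equation*}

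Plugging this back into the adaptive bound, I need to verify that the geometric factor $2^{-T}$ kills the blow-up $2.5^k(k+1)!/\delta$. With $T=(k+1)\log(k+1)$ we have $2^T=(k+1)^{k+1}\geq (k+1)!$ as the paper notes; combined with Stirling's estimate $(k+1)^{k+1}/(k+1)!\gtrsim e^{k+1}/\sqrt{k+1}$ and the inequality $e>2.5$, the ratio $2.5^k(k+1)!/(\delta\,2^T)$ is $\leq 1$ for every $k$ once the constant hidden in $T$ is chosen large enough (absorbing the $1/\delta$ factor is where a slightly larger constant in front of $(k+1)\log(k+1)$ may be needed, but this is routine). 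The residual contribution therefore becomes at most $(\epsilon/2)\|\mat M-\mat M_k\|_F^2$, and combining with $(1+\epsilon/2)\|\mat M-\mat M_k\|_F^2$ yields the desired $(1+\epsilon)\|\mat M-\mat M_k\|_F^2$.

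Finally, two housekeeping steps: the equality $\|\mat M-\mat S\mat S^\dagger\mat M\|_F^2 = \|\mat M-\mathcal P_{\mathcal U\cup\mathcal S_1\cup\cdots\cup\mathcal S_T}(\mat M)\|_F^2$ follows because $\mat S$ was constructed to contain all of $\mat C$ and of $\mat M^{(S_1)},\ldots,\mat M^{(S_T)}$, so $\Span(\mat S)$ equals the union subspace; and the last inequality $\|\mat M-\mat M_k\|_F^2\leq\|\mat R\|_F^2$ is immediate because $\mat A$ is itself a rank-$k$ candidate for the best rank-$k$ approximation of $\mat M=\mat A+\mat R$. A final union bound over the failure events (the approximate-volume-sampling event, the Markov event, and the adaptive-sampling event of Lemma \ref{lem_adaptive_exponential}), rescaling each of $\delta,\delta',\delta''$ to a common $\delta$, gives the claimed $1-4\delta$ probability. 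I expect the main obstacle to be the constant-level bookkeeping in the Stirling step, specifically ensuring that $(k+1)^{k+1}\geq 2.5^k(k+1)!/\delta$ uniformly in $k$ with the stated choice of $T$; everything else is a direct assembly of previously established lemmas.
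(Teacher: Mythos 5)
Your proposal is correct and follows essentially the same route as the paper, which proves this corollary in one line by combining Theorem \ref{thm_volume_sampling_css}, Lemma \ref{lem_approximate_volume_sampling} and Lemma \ref{lem_adaptive_exponential} with the observation $2^{T}=(k+1)^{k+1}\geq(k+1)!$. Your extra care in the final step is warranted: the paper's inequality $(k+1)^{k+1}\geq(k+1)!$ alone does not absorb the remaining factor $2.5^{k}/\delta$, so, as you note, the constant in front of $(k+1)\log(k+1)$ in the definition of $T$ must be taken slightly larger for the bookkeeping to close.
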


To reconstruct the coefficient matrix $\mat X$ and to further bound the reconstruction error $\|\mat M-\mat S\mat X\|_F$, 
we apply the $\mat U(\mat U_{\Omega}^\top\mat U_{\Omega})^{-1}\mat U_{\Omega}$ operator on every column to build a low-rank approximation $\widehat{\mat M}$.
It was shown in \citep{akshay-nips,subspace-detection} that this operator recovers all components in the underlying subspace $\mathcal U$ with high probability,
and hence achieves a relative error bound for low-rank matrix approximation.
More specifically, we have Lemma \ref{lem_relative_matrix_approximation}, which is proved in Appendix \ref{appsec:alg2}.
\begin{lem}
Fix $\delta,\delta''>0$ and $\epsilon>0$.
Let $\mat S\in\mathbb R^{n_1\times s}$ and $\mat X\in\mathbb R^{s\times n_2}$ be the output of Algorithm \ref{alg_relative_cx}.
Suppose Corollary \ref{cor_adaptive_exponential} holds with probability $\geq 1-\delta$.
If $m$ satisfies 
\begin{equation}
m=\Omega(\epsilon^{-1}s\mu_0\log(n/\delta)\log(n/\delta'')),
\end{equation}
then with probability $\geq 1-\delta-\delta''$ we have 
\begin{equation}
\|\mat M - \widehat{\mat M}\|_F^2 \leq (1+\epsilon)\|\mat M-\mat S\mat S^\dagger\mat M\|_F^2.
\end{equation}
\label{lem_relative_matrix_approximation}
\end{lem}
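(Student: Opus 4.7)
}

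The plan is to analyze $\widehat{\mat M}$ one column at a time. Let $\mathcal U = \Span(\mat S)$ and let $\mat U \in \mathbb R^{n_1 \times s'}$ be an orthonormal basis (where $s' \leq s + k$ is the dimension of $\mathcal U$). For each column $i$, decompose $\vct x_i = \vct y_i + \vct z_i$ with $\vct y_i = \mathcal P_{\mathcal U}\vct x_i \in \mathcal U$ and $\vct z_i = \mathcal P_{\mathcal U^\perp}\vct x_i$. The operator $\mat U(\mat U_{\Omega_i}^\top\mat U_{\Omega_i})^{-1}\mat U_{\Omega_i}^\top$ acts as the identity on any vector of $\mathcal U$ (since $\vct y_i = \mat U\vct \alpha_i$ implies $\vct y_{i,\Omega_i} = \mat U_{\Omega_i}\vct \alpha_i$), so it recovers $\vct y_i$ exactly from the observed coordinates. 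Consequently, writing $\vct \xi_i := \mat U(\mat U_{\Omega_i}^\top\mat U_{\Omega_i})^{-1}\mat U_{\Omega_i}^\top\vct z_{i,\Omega_i}$, we obtain $\widehat{\mat M}^{(i)} - \vct x_i = \vct \xi_i - \vct z_i$. Since $\vct \xi_i \in \mathcal U$ and $\vct z_i \in \mathcal U^\perp$, these two components are orthogonal, and
\begin{equation*}
\|\widehat{\mat M}^{(i)} - \vct x_i\|_2^2 = \|\vct z_i\|_2^2 + \|\vct \xi_i\|_2^2.
\end{equation*}

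The main technical step is to show that $\|\vct \xi_i\|_2^2 \leq \epsilon \|\vct z_i\|_2^2$ with high probability, for every column $i$. Here I will invoke the incoherence structure: by Corollary \ref{cor_preserve_incoherence_uniform}, with high probability $\mu(\mathcal U) = O(s' \cdot s^{-1}\mu_0 \log(n/\delta))$ for the selected subspace of dimension $s'$, and simultaneously $\mu(\vct z_i) = O(s\mu_0\log(n/\delta))$ for every unselected column $i$ (selected columns trivially give $\vct z_i = 0$). Since $\mat U$ has orthonormal columns, $\|\vct \xi_i\|_2 = \|(\mat U_{\Omega_i}^\top\mat U_{\Omega_i})^{-1}\mat U_{\Omega_i}^\top \vct z_{i,\Omega_i}\|_2 \leq \|(\mat U_{\Omega_i}^\top\mat U_{\Omega_i})^{-1}\|_{\mathrm{op}} \cdot \|\mat U_{\Omega_i}^\top \vct z_{i,\Omega_i}\|_2$. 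A standard matrix-Chernoff argument for uniformly sampled rows of $\mat U$ (as used in Balzano--Nowak--Recht and in \cite{power-adaptivity}) gives $\sigma_{\min}(\mat U_{\Omega_i}^\top\mat U_{\Omega_i}) \geq \tfrac{m}{2n_1}$ with high probability as soon as $m = \Omega(\mu(\mathcal U)\,s'\log(s'/\delta''))$. A Bernstein bound on the mean-zero random sum $\mat U_{\Omega_i}^\top\vct z_{i,\Omega_i} = \sum_{j\in\Omega_i}\vct z_i(j)\mat U_{(j)}^\top$ (mean zero because $\mat U^\top\vct z_i = 0$) gives $\|\mat U_{\Omega_i}^\top\vct z_{i,\Omega_i}\|_2^2 = O\!\bigl(\tfrac{m}{n_1^2}\,\mu(\mathcal U)s'\mu(\vct z_i)\log(n/\delta'')\bigr)\,\|\vct z_i\|_2^2$. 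Combining the two bounds,
\begin{equation*}
\|\vct \xi_i\|_2^2 \;\leq\; \Bigl(\tfrac{2n_1}{m}\Bigr)^2 \cdot O\!\Bigl(\tfrac{m}{n_1^2}\,\mu(\mathcal U)s'\mu(\vct z_i)\log(n/\delta'')\Bigr)\,\|\vct z_i\|_2^2
\;=\; O\!\Bigl(\tfrac{\mu(\mathcal U)\,s'\,\mu(\vct z_i)\log(n/\delta'')}{m}\Bigr)\,\|\vct z_i\|_2^2,
\end{equation*}
which after substituting the incoherence bounds $\mu(\mathcal U), \mu(\vct z_i) = O(s\mu_0\log(n/\delta))$ and the hypothesis $m = \Omega(\epsilon^{-1}s\mu_0\log(n/\delta)\log(n/\delta''))$ yields $\|\vct \xi_i\|_2^2 \leq \epsilon\|\vct z_i\|_2^2$.

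Summing over $i \in [n_2]$ and taking a union bound over the $n_2$ events (which is already absorbed into the $\log(n/\delta'')$ factor in $m$) gives
\begin{equation*}
\|\mat M - \widehat{\mat M}\|_F^2 = \sum_{i=1}^{n_2}\|\vct z_i\|_2^2 + \sum_{i=1}^{n_2}\|\vct \xi_i\|_2^2 \;\leq\; (1+\epsilon)\sum_{i=1}^{n_2}\|\vct z_i\|_2^2 \;=\; (1+\epsilon)\,\|\mat M - \mat S\mat S^\dagger\mat M\|_F^2,
\end{equation*}
which is the claimed bound. The main obstacle is not the bookkeeping but ensuring that the incoherence assumptions underlying the Bernstein/Chernoff steps hold simultaneously for every column $i$ and for the random subspace $\mathcal U$ produced by the algorithm; this is exactly what Corollary \ref{cor_preserve_incoherence_uniform} supplies, with the additional factor of $1/\epsilon$ in the sample complexity $m$ coming from having to drive the residual $\|\vct \xi_i\|_2^2$ below $\epsilon\|\vct z_i\|_2^2$ rather than merely a constant fraction.
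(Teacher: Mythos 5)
Your decomposition and overall strategy are exactly the paper's: split each column as $\vct x_i = \vct y_i + \vct z_i$ with $\vct y_i\in\mathcal U$, $\vct z_i\in\mathcal U^\perp$, observe that the subsampled projector reproduces $\vct y_i$ exactly, bound the spurious component $\vct \xi_i$ by $\|(\mat U_{\Omega_i}^\top\mat U_{\Omega_i})^{-1}\|_2^2\,\|\mat U_{\Omega_i}^\top\vct z_{i,\Omega_i}\|_2^2$ via a Chernoff bound on $\sigma_{\min}(\mat U_{\Omega_i}^\top\mat U_{\Omega_i})$ and a Bernstein bound on the mean-zero sum, feed in the incoherence estimates from Corollary \ref{cor_preserve_incoherence_uniform}, and sum over columns. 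The paper packages the two concentration steps as Lemmas \ref{lem_uinv_bound} and \ref{lem_uy_bound} (cited from \cite{akshay-nips,subspace-detection}), but the content is the same.

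There is, however, a quantitative slip in your Bernstein step that breaks the final arithmetic as written. You state $\|\mat U_{\Omega_i}^\top\vct z_{i,\Omega_i}\|_2^2 = O\bigl(\tfrac{m}{n_1^2}\,\mu(\mathcal U)\,s'\,\mu(\vct z_i)\log(n/\delta'')\bigr)\|\vct z_i\|_2^2$, with $\mu(\vct z_i)$ appearing multiplicatively in the leading term. The variance of $\sum_{j\in\Omega_i}\vct z_i(j)\mat U_{(j)}^\top$ is at most $\tfrac{m}{n_1}\cdot\tfrac{s'\mu(\mathcal U)}{n_1}\|\vct z_i\|_2^2$ and does not involve $\mu(\vct z_i)$; the column incoherence only enters through the max-norm (lower-order) term of Bernstein, i.e.\ as the condition $m = \Omega(\mu(\vct z_i)\log(n/\delta''))$ under which the variance term dominates. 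This is precisely the form of Lemma \ref{lem_uy_bound}. With your extra factor $\mu(\vct z_i)=O(s\mu_0\log(n/\delta))$, the combined bound becomes $O(s^2\mu_0^2\log^2(n/\delta)\log(n/\delta'')/m)$, which under $m=\Omega(\epsilon^{-1}s\mu_0\log(n/\delta)\log(n/\delta''))$ evaluates to $O(\epsilon\, s\mu_0\log(n/\delta))$, not $O(\epsilon)$ — so your final substitution does not actually close. Dropping the spurious multiplicative $\mu(\vct z_i)$ (and checking that the stated $m$ satisfies the side condition, which it does) repairs the argument and recovers the paper's proof.
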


Note that all columns of $\widehat{\mat M}$ are in the subspace $\mathcal U(S)$.
Therefore, $\mat S\mat X = \mat S\mat S^\dagger\widehat{\mat M} = \widehat{\mat M}$.
The proof of Eq. (\ref{eq_relative_cx}) is then completed by noting that $(1+\epsilon)^2\leq 1+3\epsilon$ whenever $\epsilon\leq 1$.
}
{
\subsection{Proof of Theorem \ref{thm_lscore_cx}}

Before presenting the proof, 
we first present a theorem cited from \citep{cur_relative}.
In general, Theorem \ref{thm_cx_relative_leverage} claims that if columns are selected with probability proportional
to their row-space leverage scores then the resulting column subset is a relative-error approximation of the original input matrix.
\begin{thm}[\citep{cur_relative}, Theorem 3]
Let $\mat M\in\mathbb R^{n_1\times n_2}$ be the input matrix and $k$ be a rank parameter.
Suppose a subset of columns $C=\{i_1,i_2,\cdots,i_s\}\subseteq [n_2]$ is selected such that
\begin{equation}
\Pr[i_t=j] = p_j \geq \frac{\beta\|\mat V_k^\top\vct e_j\|_2^2}{k},\quad\forall t\in\{1,\cdots,s\}, j\in\{1,\cdots,n_2\}.
\label{eq_approx_lscore_sampling}
\end{equation}
Here $\mat V_k\in\mathbb R^{n_2\times k}$ is the top-$k$ right singular vectors of $\mat M$.
If $s=\Omega(\beta^{-1}\epsilon^{-2}k^2\log(1/\delta))$ then with probability $\geq 1-\delta$ one has
\begin{equation}
\|\mat M-\mat C\mat C^\dagger\mat M\|_F \leq (1+\epsilon)\|\mat M-\mat M_k\|_F.
\end{equation}
\label{thm_cx_relative_leverage}
\end{thm}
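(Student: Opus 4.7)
The plan is to follow the ``structural'' proof technique of Drineas, Mahoney and Muthukrishnan for relative-error CX/CUR decompositions. The key insight is to avoid reasoning directly about the optimal projection $\mat C\mat C^\dagger\mat M$, and instead construct a specific suboptimal approximation $\mat C\mat Y$ (lying in the column space of $\mat C$) whose error can be analyzed in closed form with standard matrix concentration tools. Throughout, write $\mat M = \mat U\mat\Sigma\mat V^\top$ and $\mat M_k = \mat U_k\mat\Sigma_k\mat V_k^\top$ for the SVD and its rank-$k$ truncation.

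First I would introduce the sampling--rescaling operator $\mat S\mat D\in\mathbb R^{n_2\times s}$, where the $t$-th column of $\mat S$ is $\vct e_{i_t}$ and the $t$-th diagonal of $\mat D$ is $1/\sqrt{sp_{i_t}}$. Up to column rescaling $\mat C$ is $\mat M\mat S\mat D$. Since $\mat C\mat C^\dagger$ is the orthogonal projector onto $\Span(\mat C)$, the bound $\|\mat M-\mat C\mat C^\dagger\mat M\|_F \leq \|\mat M-\mat C\mat Y\|_F$ holds for any $\mat Y$ of compatible dimensions. I would plug in the specific choice $\mat Y = (\mat V_k^\top\mat S\mat D)^\dagger\mat V_k^\top$. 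Assuming $\mat V_k^\top\mat S\mat D$ has full row rank, we have $\mat V_k^\top\mat S\mat D(\mat V_k^\top\mat S\mat D)^\dagger=\mat I_k$, so the $\mat M_k$ component is reproduced exactly and one is left with
\[
\mat M - \mat C\mat Y \;=\; (\mat M-\mat M_k) \;-\; (\mat M-\mat M_k)\mat S\mat D(\mat V_k^\top\mat S\mat D)^\dagger\mat V_k^\top.
\]
Because $(\mat M-\mat M_k)\mat V_k = \mat 0$, the two terms on the right-hand side are Frobenius-orthogonal (since left-multiplying the second term by $\mat V_k\mat V_k^\top$ leaves it invariant while the first annihilates it), so the squared Frobenius norm splits into a sum.

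Next I would bound each of the two terms separately. The first contributes exactly $\|\mat M-\mat M_k\|_F^2$. For the second, the sub-multiplicative bound
\[
\|(\mat M-\mat M_k)\mat S\mat D(\mat V_k^\top\mat S\mat D)^\dagger\mat V_k^\top\|_F \;\leq\; \|(\mat M-\mat M_k)\mat S\mat D\|_F\cdot\|(\mat V_k^\top\mat S\mat D)^\dagger\|_2
\]
reduces the problem to two independent quantities. The first factor is a column-sampling estimator with mean $\mathbb E\|(\mat M-\mat M_k)\mat S\mat D\|_F^2 = \|\mat M-\mat M_k\|_F^2$, since $p_j^{-1}$ cancels the sampling bias; Markov's inequality (or a sharper variance-based tail bound) controls it up to a multiplicative factor.

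The main technical obstacle is to bound $\|(\mat V_k^\top\mat S\mat D)^\dagger\|_2 = 1/\sigma_{\min}(\mat V_k^\top\mat S\mat D)$, i.e., to establish subspace preservation. I would analyze $\mat V_k^\top\mat S\mat D\mat D^\top\mat S^\top\mat V_k = \sum_{t=1}^{s}(sp_{i_t})^{-1}\mat V_k^\top\vct e_{i_t}\vct e_{i_t}^\top\mat V_k$, a sum of $s$ independent rank-one PSD matrices with expectation $\mat I_k$. The crucial role of the domination hypothesis $p_j \geq \beta\|\mat V_k^\top\vct e_j\|_2^2/k$ is to guarantee each summand has spectral norm at most $k/(s\beta)$, yielding exactly the boundedness input required by a matrix Bernstein/Chernoff inequality. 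Calibrating the deviation parameter gives $\sigma_{\min}^2(\mat V_k^\top\mat S\mat D)\geq 1-\epsilon$ with probability $\geq 1-\delta$ once $s=\Omega(\beta^{-1}\epsilon^{-2}k^2\log(1/\delta))$ (the $k^2$ arises from the Bernstein variance bound together with the $1/\beta$ leverage-score domination factor). Combining these with the orthogonal-decomposition step then gives $\|\mat M-\mat C\mat C^\dagger\mat M\|_F^2 \leq (1+O(\epsilon))\|\mat M-\mat M_k\|_F^2$, from which rescaling $\epsilon$ by a constant yields the claim.
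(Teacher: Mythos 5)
This theorem is not proved in the paper; it is imported verbatim from \cite{cur_relative} (Theorem 3 there), so your attempt has to be judged against the Drineas--Mahoney--Muthukrishnan argument itself. Your skeleton is the right one: the sampling--rescaling operator, the suboptimal choice $\mat Y=(\mat V_k^\top\mat S\mat D)^\dagger\mat V_k^\top$, the exact reproduction of $\mat M_k$ under full row rank, the Frobenius-orthogonal split, and the matrix-Chernoff control of $\sigma_{\min}(\mat V_k^\top\mat S\mat D)$ via the leverage-score domination $p_j\geq\beta\|\mat V_k^\top\vct e_j\|_2^2/k$ are all exactly the structural ingredients of that proof.

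The gap is in how you bound the cross term. The submultiplicative estimate $\|(\mat M-\mat M_k)\mat S\mat D(\mat V_k^\top\mat S\mat D)^\dagger\mat V_k^\top\|_F\leq\|(\mat M-\mat M_k)\mat S\mat D\|_F\,\|(\mat V_k^\top\mat S\mat D)^\dagger\|_2$ cannot yield $(1+\epsilon)$: as you yourself note, $\mathbb E\|(\mat M-\mat M_k)\mat S\mat D\|_F^2=\|\mat M-\mat M_k\|_F^2$, and this quantity concentrates at (not below) its mean, while the second factor is at best $\approx 1$. So your Pythagorean sum evaluates to roughly $2\|\mat M-\mat M_k\|_F^2$, i.e.\ a $\sqrt{2}(1+O(\epsilon))$ relative-error bound, not the claimed $(1+\epsilon)$ --- increasing $s$ does not shrink this term at all. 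The missing idea is that the cross term is an approximate product of two matrices whose true product vanishes: writing $(\mat V_k^\top\mat S\mat D)^\dagger=\mat D^\top\mat S^\top\mat V_k\bigl(\mat V_k^\top\mat S\mat D\mat D^\top\mat S^\top\mat V_k\bigr)^{-1}$, the leading factor becomes $(\mat M-\mat M_k)\mat S\mat D\mat D^\top\mat S^\top\mat V_k$, an unbiased Monte Carlo estimator of $(\mat M-\mat M_k)\mat V_k=\mat 0$. The Drineas--Kannan--Mahoney approximate-matrix-multiplication bound, using the same domination hypothesis, gives $\mathbb E\|(\mat M-\mat M_k)\mat S\mat D\mat D^\top\mat S^\top\mat V_k\|_F^2\leq\tfrac{k}{\beta s}\|\mat M-\mat M_k\|_F^2$, and it is this $k/(\beta s)$ decay --- combined with the inverse Gram factor being $O(1)$ in spectral norm --- that produces the $\epsilon$ and the quoted sample complexity. (Secondarily, plain Markov on the residual factor costs a $1/\delta$ blowup rather than $\log(1/\delta)$; the standard remedy is $O(\log(1/\delta))$ independent repetitions keeping the best subset, which your sketch would also need to invoke.)
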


In the sequel we use $\mathcal Q_S(\mat M)$ to denote the matrix formed by projecting each row of $\mat M$ to a row subspace $\mathcal S$
and $\mathcal P_C(\mat M)$ to denote the matrix formed by projecting each column of $\mat M$ to a column subspace $\mathcal C$.
Since $\mat M$ has incoherent column space, the uniform sampling distribution $p_j=1/n_1$ satisfies Eq. (\ref{eq_approx_lscore_sampling})
with $\beta=1/\mu_0$.
Consequently, by Theorem \ref{thm_cx_relative_leverage} the computed row space $\mathcal S$ satisfies
\begin{equation}
\|\mat M-\mathcal Q_S(\mat M)\|_F\leq (1+\epsilon)\|\mat M-\mat M_k\|_F
\label{eq_lscore_first_approx}
\end{equation}
with high probability when $m=\Omega(k^2/\beta\epsilon^2)=\Omega(\mu_0k^2/\epsilon^2)$.

Next, note that though we do not know $\mathcal Q_S(\mat M)$, we know its row space $\mathcal S$.
Subsequently, we can compute the exact leverage scores of $\mathcal Q_S(\mat M)$, i.e., $\|\mat S_k^\top\vct e_j\|_2^2$ for $j=1,2,\cdots,n_2$.
With the computed leverage scores, performing leverage score sampling on $\mathcal Q_S(\mat M)$ as in Algorithm \ref{alg_lscore_cx}
and applying Theorem \ref{thm_cx_relative_leverage} we obtain
\begin{equation}
\|\mathcal Q_S(\mat M)-\mathcal P_C(\mathcal Q_S(\mat M))\|_F \leq (1+\epsilon)\|\mathcal Q_S(\mat M)-\left[\mathcal Q_S(\mat M)\right]_k\|_F,
\label{eq_lscore_second_approx}
\end{equation}
where $\left[\mathcal Q_S(\mat M)\right]_k$ denotes the best rank-$k$ approximation of $\mathcal Q_S(\mat M)$.
Note that
\begin{equation}
\|\mathcal Q_S(\mat M)-\left[\mathcal Q_S(\mat M)\right]_k\|_F
\leq \|\mathcal Q_S(\mat M)-\mathcal Q_S(\mat M_k)\|_F
= \|\mathcal Q_S(\mat M-\mat M_k)\|_F
\leq \|\mat M-\mat M_k\|_F
\end{equation}
because $\mathcal Q_S(\mat M_k)$ has rank at most $k$.
Consequently, the selection error $\|\mat M-\mathcal P_C(\mat M)\|_F$ can be bounded as follows:
\begin{eqnarray*}
\|\mat M-\mathcal P_C(\mat M)\|_F
&\leq& \|\mat M-\mathcal Q_S(\mat M)\|_F + \|\mathcal Q_S(\mat M) - \mathcal P_C(\mathcal Q_S(\mat M))\|_F + \|\mathcal P_C(\mathcal Q_S(\mat M)) - \mathcal P_C(\mat M)\|_F\\
&\leq& \|\mat M-\mathcal Q_S(\mat M)\|_F + \|\mathcal Q_S(\mat M) - \mathcal P_C(\mathcal Q_S(\mat M))\|_F + \|\mathcal Q_S(\mat M) - \mat M\|_F\\
&\leq& 3(1+\epsilon)\|\mat M-\mat M_k\|_F.
\end{eqnarray*}

}

\section{Related work on column subset selection with missing data}\label{sec:related_work}

In this section we review two previously proposed algorithms for column subset selection with missing data.
Both algorithms are heuristic based and no theoretical analysis is available.
We also remark that both methods employ the passive sampling scheme as observation models.
In fact, they work for any subset of observed matrix entries.

\subsection{Block orthogonal matching pursuit (Block OMP)}\label{subsec:blockomp}

\begin{algorithm*}[t]
\caption{A block OMP algorithm for column subset selection with missing data}
\begin{algorithmic}[1]
\State \textbf{Input}: size of column subset $s$, observation mask $\mat W\in\{0,1\}^{n_1\times n_2}$.
\State \textbf{Initialization}: Set $C=\emptyset$, $\mathcal C=\emptyset$, $\mat Y=\mat W\circ\mat M$, $\mat Y^{(1)} = \mat Y$.
\For{$t=1,2,\cdots, s$}
	\State Compute $\mat D = \mat Y^\top(\mat W\circ\mat Y^{(t)})$. Let $\{\vct d_i\}_{i=1}^{n_2}$ be rows of $\mat D$.
	\State Column selection: $i_t = \argmax_{1\leq i\leq n_2}\|\vct d_i\|_2$; update: $C\gets C\cup\{i_t\}$, $\mathcal C\gets\Span(\mathcal C, \mat Y_{(i_t)})$.
	\State Back projection: $\mat Y^{(t+1)} = \mat Y^{(t)} - \mathcal P_{\mathcal C}(\mat Y^{(t)})$.
\EndFor
\State \textbf{Output}: the selected column indices $C\subseteq \{1,2,\cdots,n_2\}$.
\end{algorithmic}
\label{alg_blockomp}
\end{algorithm*}

A block OMP algorithm was proposed in \citep{missing-css-blockomp} for column subset selection with missing data.
Let $\mat W\in\{0,1\}^{n_1\times n_2}$ denote the ``mask'' of observed entries; that is,
\begin{equation*}
\mat W_{ij} = \left\{\begin{array}{ll}
1,& \text{if $\mat M_{ij}$ is observed};\\
0,& \text{if $\mat M_{ij}$ is not observed}.\\\end{array}\right.
\end{equation*}
We also use $\circ$ to denote the Hadamard product (entrywise product) between two matrices of the same dimension.

The pseudocode is presented in Algorithm \ref{alg_blockomp}.
Note that Algorithm \ref{alg_blockomp} has very similar framework compared with the iterative norm sampling algorithm:
both methods select columns in an iterative manner
and after each column is selected, the contribution of selected columns is removed from the input matrix
by projecting onto the complement of the subspace spanned by selected columns.
Nevertheless, there are some major differences.
First, in iterative norm sampling we select a column according to their residue norms
while in block OMP we base such selection on inner products between the original input matrix and the residue one.
In addition, due to the passive sampling nature Algorithm \ref{alg_blockomp}
uses the zero-filled data matrix to approximate subspace spanned by selected columns.
In contrast, iterative norm sampling computes this subspace exactly by active sampling.

\subsection{Group Lasso}\label{subsec:glasso}

The group Lasso formulation was originally proposed in \citep{missing-css-glasso} as a convex optimization
alternative for matrix column subset selection and CUR decomposition for fully-observed matrices.
It was briefly remarked in \citep{missing-css-blockomp} that group Lasso could be extended to the case when only partial observations are available.
In this paper we make such extension precise by proposing the following convex optimization problem:
\begin{equation}
\min_{\mat X\in\mathbb R^{n_1\times n_2}}\|\mat W\circ\mat M - (\mat W\circ\mat M)\mat X\|_F^2 + \lambda\|\mat X\|_{1,2},
\quad s.t.\;\;\diag(\mat X) = \vct 0.
\label{eq_glasso}
\end{equation}
Here in Eq. (\ref{eq_glasso}) $\mat W\in\{0,1\}^{n_1\times n_2}$ denotes the mask for observed matrix entries
and $\circ$ denotes the Hadamard (entrywise) matrix product.
$\|\mat X\|_{1,2}=\sum_{i=1}^{n_2}{\|\mat X_{(i)}\|_2}$ denotes the 1,2-norm of matrix $\mat X$,
which is the sum of $\ell_2$ norm of all rows in $\mat X$.
The nonzero rows in the optimal solution $\mat X$ correspond to the selected columns.

Eq. (\ref{eq_glasso}) could be solved using standard convex optimization methods, e.g., proximal gradient descent \citep{prox-glasso}.
However, to make Eq. (\ref{eq_glasso}) a working column subset selection algorithm one needs to carefully choose the regularization parameter $\lambda$
so that the resulting optimal solution $\mat X$ has no more than $s$ nonzero columns.
Such selection could be time-consuming and inexact.
As a workaround, we implement the solution path algorithm for group Lasso problems in \citep{solution-path-glasso}.

\subsection{Discussion on theoretical assumptions of block OMP and group Lasso}

We discuss theoretical assumptions required for block OMP and group Lasso approaches.
It should be noted that for the particular matrix column subset selection problem, neither \cite{missing-css-blockomp} or \cite{missing-css-glasso}
provides rigorous theoretical guarantee of approximation error of the selected column subsets.
However, it is informative to compare to typical assumptions that are used to analyze block OMP and group Lasso for regression problems in the existing literature \citep{yuan2006model,lounici2011oracle}.
In most cases,
certain ``restricted eigenvalue'' conditions on the design matrix $\mat X$,
 which roughly corresponds to a ``weak correlation'' condition among columns of a data matrix.
 This explains the worse performance of both methods on data sets that have highly correlated columns (e.g., many repeated columns), as we shown in later sections on experimental results.

\section{Experiments}\label{sec:experiments}

\begin{figure*}[t]
\centering
\includegraphics[width=12cm]{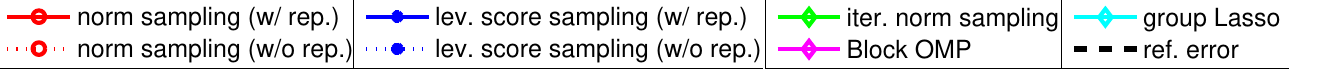}
\includegraphics[width=5cm]{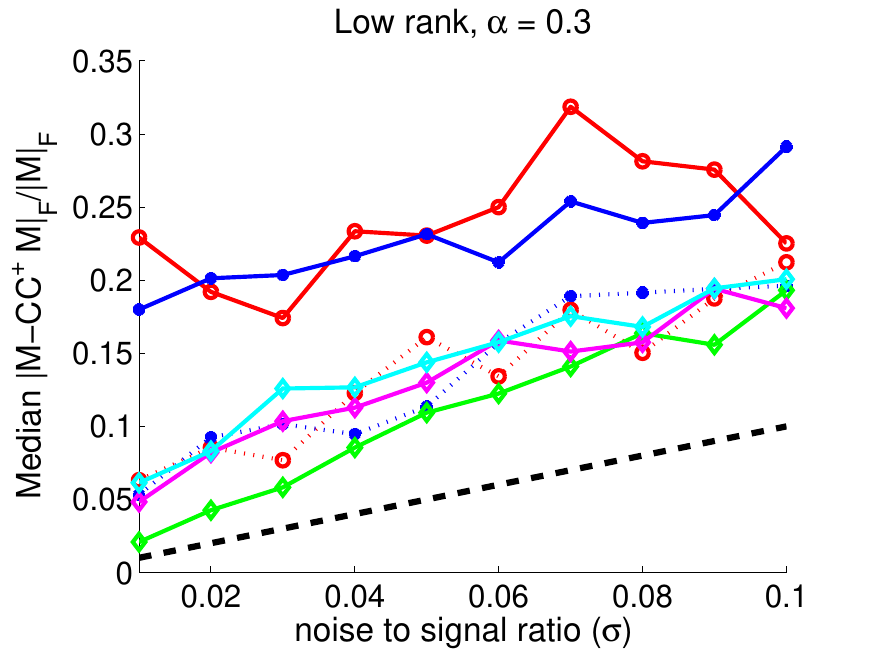}
\includegraphics[width=5cm]{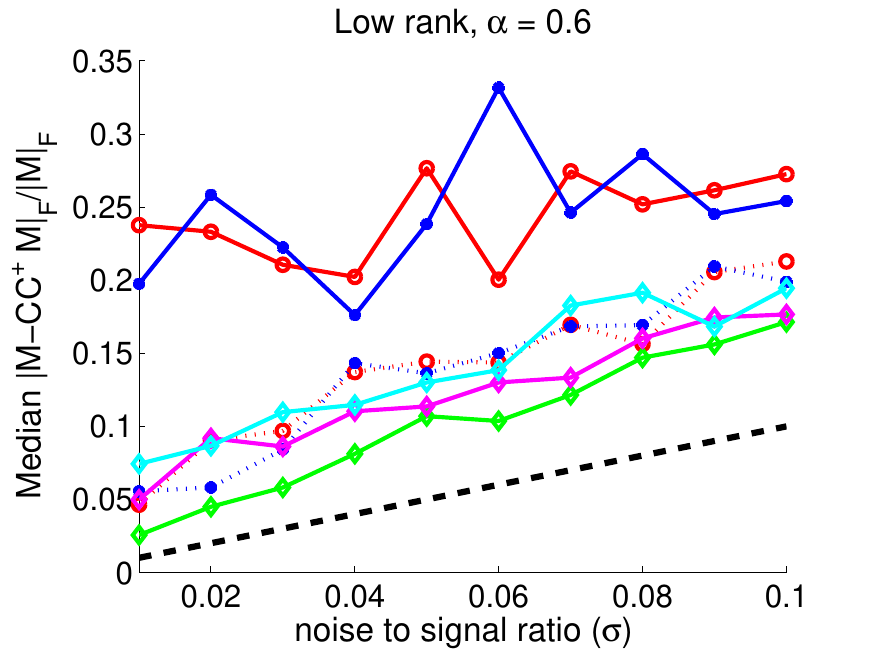}
\includegraphics[width=5cm]{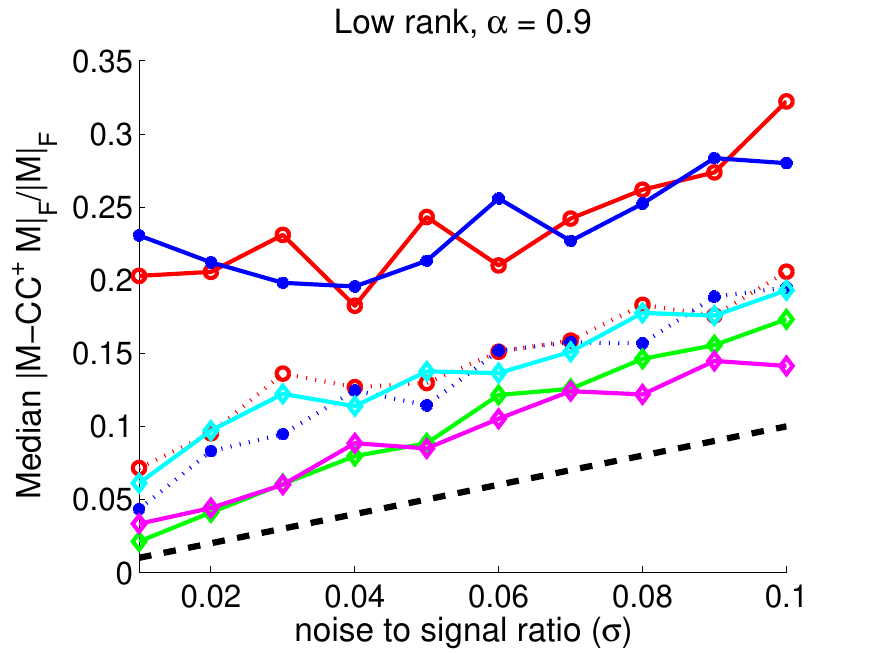}
\includegraphics[width=5cm]{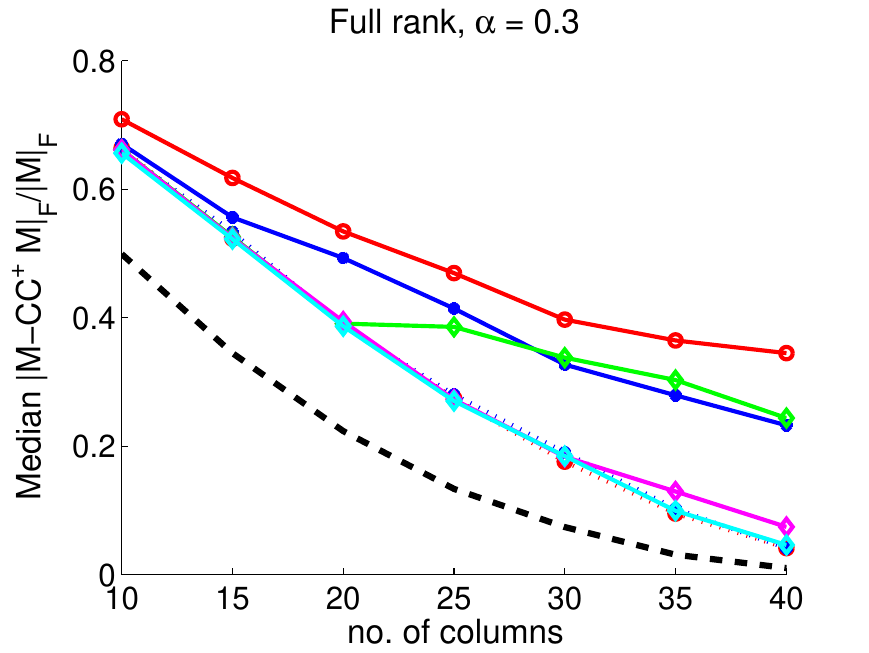}
\includegraphics[width=5cm]{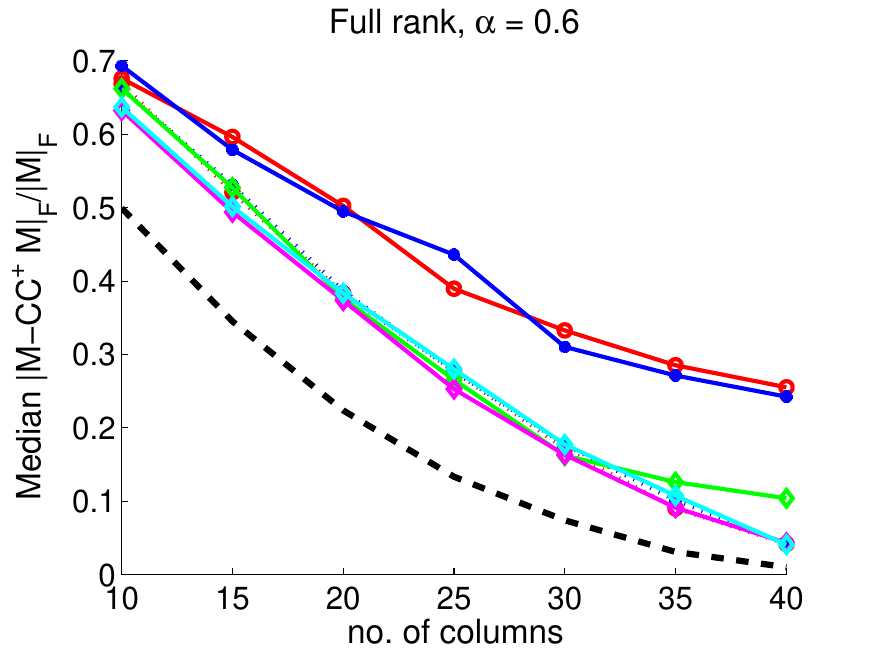}
\includegraphics[width=5cm]{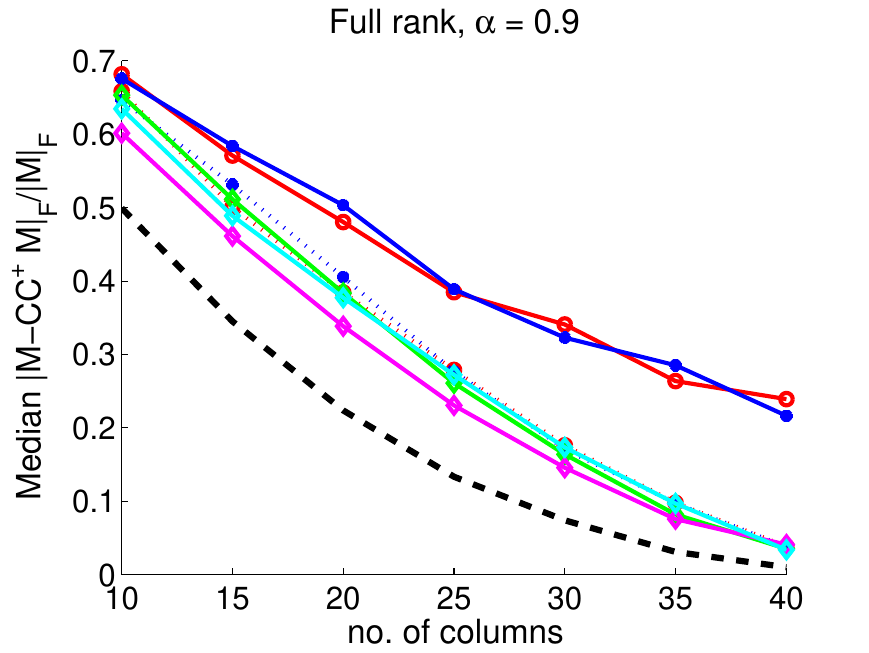}
\caption{Selection error on Gaussian random matrices. Top row: low-rank plus noise inputs, $s=k=15$;
bottom row: full-rank inputs.
The black dashed lines denote noise-to-signal ratio $\sigma$ in the first row and $\|\mat M-\mat M_k\|_F$ in the second row.
$\alpha$ indicates the observation rate (i.e., the number of observed entries divided by $n_1n_2$, the total number of matrix entries).
All algorithms are run for 8 times on each data set and the median error is reported.
We report the median instead of the mean because the performance of norm and leverage score sampling is quite variable.}
\label{fig_compare_random}
\end{figure*}

\begin{figure*}[t]
\centering
\includegraphics[width=12cm]{legend.pdf}
\includegraphics[width=5cm]{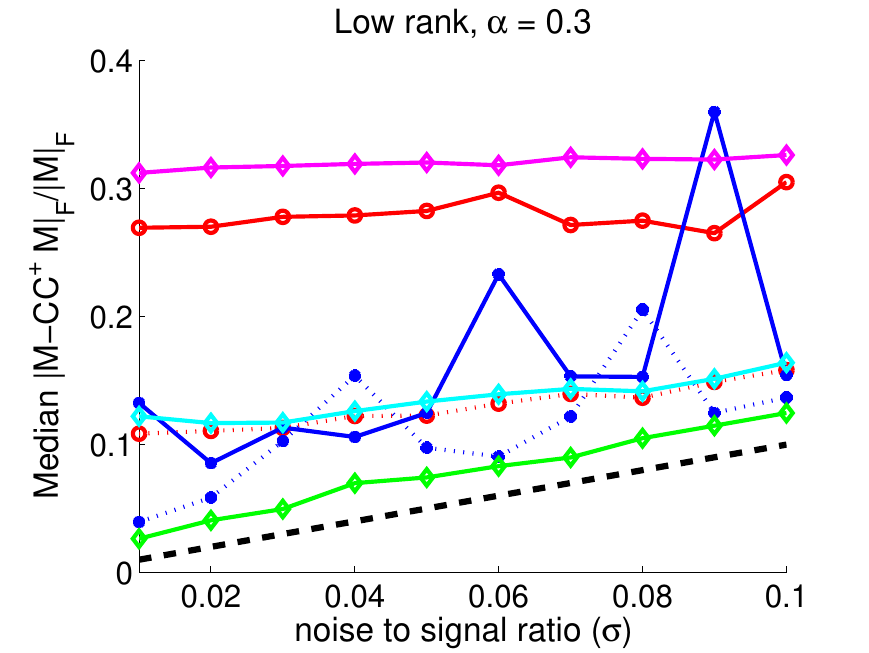}
\includegraphics[width=5cm]{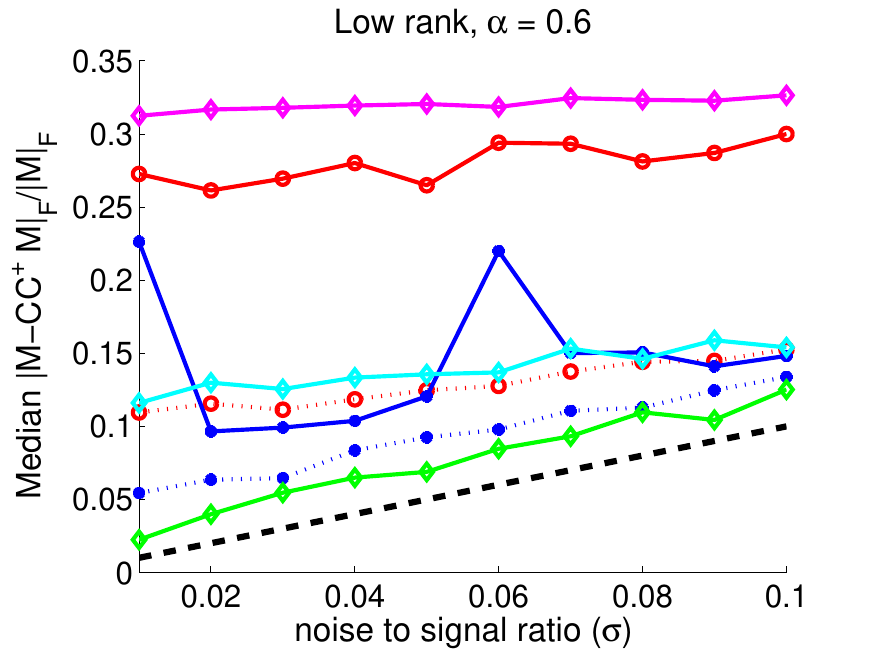}
\includegraphics[width=5cm]{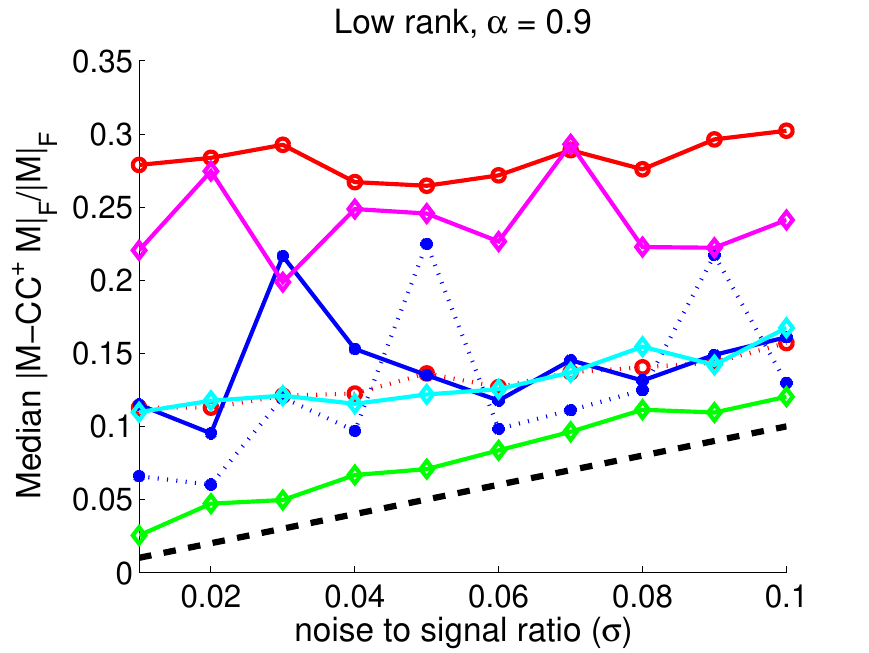}
\includegraphics[width=5cm]{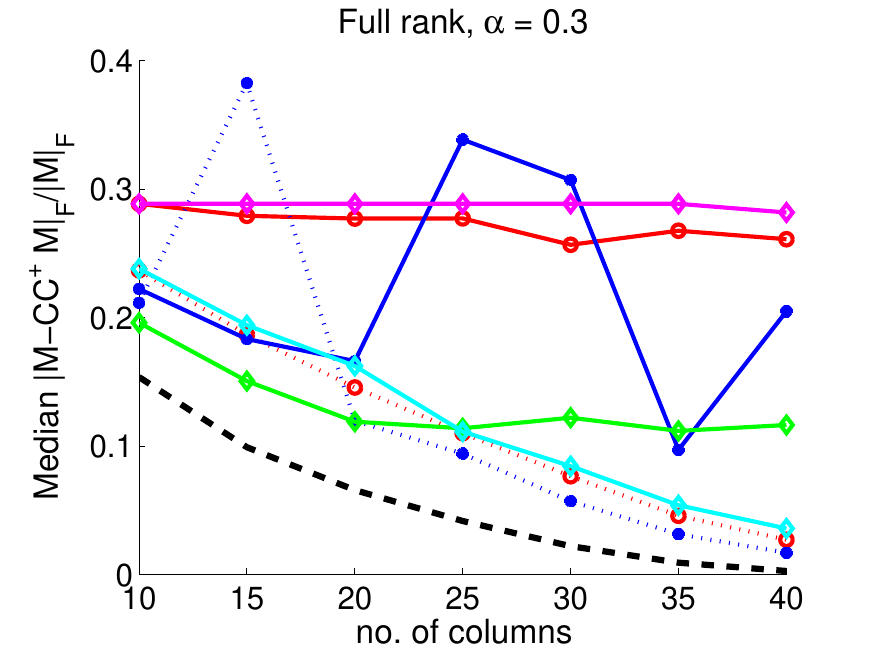}
\includegraphics[width=5cm]{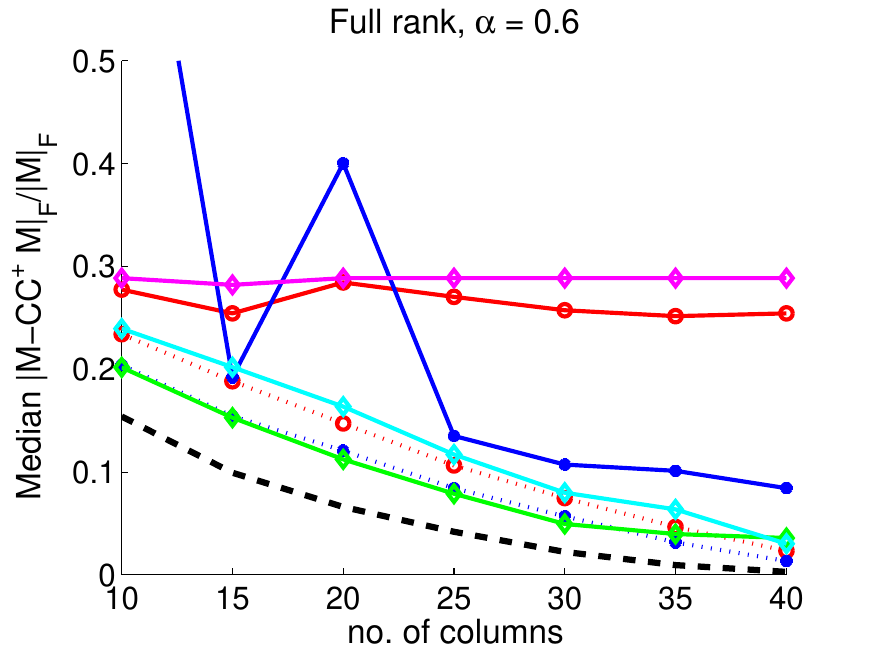}
\includegraphics[width=5cm]{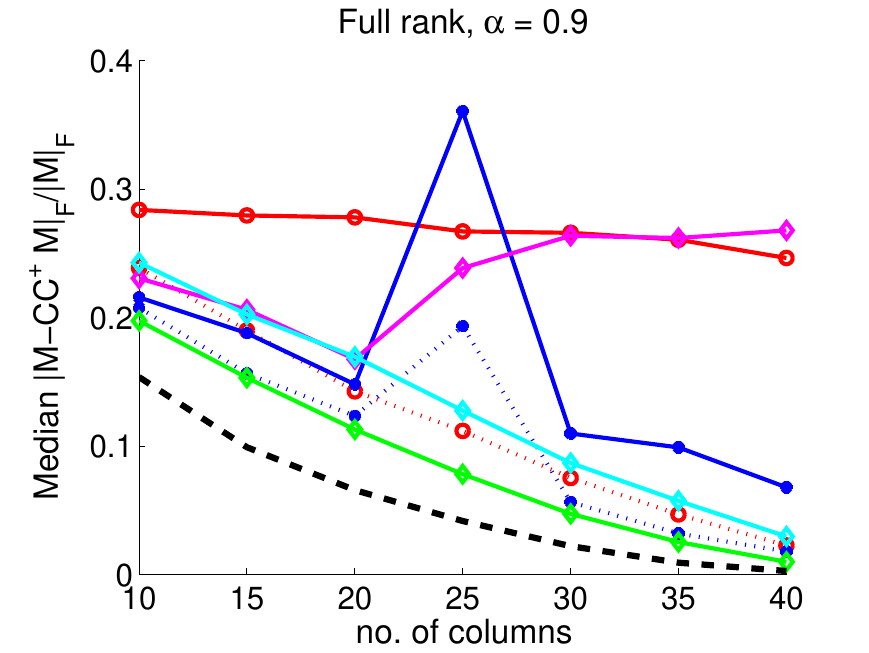}
\caption{Selection error on matrices with coherent columns. Top row: low-rank plus noise inputs, $s=k=15$;
bottom row: full-rank inputs.
$\alpha$ indicates the observation rate.
The black dashed lines denote noise-to-signal ratio $\sigma$ in the first row and $\|\mat M-\mat M_k\|_F$ in the second row.
All algorithms are run for 8 times on each data set and the median error is reported.}
\label{fig_compare_coherent}
\end{figure*}

\begin{figure*}[t]
\centering
\includegraphics[width=12cm]{legend.pdf}
\includegraphics[width=5cm]{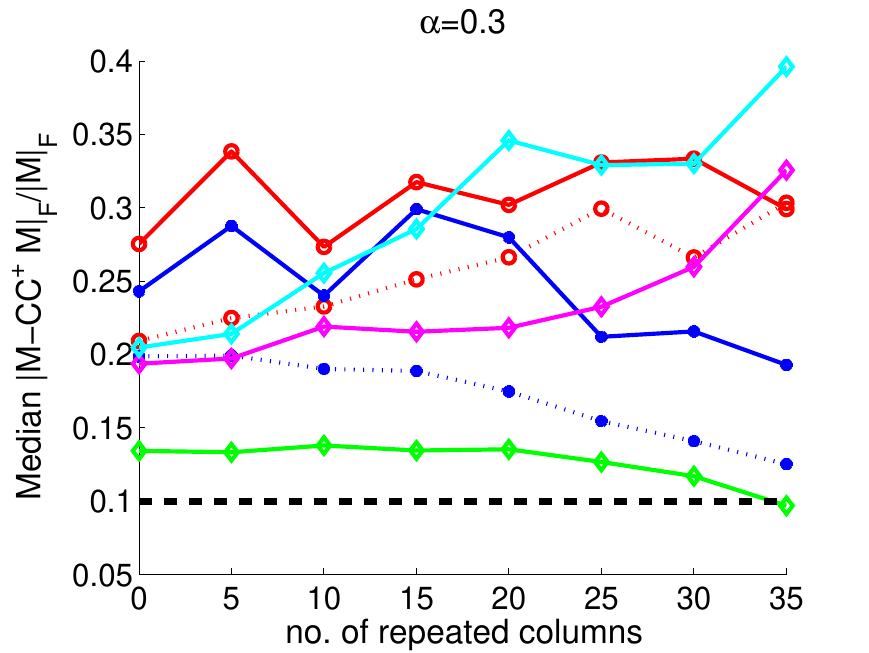}
\includegraphics[width=5cm]{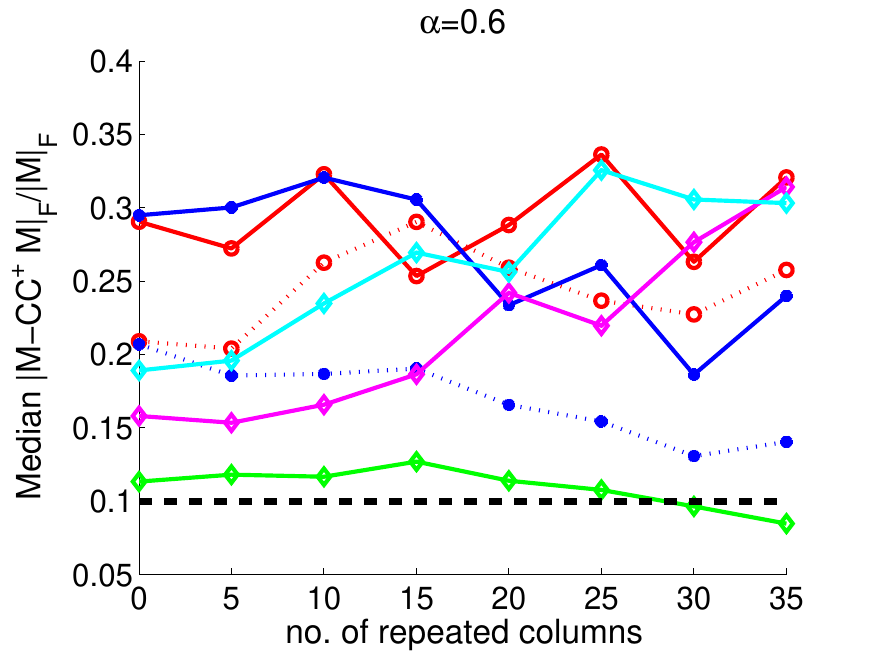}
\includegraphics[width=5cm]{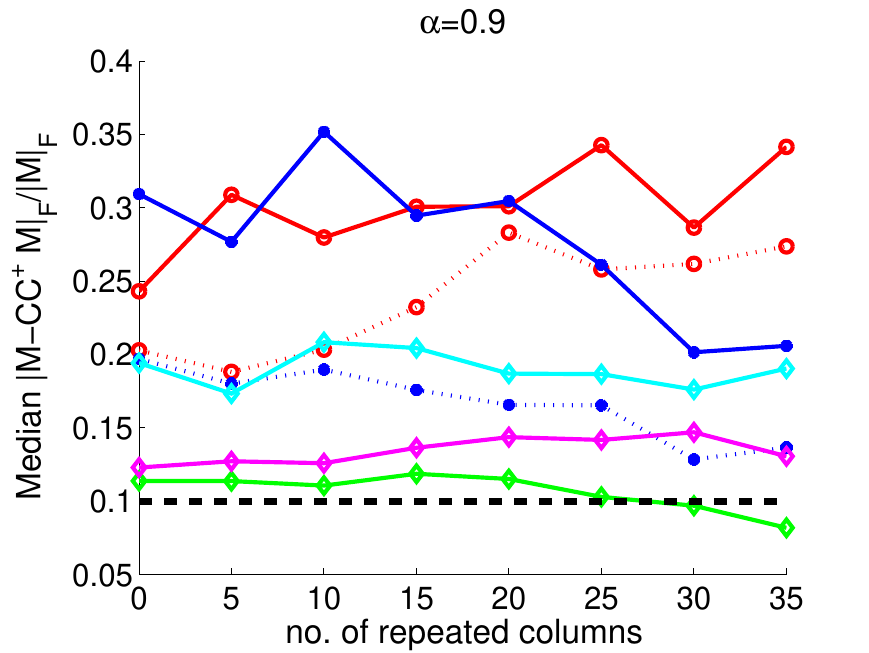}
\caption{Selection error on matrices with varying number of repeated columns.
Both $s$ and $k$ are set to 15 and the noise-to-signal ratio $\sigma$ is set to 0.1.
$\alpha$ indicates the observation rate.
All algorithms are run for 8 times on each data set and the median error is reported.}
\label{fig_compare_repeated}
\end{figure*}

\begin{figure}[t]
\centering
\includegraphics[width=5cm]{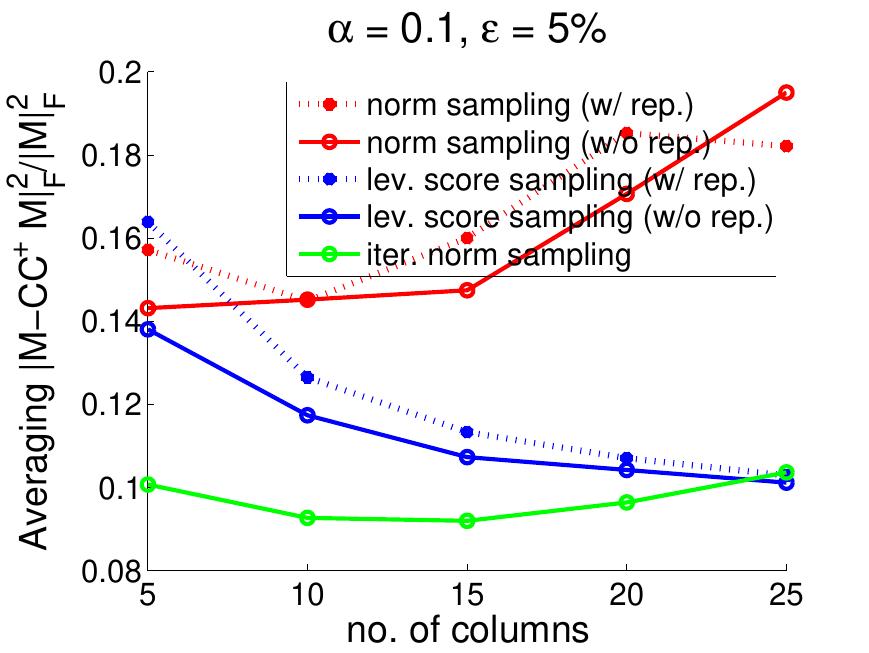}
\includegraphics[width=5cm]{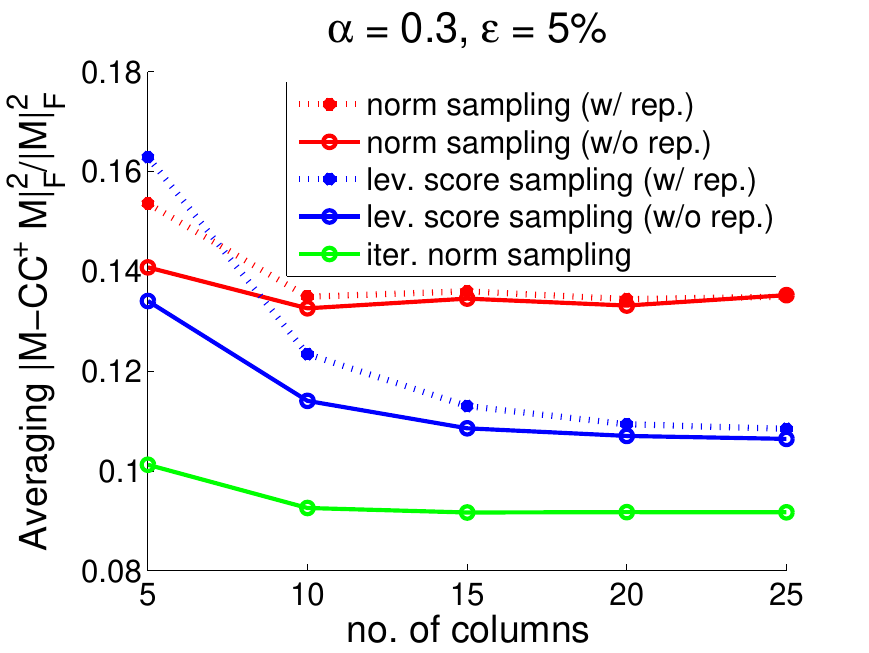}
\includegraphics[width=5cm]{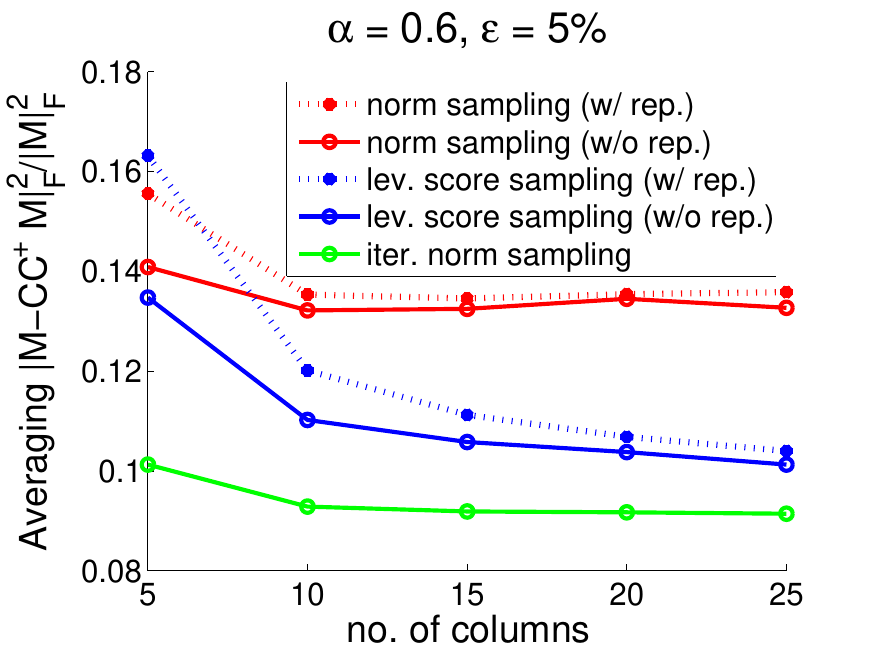}
\includegraphics[width=5cm]{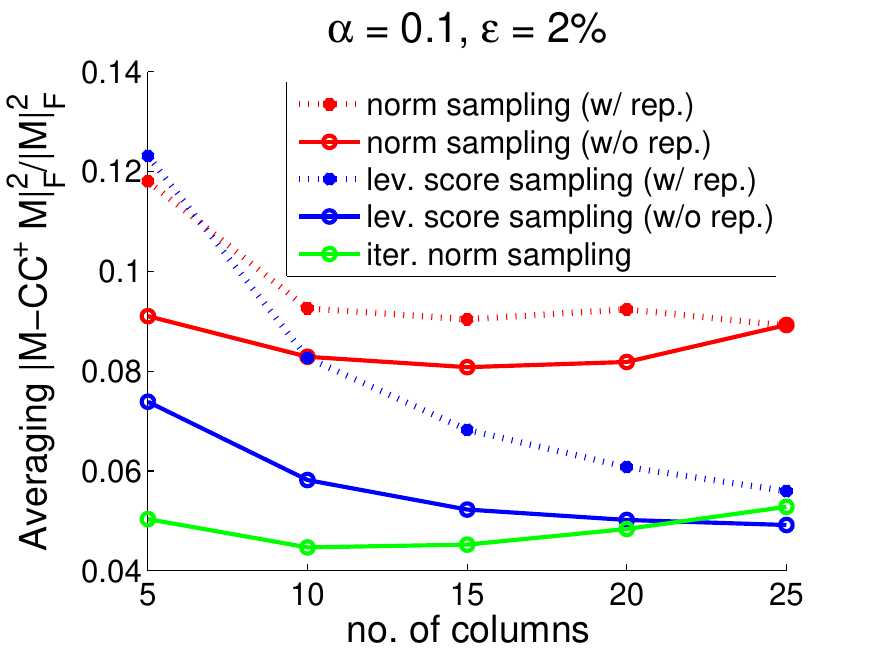}
\includegraphics[width=5cm]{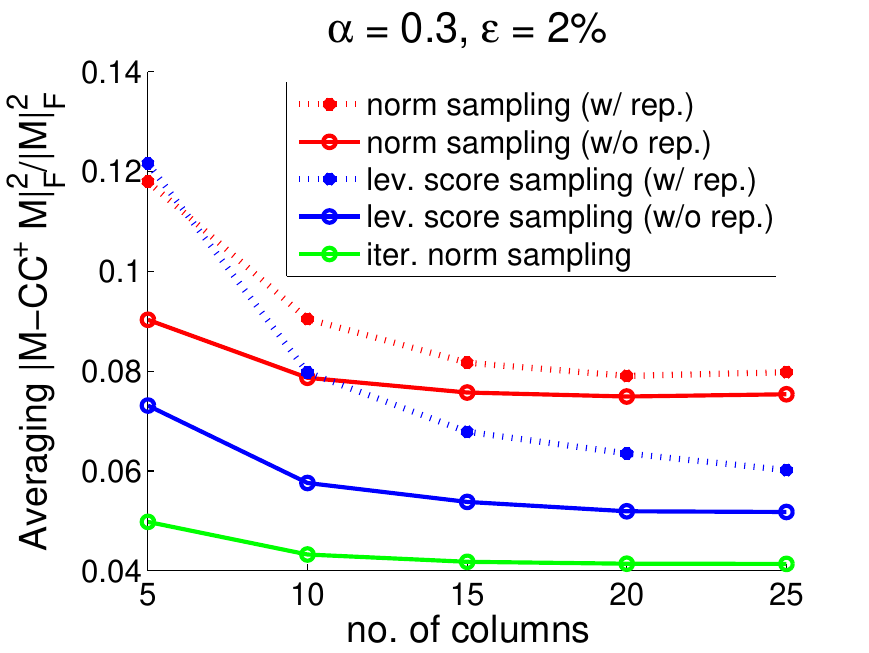}
\includegraphics[width=5cm]{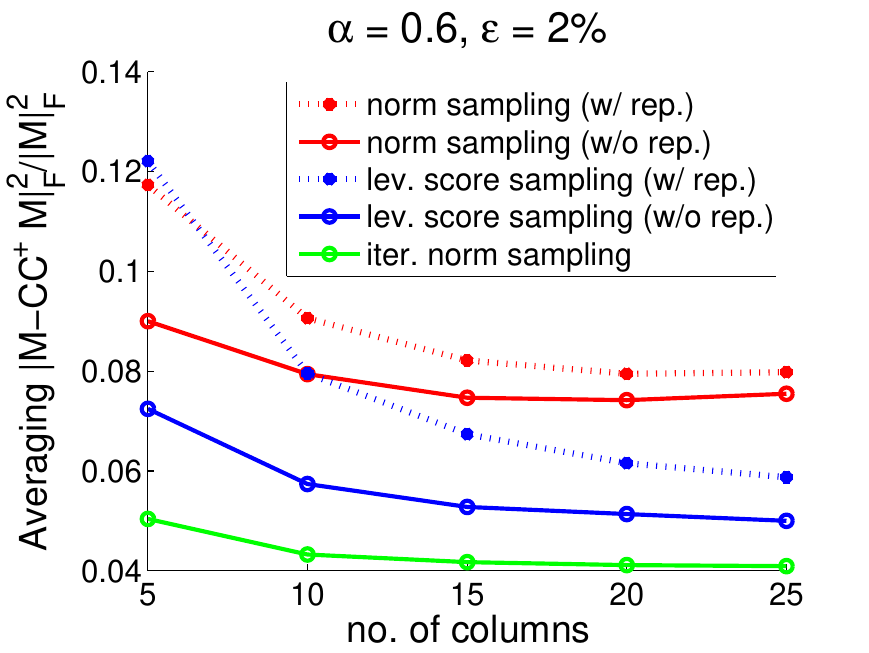}
\caption{Selection error or sampling based algorithm on Hapmap phase2 data set.
$\alpha$ indicates the observation rate.
Top row: top-$k$ PCA captures 95\% variance within each SNP window; bottom row: top-$k$ PCA captures 98\% variance within each SNP window.}
\label{fig_hapmap}
\end{figure}

\begin{figure}[t]
\centering
\includegraphics[width=6cm]{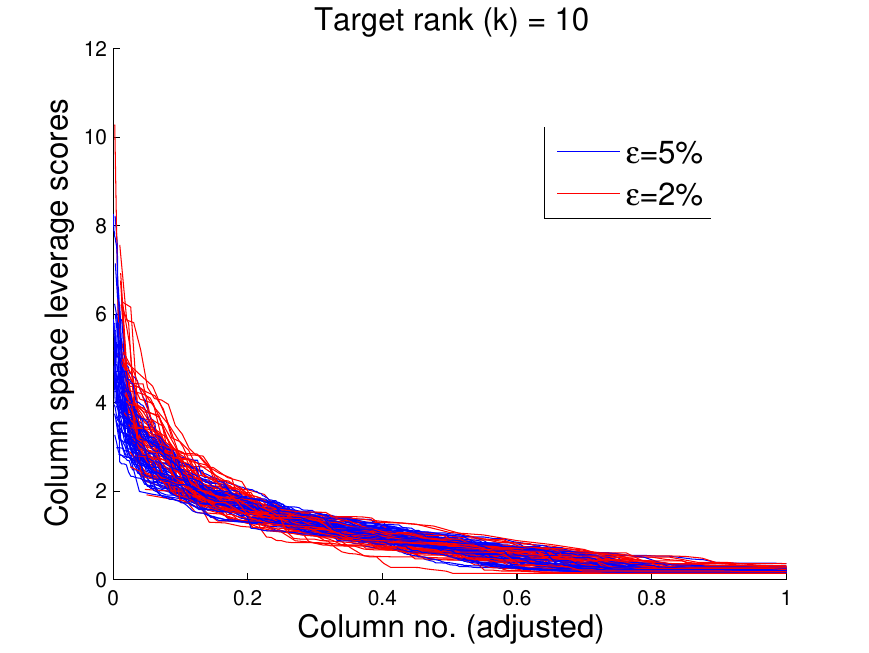}
\includegraphics[width=6cm]{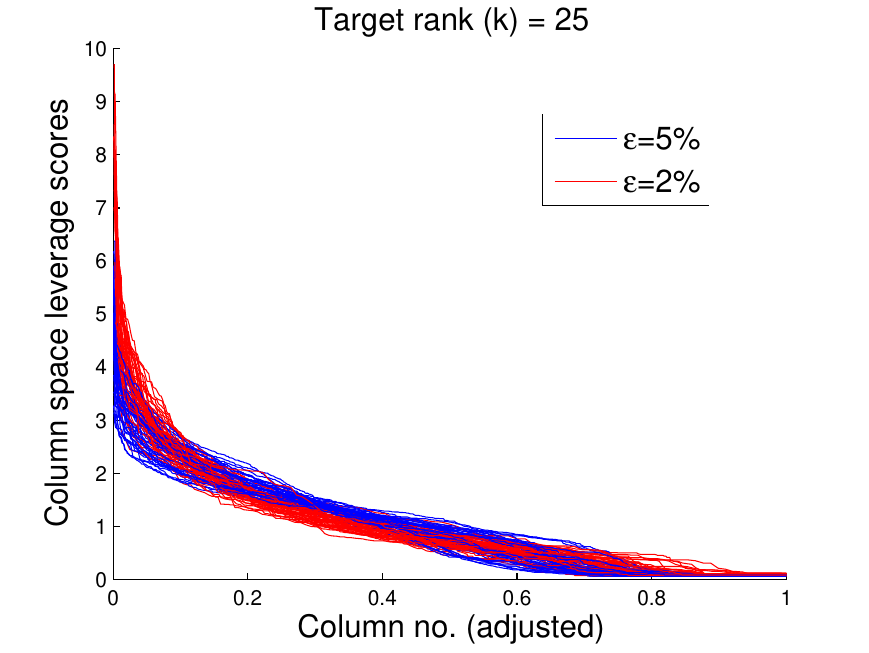}
\caption{Sorted column space leverage scores for different $\varepsilon$ and $k$ settings.
For each setting 50 windows are picked at random and their leverage scores are plotted.
Each plotted line is properly scaled along the X axis so that they have the same length even though actual window sizes vary.}
\label{fig_hapmap_levscore}
\end{figure}

\begin{figure}[p]
\centering
\begin{subfigure}[b]{1.0\textwidth}
\centering
\includegraphics[width=5cm]{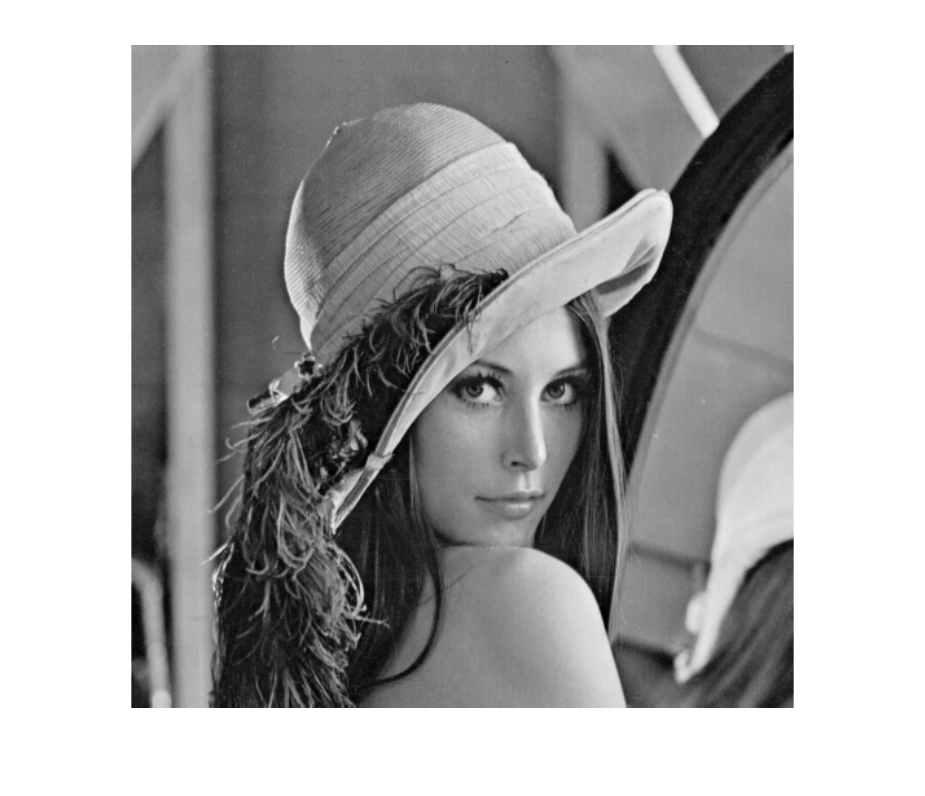}
\caption{The $512\times 512$ 8-bit gray scale Lena test image before compression.}
\label{fig_lena_original}
\end{subfigure}
\begin{subfigure}[b]{1.0\textwidth}
\includegraphics[width=5cm]{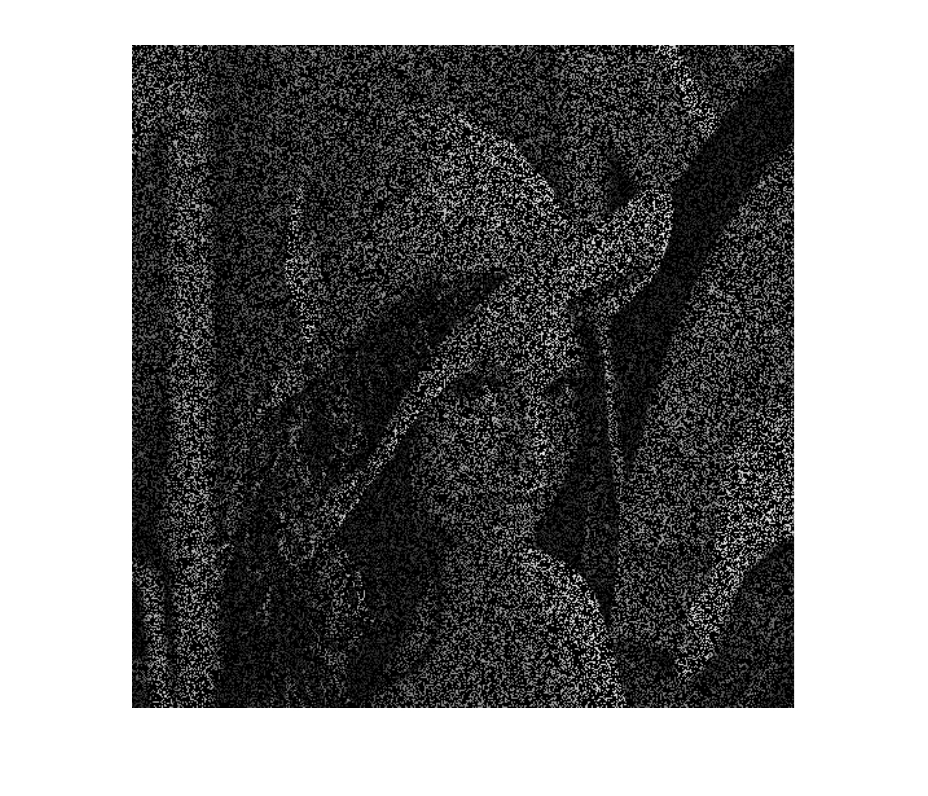}
\includegraphics[width=5cm]{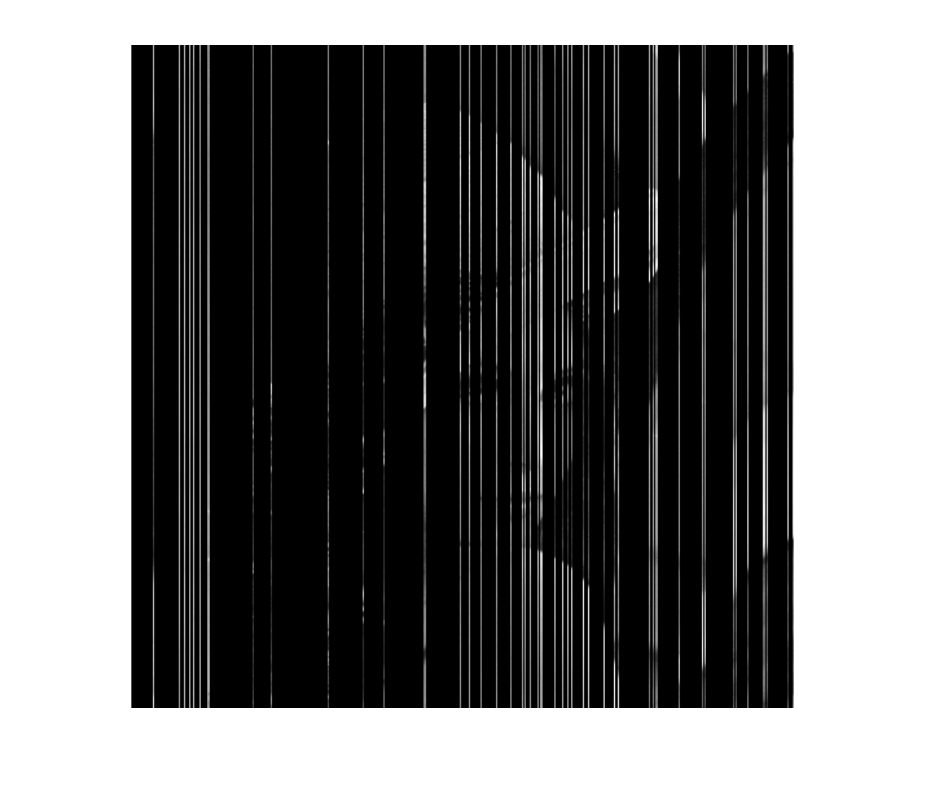}
\includegraphics[width=5cm]{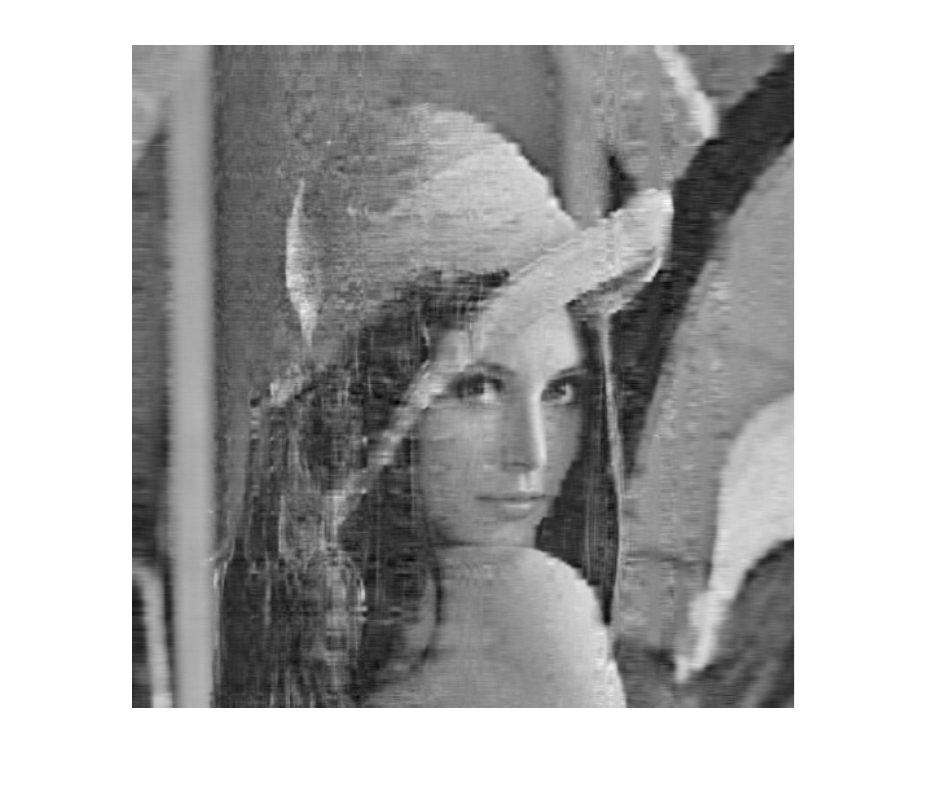}
\caption{Norm sampling (without replacement). Selection error $\|\mat M-\mat C\mat C^\dagger\mat M\|_F/\|\mat M\|_F = 0.106$.}
\label{fig_lena1}
\end{subfigure}
\begin{subfigure}[b]{1.0\textwidth}
\includegraphics[width=5cm]{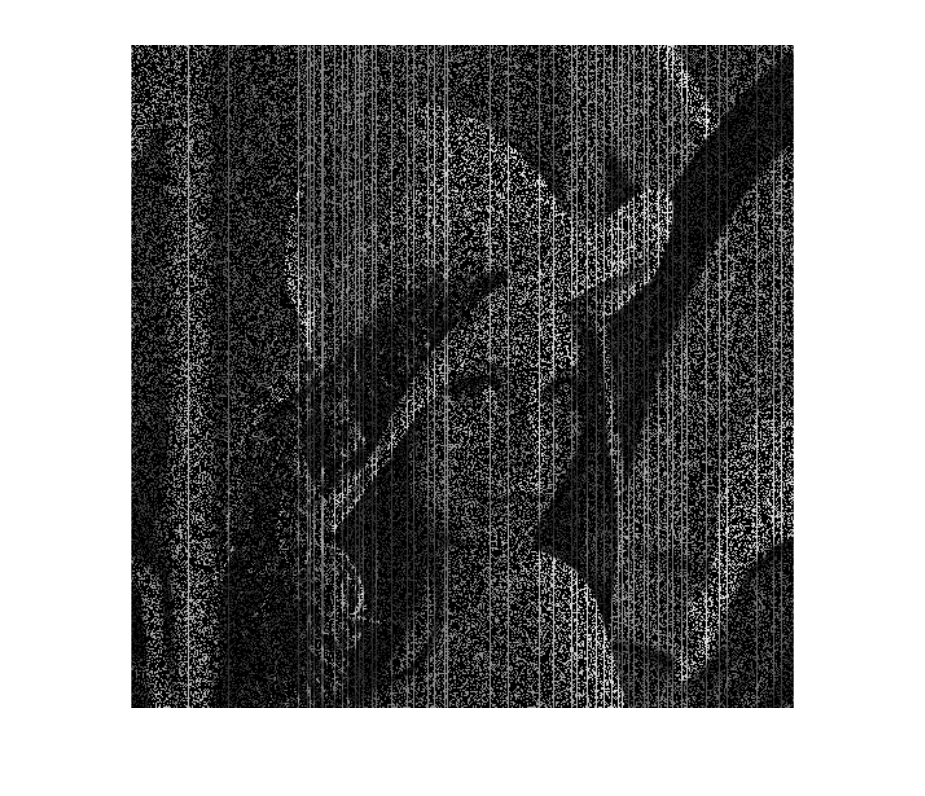}
\includegraphics[width=5cm]{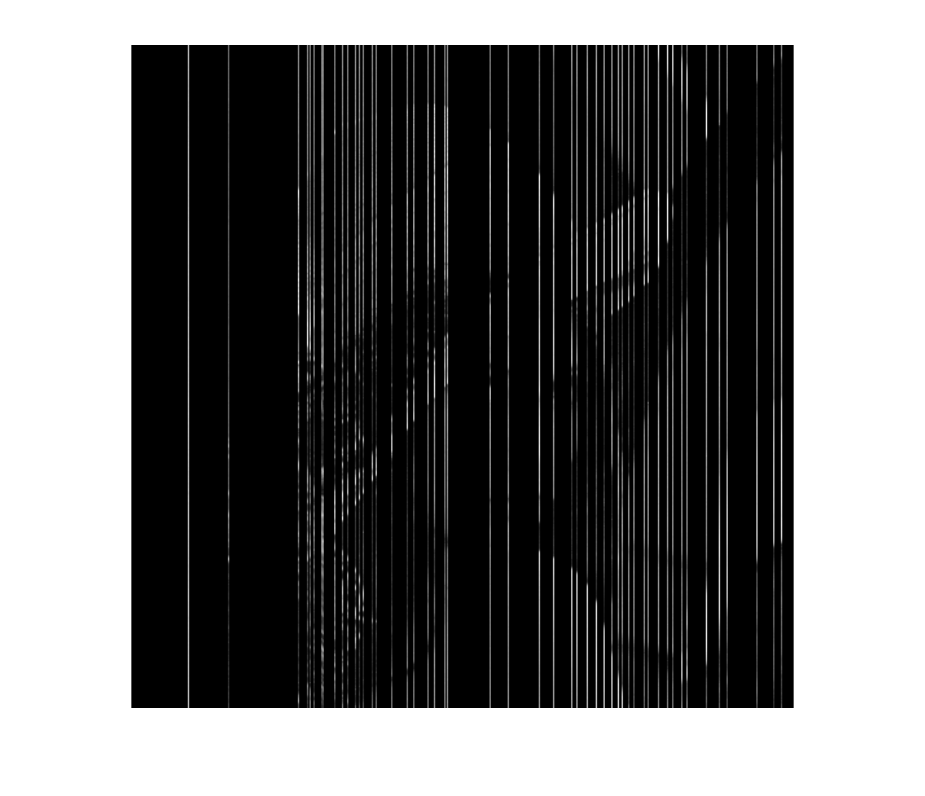}
\includegraphics[width=5cm]{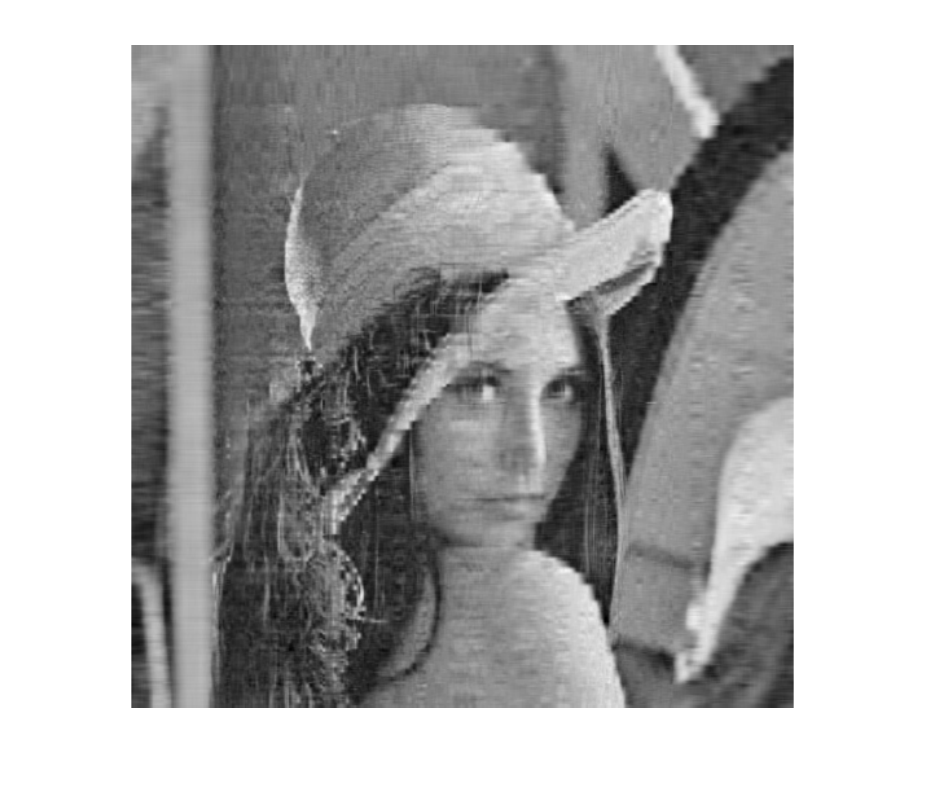}
\caption{Iterative norm sampling. Selection error $\|\mat M-\mat C\mat C^\dagger\mat M\|_F/\|\mat M\|_F = 0.088$.}
\label{fig_lena2}
\end{subfigure}
\begin{subfigure}[b]{1.0\textwidth}
\includegraphics[width=5cm]{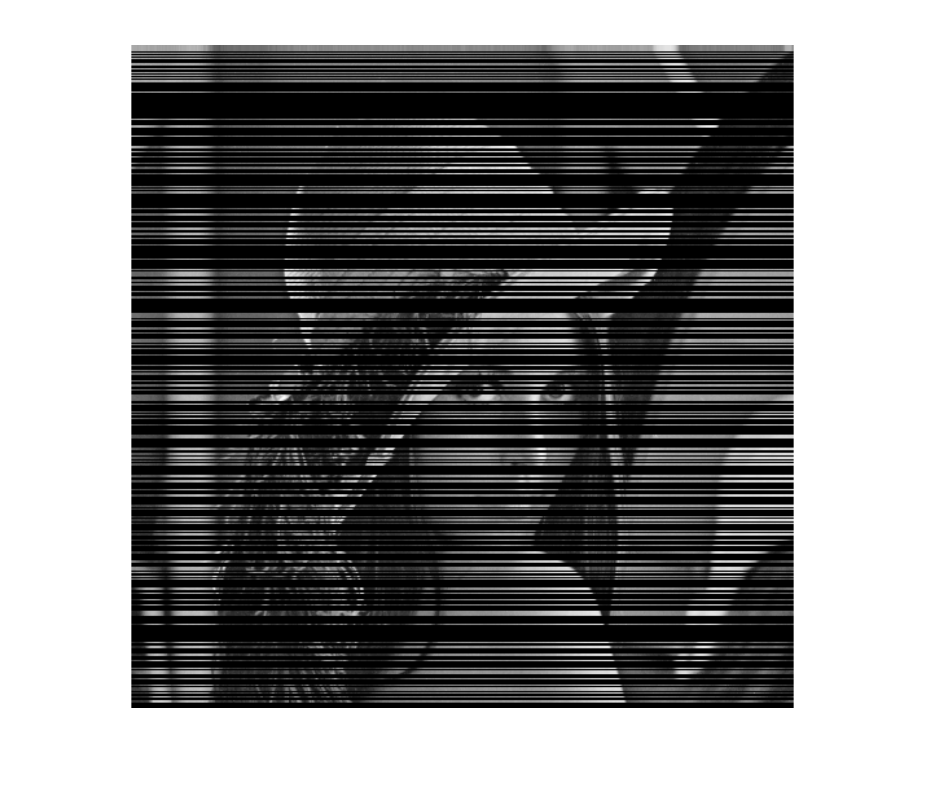}
\includegraphics[width=5cm]{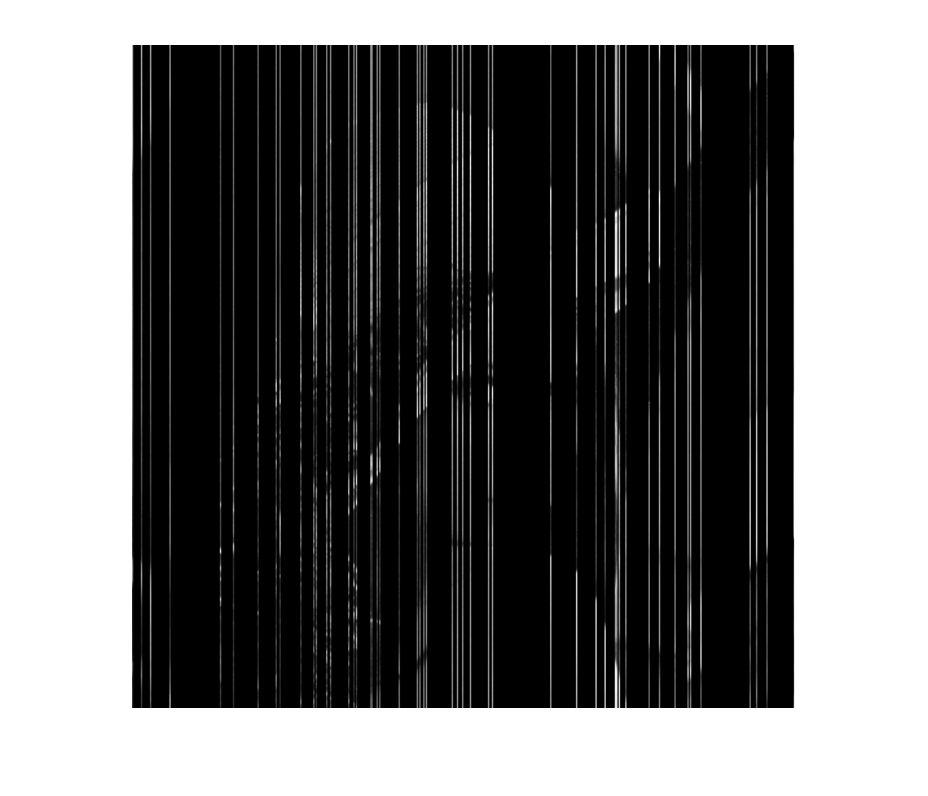}
\includegraphics[width=5cm]{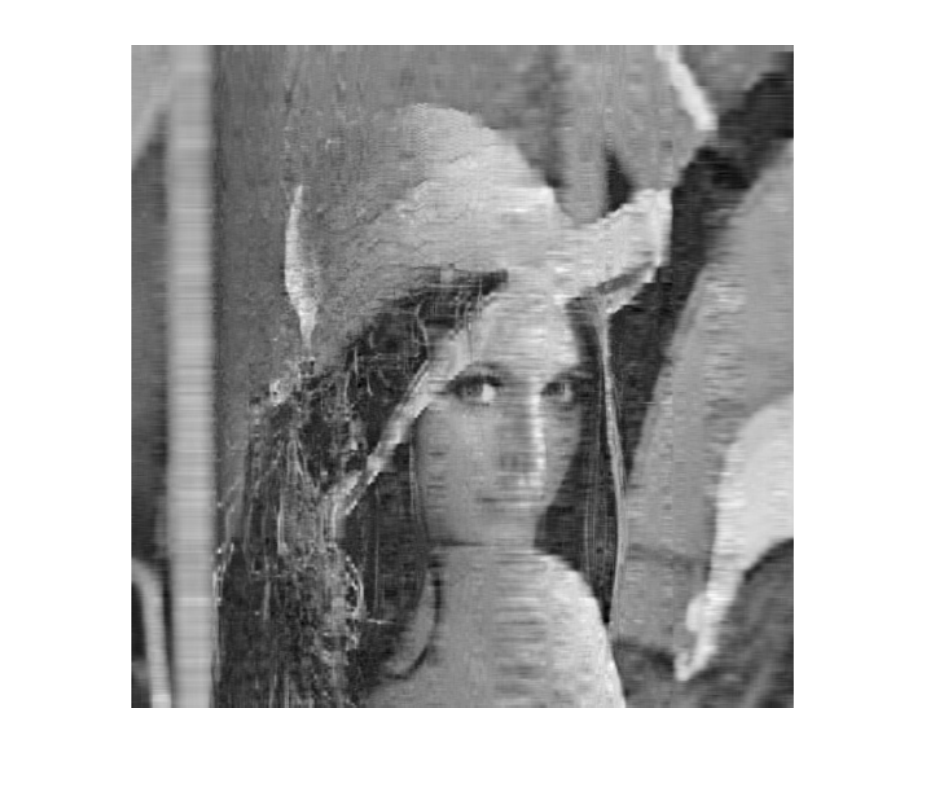}
\caption{Approx. leverage score sampling (without replacement). $\|\mat M-\mat C\mat C^\dagger\mat M\|_F/\|\mat M\|_F = 0.103$.}
\label{fig_lena3}
\end{subfigure}
\caption{Column-based image compression results on the Lena standard test image. 
Left: actively sampled image pixels;
middle: the selected columns;
right: the reconstructed images.
Number of selected columns is set to 50 and the pixel subsampling rate $\alpha$ is set to 0.3.}	
\label{fig_lena}
\end{figure}

\begin{figure}[t]
\centering
\includegraphics[width=5cm]{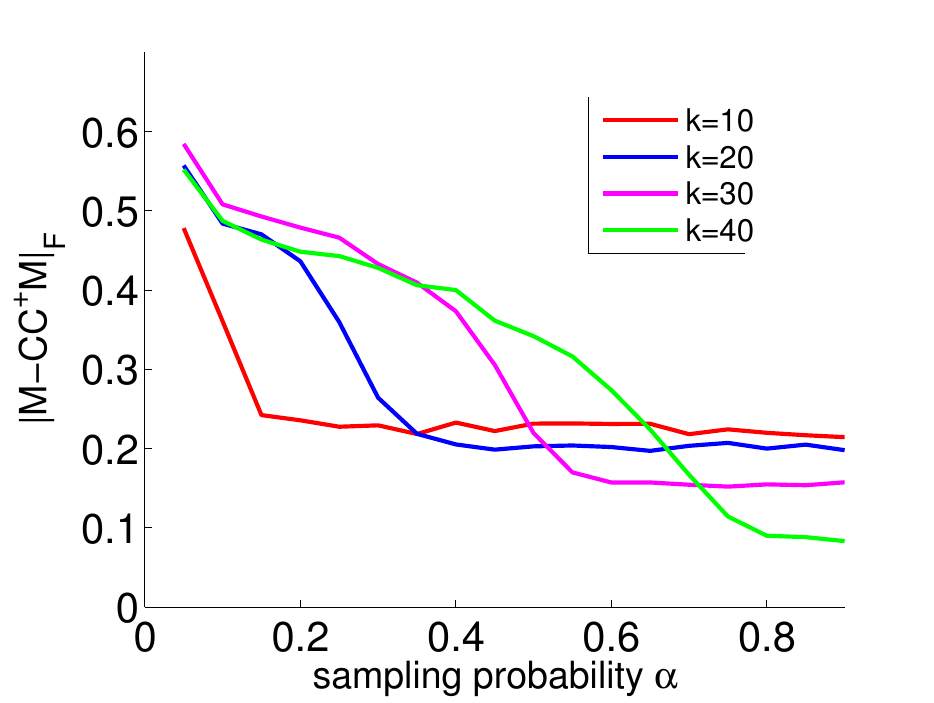}
\includegraphics[width=5cm]{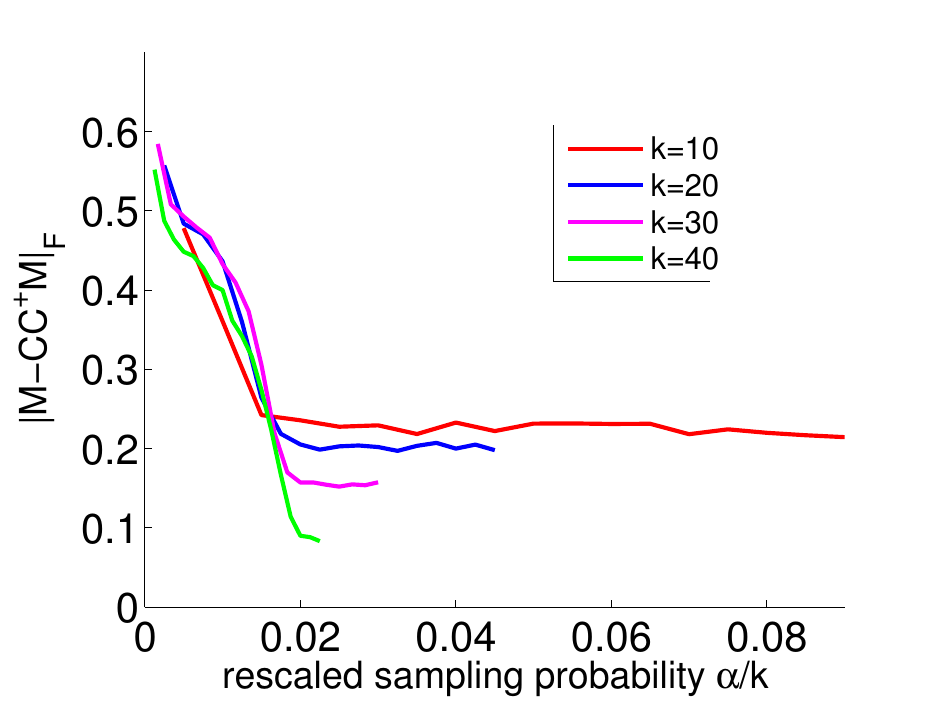}
\includegraphics[width=5cm]{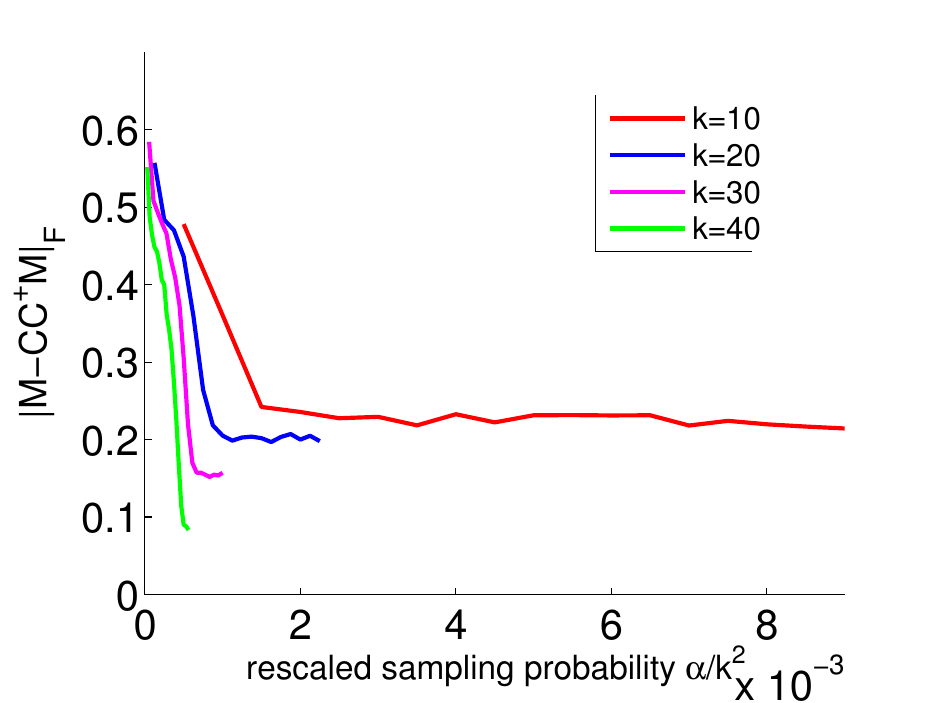}
\caption{Selection error $\|\mat M-\mat C\mat C^\dagger\mat M\|_F$ for the iterative norm sampling algorithm
as a function of $\alpha$ (left), $\alpha/k$ (middle) and $\alpha/k^2$ (right).
Error curves plotted under 4 different rank ($k$) settings.}
\label{fig_dependence_r}
\end{figure}

In this section we report experimental results on both synthetic and real-world data sets for our proposed column subset selection algorithms
as well as other competitive methods.
All algorithms are implemented in Matlab.
To make fair comparisons, all input matrices $\mat M$ are normalized so that $\|\mat M\|_F^2=1$.

\subsection{Synthetic data sets}

We first test the proposed algorithms on synthetic data sets.
The input matrix has dimension $n_1=n_2=n=50$.
To generate the synthetic data, we consider two different settings listed below:
\begin{enumerate}
\item \textbf{Random Gaussian matrices}: for random Gaussian matrices each entry $\mat M_{ij}$ are i.i.d. sampled from a normal distribution
$\nml(0,1)$. 
For low rank matrices, we first generate a random Gaussian matrix $\mat B\in\mathbb R^{n\times k}$ where $k$ is the intrinsic rank
and then form the data matrix $\mat M$ as $\mat M=\mat B\mat B^\top$.
I.i.d. Gaussian noise $\mat R$ with $\mat R_{ij}\sim\nml(0,\sigma^2)$ is then appended to the synthesized low-rank matrix.
We remark that data matrices generated in this manner have both incoherent column and row space with high probability.

\item \textbf{Matrices with coherent columns}: 
we took a simple procedure to generate matrices with coherent columns in order to highlight the power of proposed algorithms and baseline methods.
After generating a random Gaussian matrix $\mat M=\mat B\mat B^\top$, we pick a column $\vct x$ from $\mat M$ uniformly at random.
We then take $\tilde{\vct x}=10\vct x$ and repeat the column for 5 times.
As a result, the newly formed data matrix will have 5 identical columns with significantly higher norms compared to the other columns.
\end{enumerate}

In Figure \ref{fig_compare_random} we report the selection error $\|\mat M-\mat C\mat C^\dagger\mat M\|_F$
of proposed and baseline algorithms on random Gaussian matrices and in Figure \ref{fig_compare_coherent}
we report the same results on matrices with coherent columns.
Results on both low-rank plus noise and high-rank inputs are reported.
For low-rank matrices, both the intrinsic rank $k$ and the number of selected columns $s$ are set to 15.
Each algorithm is run for 8 times on the same input and the median selection error is reported.
For norm sampling and approximate leverage score sampling, 
we implement two variants: in the \emph{sampling with replacement} scheme the algorithm samples each column from a sampling distribution
(based on either norm or leverage score estimation) with replacement;
while in the \emph{sampling without replacement} scheme a column is never sampled twice.
Note that all theoretical results in Section \ref{sec:main_result} are proved for sampling with replacement algorithms.

From Figure \ref{fig_compare_random} we observe that all algorithms perform similarly,
with the exception of two sampling with replacement algorithms and iterative norm sampling when both rank and missing rate are high.
\footnote{We discuss on the poor performance of with replacement algorithms in Section \ref{subsec:replacement}.}
For the latter case, we conjecture that the degradation of performance is due to inaccurate norm estimation of column residues;
in fact, the iterative norm sampling only provably works when the input matrix has a low-rank plus noise structure (see Theorem \ref{thm_relative_cx}).
On the other hand, when either the target rank or the missing rate is not too high iterative norm sampling works just as good;
it is particularly competitive when the true rank of the input matrix is low (see the top row of Figure \ref{fig_compare_random}).

When the input matrix has coherent columns, as shown in Figure \ref{fig_compare_coherent},
it becomes easier to observe performance gaps among different algorithms.
The block OMP algorithm completely fails in such cases and the selection error for group Lasso also increases considerably.
This is due to the fact that both algorithms observe matrix entries by sampling uniformly at random and hence
could be poorly informed when the underlying matrix is highly coherent.
On the other hand, both leverage score sampling and iterative norm sampling are more robust to column coherence.
The coherence among columns also makes the separation between norm sampling and volume sampling clearer in Figure \ref{fig_compare_coherent}.
In particular, there is a significant gap between the two sampling with replacement curves and 
the norm sampling algorithm degrades to its worst-case additive error bound (see Theorem \ref{thm_additive_cx}).
The gap between the sampling without replacement curves is smaller since the coherent column is only repeated for 5 times in the design
and so an algorithm can not be ``too wrong'' if it samples columns without replacement.

To further investigate how the proposed and baseline algorithms adapt to different levels of coherence,
we report in Figure \ref{fig_compare_repeated} the selection error on noisy low-rank matrices with varying number of repeated columns.
Matrices with more repeated columns have higher coherence level.
We can see that there is a clear separation of two groups of algorithms:
the first group includes norm sampling, block OMP and group Lasso,
whose error increases as the matrix becomes more coherent.
Also, design matrix assumptions (e.g., restricted isometry) are violated for group Lasso.
This suggests that these algorithms only have additive error bounds, or adapt poorly to column coherence of the underlying data matrix.
On the other hand, the selection error of volume sampling and iterative norm sampling remains stable or slightly decreases.
This is consistent with our theoretical results that both volume sampling and iterative norm sampling enjoy relative error bounds.



\subsection{Application to tagging Single Nucleotide Polymorphisms (tSNPs) selection}\label{sec:snp}

\begin{table}[t]
\centering
\caption{Averaging SNP window sizes for different $\varepsilon$ values and number of selected columns per window.}
\begin{tabular}{cccccc}
\hline
\abovespace\belowspace
& \textsc{5 columns}& \textsc{10 columns}& \textsc{15 columns}&\textsc{20 columns}& \textsc{25 columns}\\
\hline
\abovespace
$\varepsilon=95\%$& 63.4& 248.9& 516.3& 891.0& 1405.7\\
\belowspace
$\varepsilon=98\%$& 18.8& 62.1& 123.4& 203.8& 309.7\\
\hline
\end{tabular}
\label{tab_window_length}
\end{table}

We apply our proposed methods on real-world genetic data sets.
We consider the tagging Single Nucleotide Polymorphisms (tSNP) selection task as described in \citep{tsnp-selection,tsnp-selection-css}.
The task aims at selecting a small set of SNPs in human genes such that the selected SNPs (called tagging SNPs) capture the genetic information 
within a specific genome region.
More specifically, given an $n_1\times n_2$ matrix with each row corresponding to the genome expression for an individual,
we want to select $k$ columns (typically $k\ll n_2$) corresponding to $k$ tagging SNPs that best capture the entire SNP matrix across different individuals.
Matrix column subset selection methods have been successfully applied to the tSNP selection problem \citep{tsnp-selection-css}.

In this section we demonstrate that our proposed algorithms could achieve the same objective while allowing many missing entries in the raw data matrix.
We also compare the selection error of the proposed methods under different missing rate and number of tSNP settings.
We did not apply Block OMP and group Lasso because the former cannot handle coherent data matrices and the latter does not scale well.
The data set we used is the HapMap Phase 2 data set \citep{hapmap}.
For demonstration purposes, we use gene data for the first chromosome of  a joint east Asian population consisting of Han Chinese in Beijing (CHB)
and Japanese in Tokyo (JPT).
The data matrix consists of 89 rows (individuals) and 311,854 columns (SNPs).
Each matrix entry has two letters $b_1b_2$ describing a specific gene expression for an individual.

We follow the same step as described in \citep{snp-pca} to preprocess the data.
We first convert the raw data matrix into a numerical matrix $\mat M$ with +1/0/-1 entries as follows:
let $B_1$ and $B_2$ be the bases that appear for the $j$th SNP.
Fix an individual $i$ with its gene expression $b_1b_2$.
If $b_1b_2=B_1B_1$ then $\mat M_{ij}$ is set to -1;
else if $b_1b_2 = B_2B_2$ then $\mat M_{ij}$ is set 1;
otherwise $\mat M_{ij}$ is set to 0.
We further split the SNPs into multiple consecutive ``windows'' so that within each window $w$
the SVD reconstruction error $\|\mat M^{(w)}-\mat M^{(w)}_k\|_F^2/\|\mat M^{(w)}\|_F^2$
is no larger than $\varepsilon$ with $\varepsilon$ set to 5\% and 2\%.
We refer the readers to Figure 1 in \citep{snp-pca} for details of the preprocessing steps.
Averaging window length (i.e., number of SNPs within each window) are shown in Table \ref{tab_window_length}
for different $k$ and $\varepsilon$ settings.
After preprocessing, column subset selection algorithms are performed for each SNP window and the selection error
is averaged across all windows, as reported in Figure \ref{fig_hapmap}.
The number of selected columns per window ($k$) ranges from 5 to 25
and the sampling budget $\alpha$ ranges from 10\% to 60\%.

In Figure \ref{fig_hapmap} we observe that iterative norm sampling and approximate leverage score sampling
outperform norm sampling by a large margin.
This is because the truncated data matrix within each window is very close to an exact low-rank matrix
and hence relative error algorithms achieve much better performance than additive error ones.
In addition, approximate leverage score sampling significantly outperforms norm sampling under both the with replacement
and without replacement schemes.
This shows that the heterogeneity of human SNPs cannot be captured merely by their norms
because the norm is simply the proportion of heterozygous within a population and provides little information about its importance across the entire chromosome.
The spikiness of leverage score distribution is empirically verified in Figure \ref{fig_hapmap_levscore}.
Finally, we remark that sampling without replacement is much better than sampling with replacement and should always be preferred in practice.
We discuss this aspect in Section \ref{subsec:replacement}.

\subsection{Application to column-based image compression}\label{sec:lena}

In this section we show how active sampling can be applied to column-based image compression without observing entire images.
Given an image, we first actively subsample a small number of pixels from the original image.
We then select a subset of columns based on the observed pixels and reconstruct the entire image by projecting each column to the space
spanned by the selected column subsets.

In Figure \ref{fig_lena} we depicted the final compressed image as well as intermediate steps (e.g., subsampled pixels and selected columns)
on the $512\times 512$ 8-bit gray scale Lena standard test image.
We also report the mean and standard deviation of selection error across 10 runs under different settings of target column subset sizes in Table \ref{tab_lena}.

Table \ref{tab_lena} shows that the iterative norm sampling algorithm consistently outperforms norm sampling and so is the leverage score sampling method
when the target column subset size is large, which implies small oracle error $\|\mat M-\mat M_k\|_F^2$.
To get an intuitive sense of why this is the case, we refer the readers to the selected columns for each of the sampling algorithm as shown in Figure \ref{fig_lena} (the middle column).
It can be seen that the norm sampling algorithm (Figure \ref{fig_lena1}) oversamples columns in relatively easy regions (e.g., the white bar on the left side and the smooth part of the face)
because these regions have large pixel values (i.e., they are whiter than the other pixels) and hence have larger column norms.
In contrast, the iterative norm sampling algorithm (Figure \ref{fig_lena2})
focuses most sampled columns on the tassel and hair parts which are complicated and cannot be well approximated by other columns.
This shows that the iterative norm sampling method has the power to adapt to highly heterogeneous columns and produce better approximations.
Finally, we remark that though both leverage score sampling and iterative norm sampling have relative error guarantees,
in practice the iterative norm sampling performs much better than leverage score sampling for matrices whose rank is not very high.

\section{Discussion}\label{sec:discussion}

We discuss on several aspects of the proposed algorithms and their analysis.

\subsection{Limitation of passive sampling}\label{subsec:passive}

\begin{table}[t]
\centering
\caption{Relative selection error $\|\mat M-\mat C\mat C^\dagger\mat M\|_F/\|\mat M\|_F$ on the standard Lena test image ($512\times 512$) for
 norm sampling (\textsc{Norm}), iterative norm sampling (\text{Iter. norm}) and approximate leverage score sampling (\textsc{Lev. score}).
 Results also compared to a uniform sampling baseline (\textsc{Uniform}) and the truncated SVD lower bound (\textsc{SVD}).
 The percentage of observed entries $\alpha$ is set to $\alpha=30\%$.
 Number of columns used for reconstruction varies from 25 to 100.}
\vskip -0.05cm
\begin{tabular}{cccccc}
\hline
\abovespace\belowspace
& \textsc{Uniform}& \textsc{Norm}& \textsc{Iter. norm}& \textsc{Lev. score}& \textsc{SVD}\\
\hline
\abovespace
\textsc{25 columns}& $.151\pm .009$& $.147\pm .004$& $\mathbf{.136\pm .004}$& $.148\pm .007$& $.092$\\
\textsc{50 columns}& $.104\pm .004$ & $.103\pm .003$& $\mathbf{.092\pm .001}$& $.105\pm .003$& $.059$\\
\belowspace
\textsc{100 columns}& $.064\pm .002$& $.065\pm .001$& $\mathbf{.053\pm .001}$& $.061\pm .002$& $.032$\\
\hline
\end{tabular}
\label{tab_lena}
\end{table}

In most cases the observed entries of a partially observable matrix are sampled according to some sampling schemes.
We say a sampling scheme is \emph{passive} when the sampling distribution (i.e., probability of observing a particular matrix entry) is fixed a priori
and does not depend on the data matrix.
On the other hand, an \emph{active} sampling scheme adapts its sampling distribution according to previous observations and requests
unknown data points in a feedback driven way.
We mainly focus on active sampling methods in this paper (both Algorithm \ref{alg_additive_cx} and \ref{alg_relative_cx} perform active sampling).
However, Algorithm \ref{alg_lscore_cx} only requires passive sampling because the sampling distribution of rows is the uniform distribution
and is fixed a priori.

Passive sampling is known to work poorly for coherent matrices \citep{power-adaptivity,complete-any-matrix}.
In this section, we make the following three remarks on the power of passive sampling for column subset selection:

\paragraph{Remark 1}
The $\|\mat M-\mat C\mat X\|_{\xi}$ reconstruction error bound for column subset selection is hard for passive sampling.
In particular, it can be shown that no passive sampling algorithm achieves relative reconstruction error bound with high probability
unless it observes $\Omega(n_1n_2)$ entries of an $n_1\times n_2$ matrix $\mat M$.
This holds true even if $\mat M$ is assumed to be exact low rank and has incoherent column space.

This remark can be formalized by noting that when $\mat M$ is exact low rank then relative reconstruction error implies exact recovery of $\mat M$,
or in other words, matrix completion.
Here we cite the hardness result in \citep{power-adaptivity} for completing coherent matrix by passive sampling.
Similar results could also be obtained by applying Theorem 6 in \citep{complete-any-matrix}.
\begin{thm}[Theorem 2, \citep{power-adaptivity}]
Let $\mathcal X$ denote all $n_1\times n_2$ matrices whose rank is no more than $k$ and column space has incoherence $\mu_0$ as defined in Eq. (\ref{eq_mu_subspace}).
Fix $m<n_1n_2$ and let $\mathcal Q$ denote all passive sampling distributions over $m$ samples of $n_1n_2$ matrix entries.
Let $\mathcal F=\{f:\mathbb R^m\to\mathcal X\}$ be the collection of (possibly random) matrix completion algorithms.
We then have
\begin{equation}
R_{\complete}^* := \inf_{f\in\mathcal F}\inf_{q\in\mathcal Q}\sup_{\mat X\in\mathcal X}\Pr_{\Omega\sim q;f}[f(\Omega, \mat X_{\Omega}) \neq \mat X] \geq \frac{1}{2}-\left\lceil\frac{m}{(1-\frac{k-1}{k\mu_0})n_1}\right\rceil\frac{1}{2(n-k)},
\end{equation}
where $n=\max(n_1,n_2)$.
As a remark, when $\mu_0$ is a constant then $R_{\complete}^*=\Omega(1)$ whenever $m=o(n_1(n_2-k))$.
\label{thm_mat_complete}
\end{thm}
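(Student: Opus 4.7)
The plan is to prove the lower bound via the classical packing-plus-testing argument that underlies most minimax results for coherent matrix completion. The core observation is that a passive sampling distribution $q\in\mathcal Q$ must commit to an observation pattern independently of the matrix, so a family of matrices in $\mathcal X$ that differ only in a small, precisely located block of entries will force every passive scheme to be uninformative about most members of the family whenever $m$ is small.

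First I would construct the hard packing. The goal is to exhibit a subspace $\mathcal U\subseteq\mathbb R^{n_1}$ of dimension $k$ with $\mu(\mathcal U)\le\mu_0$ such that, for each $j\in\{k+1,\ldots,n_2\}$ and each bit $b\in\{0,1\}$, there is a matrix $\mat X_j^{(b)}\in\mathcal X$ with column space $\mathcal U$ having the following property: all matrices in the family agree on their first $k$ columns (which pin down $\mathcal U$) and on all columns other than the $j$-th, while $\mat X_j^{(0)}$ and $\mat X_j^{(1)}$ differ only on a prescribed ``flexible'' row set $T\subseteq[n_1]$ inside the $j$-th column. Calibrating $\mathcal U$ to split its row norms between a ``spiky'' anchor subset (carrying $k-1$ orthonormal coordinate-style directions saturating $\|\mat U^\top\vct e_i\|_2^2 = k\mu_0/n_1$) and a ``flat'' direction supported uniformly on the complementary set forces $|T|=(1-\tfrac{k-1}{k\mu_0})n_1$; this is where the peculiar factor in the theorem emerges. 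The two candidates can then be taken as $\pm$ the flat direction on the $j$-th column, producing a pair that differs only on the distinguishing block $T\times\{j\}$.

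Next I would analyze any fixed $q\in\mathcal Q$ by a pigeonhole argument. The expected number of sampled entries inside $\bigcup_{j=k+1}^{n_2}T\times\{j\}$ is at most $m$ by linearity of expectation, so the number of indices $j$ for which the observation set $\Omega$ contains some entry in $T\times\{j\}$ is at most $\lceil m/|T|\rceil$. Hence, if $j$ is drawn uniformly at random from $\{k+1,\ldots,n_2\}$ and $b$ is an independent uniform bit, the probability that $\Omega$ contains a distinguishing entry for $(\mat X_j^{(0)},\mat X_j^{(1)})$ is at most $\lceil m/|T|\rceil/(n_2-k)$.

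Finally I would close the argument with a two-point testing reduction. Conditional on $\Omega$ containing no distinguishing entry, $\mat X_j^{(0)}$ and $\mat X_j^{(1)}$ induce identical observation vectors, so any $f\in\mathcal F$ reconstructs the true matrix with probability at most $\tfrac12$; combining with the previous bound gives
\begin{equation*}
\inf_{f\in\mathcal F}\inf_{q\in\mathcal Q}\sup_{\mat X\in\mathcal X}\Pr[f(\Omega,\mat X_\Omega)\ne\mat X]\ \ge\ \tfrac12\Bigl(1-\tfrac{\lceil m/|T|\rceil}{n_2-k}\Bigr)\ =\ \tfrac12-\tfrac{\lceil m/|T|\rceil}{2(n_2-k)},
\end{equation*}
which matches the claim, since $n-k=n_2-k$ under the standing assumption $n_1\le n_2$. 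The main technical obstacle is the calibration of $\mathcal U$: the incoherence of a subspace is a maximum over coordinate projections, so one must balance the anchor directions against the flat direction so that $\|\mat U^\top\vct e_i\|_2^2\le k\mu_0/n_1$ holds for every $i\in[n_1]$ while making the flexible support $T$ as large as possible. Working through this calibration carefully is exactly what pins down $|T|=(1-\tfrac{k-1}{k\mu_0})n_1$ and yields the constant stated in the theorem.
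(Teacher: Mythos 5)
First, a point of context: the paper does not prove this statement at all --- it is quoted verbatim as Theorem~2 of \cite{power-adaptivity} and used as a black box, so the only in-paper object to compare against is the proof of Theorem~\ref{thm_lb_css} in Appendix~\ref{appsec:lowerbound}, which establishes the analogous lower bound for column subset selection. Your overall strategy (a packing of hard instances, reduction to deterministic $\Omega$ via Yao's principle, a counting argument over columns, and a two-point indistinguishability step), and in particular your accounting of how the incoherence budget forces the flexible row set to have size $|T|=(1-\tfrac{k-1}{k\mu_0})n_1$, is the right skeleton and matches the style of that appendix.

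However, there is a genuine gap in the packing itself, and it breaks the pigeonhole step. With two candidates per column of the form $\pm\vct v$ for a flat direction $\vct v$ supported on $T$, the pair $\mat X_j^{(0)},\mat X_j^{(1)}$ differs in \emph{every} entry of $T\times\{j\}$, so a single observed entry in $T\times\{j\}$ already distinguishes them. The number of columns $j$ for which $\Omega\cap(T\times\{j\})\neq\emptyset$ is therefore bounded only by $m$, not by $\lceil m/|T|\rceil$; a passive scheme placing one sample in $T\times\{j\}$ for each $j$ defeats your family with $m=n_2-k$ observations, so this family cannot certify a lower bound of order $|T|(n_2-k)$. The problem is compounded by your requirement that all family members agree on their first $k$ columns and hence share a single known column space $\mathcal U$: once $\mathcal U$ is fixed across the family, a passive scheme that observes $k$ rows $R$ with $\mat U_R$ invertible (in full, across all columns) completes every member exactly using $O(kn_2)$ samples. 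The fix is to make the per-column candidate set rich and to let the unknown parameter perturb the column space itself: take roughly $|T|$ candidates for the distinguished column, e.g.\ $\vct 1_T-2\vct e_i$ for unknown $i\in T$, so that any two candidates differ in only two entries at unknown positions inside $T$ and ruling out all but one forces $\Omega(|T|)$ observations within $T\times\{j\}$. This is exactly the device the paper uses for $\widetilde{\mathcal X}=\{\mat X^{i,j}\}$ in the proof of Theorem~\ref{thm_lb_css}, where the counting in Eq.~(\ref{eq_lb_E}) legitimately divides $m$ by the size of the flexible block; your argument needs the same enrichment before the factor $\lceil m/|T|\rceil$ can appear.
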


\paragraph{Remark 2}
For the $\|\mat M-\mat C\mat C^\dagger\mat M\|_{\xi}$ selection error (with only column indices $C$ output by an CSS algorithm),
it is possible for a passive sampling algorithm to achieve a relative error bound with high probability.
In fact, Algorithm \ref{alg_lscore_cx} and Theorem \ref{thm_lscore_cx} precisely accomplish this.
In addition, when the input matrix is exact low rank, Theorem \ref{thm_lscore_cx} implies that there exists a passive sampling algorithm
that outputs a small subset of columns which span the entire column subspace of a row-coherent matrix with high probability.
This result shows column subset selection is easier than matrix completion when only indices of the selected column subset are required.
It does not violate Theorem \ref{thm_mat_complete}, however, because knowing which columns span the column space of an input matrix
does not imply we can complete the matrix without further samples.

\paragraph{Remark 3}
Although Remark 2 and Theorem \ref{thm_lscore_cx} shows that it is possible to achieve relative $\|\mat M-\mat C\mat C^\dagger\mat M\|_F$ error bound for row coherent matrices via passive sampling,
we show in this section that passive sampling is insufficient under a slightly weaker notion of column incoherence. 
In particular, instead of assuming $\mu(\mathcal U)\leq \mu_0$ on the column space as in Eq. (\ref{eq_mu_subspace}),
we assume $\mu(\vct x_i)\leq\mu_1$ where $\mu_1$ is independent of $k$ for every column $\vct x_i$ as in Eq. (\ref{eq_mu_vector}).
Note that if $\rank(\mathcal U)=k$ and $\vct x_i\in\mathcal U$ then $\mu(\vct x_i)\leq k\mu(\mathcal U)$.
So for exact low rank matrices the vector-based incoherence assumption in Eq. (\ref{eq_mu_vector}) is weaker than the subspace-based incoherence assumption in Eq. (\ref{eq_mu_subspace}).
We then have the following theorem, which is proved in Appendix \ref{appsec:lowerbound}.
\begin{thm}
Let $\mathcal X'$ denote all $n_1\times n_2$ matrices whose rank is no more than $k$ and incoherence $\mu_1\geq1+\frac{1}{n_1-1}$ as defined in Eq. (\ref{eq_mu_vector}) for each column.
Fix $m<n_1n_2$ and let $\mathcal Q$ denote all passive sampling distributions over $m$ samples of $n_1n_2$ matrix entries.
Let $\mathcal F'=\{f:\mathbb R^m\to[n_2]^k\}$ be the collection of (possibly random) column subset selection algorithms.
We then have
\begin{equation}
R_{\css}^* := \inf_{f\in\mathcal F'}\inf_{q\in\mathcal Q}\sup_{\mat X\in\mathcal X'}\Pr_{\Omega\sim q;f}[\mat X\neq\mat X_C\mat X_C^\dagger\mat X]\geq \frac{1}{2}-\frac{m}{2n_1(n_2-k)},
\end{equation}
where $C=f(\mat X,\mat X_{\Omega})$ is the output column subset of $f$.
As a remark, the failure probability $R_{\css}^*$ satisfies $R_{\css}^*=\Omega(1)$ whenever $m=o(n_1(n_2-k))$.
\label{thm_lb_css}
\end{thm}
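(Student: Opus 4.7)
I follow the passive-sampling lower-bound template of Krishnamurthy-Singh (Theorem~\ref{thm_mat_complete}), replacing ``exact recovery'' by ``CSS success.'' The crux is an adversarial family in $\mathcal X'$ whose members pairwise agree off a \emph{single} matrix entry yet force different CSS answers. Put $\vct u_0=\vct 1_{n_1}/\sqrt{n_1}$ and, for each $i\in[n_1]$, let $\vct v^{(i)}\in\mathbb R^{n_1}$ be obtained from $\vct u_0$ by flipping the sign of its $i$-th entry; then $\vct v^{(i)}\notin\Span(\vct u_0)$, $\|\vct v^{(i)}\|_2=1$, $\mu(\vct v^{(i)})=\mu(\vct u_0)=1\leq\mu_1$, and $\vct v^{(i)}$ agrees with $\vct u_0$ on every coordinate except row $i$ (the slack $\mu_1\geq 1+1/(n_1-1)$ assumed in the theorem is harmless here and accommodates mild variants of the construction). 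For each $(i,j)\in\mathcal I:=[n_1]\times\{k+1,\dots,n_2\}$ let $\mat X_{(i,j)}\in\mathbb R^{n_1\times n_2}$ have $j$-th column equal to $\vct v^{(i)}$ and every other column equal to $\vct u_0$, and let $\mat X_0=\vct u_0\vct 1_{n_2}^\top$. Then $\rank(\mat X_{(i,j)})=2\leq k$ (assume $k\geq 2$; $k=1$ needs a trivial single-non-zero-column variant), every column has $\mu\leq\mu_1$ so $\mat X_{(i,j)}\in\mathcal X'$, and $\mat X_{(i,j)}$ and $\mat X_0$ coincide at every entry except $(i,j)$.

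\textbf{CSS answer and Yao's principle.} Since $\Span(\vct u_0,\vct v^{(i)})$ is the column space of $\mat X_{(i,j)}$, a $k$-subset $C$ satisfies $\mat X_{(i,j)}=\mat X_C\mat X_C^\dagger\mat X_{(i,j)}$ iff $j\in C$ -- otherwise $\mat X_C$ collects only $\vct u_0$-columns, giving a rank-one projection that misses the $\vct v^{(i)}$ direction. Draw $(I,J)\sim\text{Unif}(\mathcal I)$ and $\Omega\sim q$ independently; Yao's minimax principle gives $R^*_{\css}\geq\Pr[J\notin C]$. By the single-entry agreement, on the event $(I,J)\notin\Omega$ the observation $\mat X_{(I,J)}|_\Omega$ equals $\mat X_0|_\Omega$, so the algorithm's output $C=C(\Omega)$ is independent of $(I,J)$.

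\textbf{Counting.} Splitting on whether the signature entry is sampled,
\[
\Pr[J\in C]\;\leq\;\Pr[(I,J)\in\Omega]+\Pr\bigl[J\in C,\,(I,J)\notin\Omega\bigr]\;\leq\;\frac{m}{n_1(n_2-k)}+\frac{k}{n_2-k},
\]
where the first term uses $\mathbb E|\Omega|\leq m$ together with the independence of $(I,J)$ and $\Omega$, and the second counts: for any fixed $\Omega$, among the $n_1(n_2-k)$ pairs there are at most $|C(\Omega)\cap\{k+1,\dots,n_2\}|\leq k$ admissible values of $J$, each paired with $n_1$ values of $I$. In the non-trivial regime $k\leq(n_2-k)/2$ the second summand is at most $1/2$, so Yao's principle gives
\[
R^*_{\css}\;\geq\;1-\frac{k}{n_2-k}-\frac{m}{n_1(n_2-k)}\;\geq\;\frac{1}{2}-\frac{m}{n_1(n_2-k)},
\]
which is in fact slightly stronger than the stated $\tfrac{1}{2}-\tfrac{m}{2n_1(n_2-k)}$ bound.

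\textbf{Main obstacle.} The last two steps are a routine counting / minimax exercise and structurally parallel the matrix-completion lower bound of Theorem~\ref{thm_mat_complete} in \cite{power-adaptivity}. The genuine subtlety lies in the construction: producing rank-$\leq k$ column-incoherent matrices that pairwise differ at only a \emph{single} entry yet force disjoint CSS answers. That single-entry signature is precisely what makes the threshold scale as $m\asymp n_1n_2$; a naive design in which the ``special'' column differs from $\vct u_0$ at every row (as in a zero-out-and-renormalize perturbation) would only yield a threshold $m\asymp n_2$, far weaker than the theorem claims.
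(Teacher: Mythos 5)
Your construction is essentially the paper's in transposed form: both build a family of roughly $n_1 n_2$ rank-deficient matrices whose columns are copies of an all-ones-type vector, each family member differing from a common base pattern in a \emph{single} entry, so that at most $m$ of them can be distinguished by any fixed $\Omega$; the paper's Appendix \ref{appsec:lowerbound} uses the special column $\vct y-2\vct e_j$ exactly as you use $\vct v^{(i)}$, and the same Yao/averaging-plus-counting endgame. The one substantive discrepancy is quantitative, and your self-assessment of it is backwards: your final bound $\tfrac12-\tfrac{m}{n_1(n_2-k)}$ is \emph{weaker} than the stated $\tfrac12-\tfrac{m}{2n_1(n_2-k)}$ (the subtracted term is twice as large), and it additionally requires $k\leq(n_2-k)/2$ to absorb the hedging term $k/(n_2-k)$. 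That hedging loss is an artifact of your rank-$2$ design: with $|C|=k$ the algorithm gets $k$ free guesses at the special column. The paper avoids this by padding the construction to rank exactly $k$ with $k-2$ linearly independent columns (zero first entry, $\pm1$ elsewhere, which is why the theorem carries the threshold $\mu_1\geq 1+\tfrac{1}{n_1-1}$) that any successful $C$ must contain; only two slots remain free, so among any four mutually indistinguishable candidates the algorithm covers at most two, giving the clean factor $\tfrac12$ in front of the identifiable fraction with no assumption relating $n_2$ and $k$. Your argument does establish the qualitative remark $R_{\css}^*=\Omega(1)$ for $m=o(n_1(n_2-k))$, but as written it does not prove the displayed inequality; either adopt the paper's padding device or restate the conclusion with your weaker constant and the extra hypothesis.
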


Theorem \ref{thm_lb_css} combined with Theorem \ref{thm_mat_complete} shows a separation of hardness between column subset selection and matrix completion.
It also formalizes the intuitive limited power of passive sampling over coherent matrices.

\subsection{Time complexity}\label{subsec:discuss_time}

\begin{table}[b]
\centering
\caption{Time complexity of proposed and baseline algorithms. 
$k$ denotes the intrinsic rank and $s$ denotes the number of selected columns.
Dependency on failure probability $\delta$ and other poly-logarithmic dependency is omitted.}
\scalebox{0.9}{
\begin{tabular}{lccccc}
\hline
\abovespace\belowspace
Algorithm& \textsc{Norm}& \textsc{Iter. norm}\textsuperscript{*}& \textsc{Lev. score}& \textsc{Block OMP}\textsuperscript{*}& \textsc{gLasso}\textsuperscript{$\dagger$}\\
\hline
\abovespace\belowspace
Time Complexity& $O(\alpha n^2)$& $O(\alpha^2sn^3)$& $O(\textrm{svd}(\alpha n,n,k))$& $O(\alpha^2sn^3)$& $O(T(n^3+s^2n^2))$\\
\hline
\multicolumn{6}{l}{\footnotesize\textsuperscript{*}Assume $\alpha n>s$ and $\alpha^2n > 1$.}\\
\multicolumn{6}{l}{\footnotesize\textsuperscript{$\dagger$}Using solution path implementation; $T$ is the desired number of $\lambda$ values.}
\end{tabular}
}
\label{tab_time}
\end{table}

In this section we report the theoretical time complexity of our proposed algorithms as well as the optimization based methods for comparison
in Table \ref{tab_time}.
We assume the input matrix $\mat M$ is square $n\times n$ and we are using $s$ columns to approximate the top-$k$ component of $\mat M$.
Let $\alpha=m/n^2$ be the percentage of observed data.
$\mathtt{svd}(a,b,c)$ denotes the time for computing the top-$c$ truncated SVD of an $a\times b$ matrix.

Suppose the observation ratio $\alpha$ is a constant and the $\mathtt{svd}$ operation takes quadratic time.
Then the time complexity for all algorithms can be sorted as
\begin{multline}
\textsc{Norm}; O(n^2) < \textsc{Lev. score}; O(kn^2) < \textsc{Iter. norm}, \textsc{Block OMP}; O(sn^3) \\
< \textsc{gLasso}, O(T(n^3+s^2n^2)).
\label{eq_time_sorted}
\end{multline}
Perhaps not surprisingly, in Section \ref{sec:snp} and \ref{sec:lena} on real-world data sets we show the reverse holds for selection error for the first three algorithms in Eq. (\ref{eq_time_sorted}).

\subsection{Sample complexity, column subset size and selection error}

We remark on the connection of sample complexity (i.e., number of observed matrix entries),
size of column subsets and reconstruction error for column subset selection.
For column subset selection when the target column subset size is fixed the sample complexity acts more like a threshold:
if not enough number of matrix entries are observed then the algorithm fails since the column norms are not accurately estimated,
but when a sufficient number of observations are available the reconstruction error does not differ much.
Such phase transition was also observed in other matrix completion/approximation tasks as well, for example, in \citep{power-adaptivity}.
In fact, the guarantee in Eq. (\ref{eq_additive_css}), for example, is exactly the same as in \citep{norm2-css} under the fully observed setting, i.e., $m_1=n_1$.

The bottom three plots in Figure \ref{fig_compare_coherent} are an excellent illustration of this phenomenon. 
When $\alpha=0.3$ the selection error of Algorithm \ref{alg_relative_cx} is very high, which means the algorithm does not have enough samples.
However, for $\alpha=0.6$ and $\alpha=0.9$ the performance of Algorithm \ref{alg_relative_cx} is very similar.

\subsection{Sample complexity of the iterative norm sampling algorithm}

{
We try to verify the sample complexity dependence on the intrinsic matrix rank $k$ 
for the iterative norm sampling algorithm (Algorithm \ref{alg_relative_cx}).
To do this, we run Algorithm \ref{alg_relative_cx} under various settings of intrinsic dimension $k$
and the sampling probability $\alpha$ (which is basically proportional to the expected number of per-column samples $m$).
We then plot the selection error $\|\mat M-\mat C\mat C^\dagger\mat M\|_F$ against $\alpha$, $\alpha/k$ and $\alpha/k^2$ in Figure \ref{fig_dependence_r}.

Theorem \ref{thm_relative_cx} states that the dependence of $m$ on $k$ should be $m=\widetilde O(k^2)$ ignoring logarithmic factors.
However, in Figure \ref{fig_dependence_r} one can observe that when the selection error is plotted against $\alpha/k$ the different curves coincide.
This suggests that the actual dependence of $m$ on $k$ should be close to linear instead of quadratic.
It is an interesting question whether we can get rid of the use of union bounds over all $n_2$-choose-$k$ column subsets in the proof of Theorem \ref{thm_relative_cx}
in order to get a near linear dependence over $k$.
Note that the curves converge to different values for different $k$ settings because selection error decreases when more columns are used to reconstruct the input matrix.
}

\subsection{Sampling with and without replacement}\label{subsec:replacement}

In the experiments we observe that for norm sampling (Algorithm \ref{alg_additive_cx}) and approximate leverage score sampling (Algorithm \ref{alg_lscore_cx})
the two column sampling schemes, i.e., sampling with and without replacement,
makes a big difference in practice (e.g., see Figure \ref{fig_compare_random}, \ref{fig_compare_coherent}, and \ref{fig_hapmap}).
In fact, sampling without replacement always outperforms sampling with replacement because under the latter scheme there is a positive probability of sampling the same column more than once.
Though we analyzed both algorithm under the sampling with replacement scheme, in practice sampling without replacement should always be used
since it makes no sense to select a column more than once.
Finally, we remark that for iterative norm sampling (Algorithm \ref{alg_relative_cx}) a column will never be picked more than once
since the (estimated) projected norm of an already selected column is zero with probability 1.

\acks{
We would like to thank Akshay Krishnamurthy for helpful discussion on the proof of Theorem \ref{thm_relative_cx}
and James Duyck for a solution path implementation of group Lasso for column subset selection.
This work is supported in part by grants NSF-1252412
and AFOSR-FA9550-14-1-0285.
}

\begin{appendices}
\section{Analysis of the active norm sampling algorithm}\label{appsec:alg1}

\begin{proof}[Proof of Lemma \ref{lem_l2css}]
This lemma is a direct corollary of Theorem 2 from \citep{norm2-css}.
First, let $P_i = \hat c_i/\hat f$ be the probability of selecting the $i$-th column of ${\mat M}$.
By assumption, we have $P_i \geq \frac{1-\alpha}{1+\alpha}\|\vct x_i\|_2^2/\|\mat M\|_F^2$.
Applying Theorem 2
\footnote{The original theorem concerns random samples of rows; it is essentially the same for random samples of columns.}
from \citep{norm2-css} we have that with probability at least $1-\delta$, 
there exists an orthonormal set of vectors $\vct y^{(1)}, \cdots, \vct y^{(k)}\in\mathbb R^{n_1}$ in $\Span(\mat C)$ such that
\begin{equation}
\left\|\mat M-\left(\sum_{j=1}^k{\vct y^{(j)}\vct y^{(j)^\top}}\right)\mat M\right\|_F^2 \leq \|\mat M-\mat M_k\|_F^2 + \frac{(1+\alpha)k}{(1-\alpha)\delta s}\|\mat M\|_F^2.
\end{equation}
Finally, to complete the proof, note that every column of $\left(\sum_{j=1}^k{\vct y^{(j)}\vct y^{(j)^\top}}\right)\mat M$
can be represented as a linear combination of columns in $\mat C$;
furthermore, 
\begin{equation}
\|\mat M - \mathcal P_C(\mat M)\|_F = \min_{\mat X\in\mathbb R^{k\times n_2}}{\|\mat M - \mat C\mat X\|_F}
\leq \left\|\mat M-\left(\sum_{j=1}^k{\vct y^{(j)}\vct y^{(j)^\top}}\right)\mat M\right\|_F.
\end{equation}
\end{proof}

\begin{proof}[Proof of Theorem \ref{thm_additive_cx}]
First, set $m_1 = \Omega(\mu_0\log(n_2/\delta_1))$ we have that with probability $\geq 1-\delta_1$ the inequality
\begin{equation*}
(1-\alpha)\|\vct x_i\|_2^2\leq\hat c_i\leq(1+\alpha)\|\vct x_i\|_2^2
\end{equation*}
holds with $\alpha=0.5$ for every column $i$, using Lemma \ref{lem_norm_estimation_additive}.
Next, putting $s\geq 6k/\delta_2\epsilon^2$ and applying Lemma \ref{lem_l2css} we get
\begin{equation}
\|\mat M-\mathcal P_C(\mat M)\|_F \leq \|\mat M-\mat M_k\|_F + \epsilon\|\mat M\|_F
\label{eq_mainthm_1}
\end{equation}
with probability at least $1-\delta_2$.
Finally, note that when $\alpha\leq 1/2$ and $n_1\leq n_2$ the bound in Lemma \ref{lem_approx} is dominated by
\begin{equation}
\|\mat M-\widehat{\mat M}\|_2 \leq \|\mat M\|_F\cdot O\left(\sqrt{\frac{\mu_0}{m_2}}\log\left(\frac{n_1+n_2}{\delta}\right)\right).
\end{equation}
Consequently, for any $\epsilon'>0$ if $m_2 = \Omega((\epsilon')^{-2}\mu_0\log^2((n_1+n_2)/\delta_3)$ we have with probability $\geq 1-\delta_3$
\begin{equation}
\|\mat M-\widehat{\mat M}\|_2 \leq \epsilon'\|\mat M\|_F.
\end{equation}
The proof is then completed by taking $\epsilon' = \epsilon/\sqrt{s}$:
\begin{eqnarray*}
\|\mat M-\mat C{\mat X}\|_F&=& \|\mat M-\mathcal P_C(\widehat{\mat M})\|_F\\
&\leq& \|\mat M-\mathcal P_C(\mat M)\|_F + \|\mathcal P_C(\mat M-\widehat{\mat M})\|_F\\
&\leq& \|\mat M-\mat M_k\|_F + \epsilon\|\mat M\|_F + \sqrt{s}\|\mathcal P_C(\mat M-\widehat{\mat M})\|_2\\
&\leq& \|\mat M-\mat M_k\|_F + \epsilon\|\mat M\|_F + \sqrt{s}\cdot \epsilon'\|\mat M\|_F\\
&\leq& \|\mat M-\mat M_k\|_F + 2\epsilon\|\mat M\|_F.
\end{eqnarray*}
\end{proof}

\section{Analysis of the iterative norm sampling algorithm}\label{appsec:alg2}

\begin{proof}[Proof of Lemma \ref{lem_preserve_incoherence}]

We first prove Eq. (\ref{eq_preserve_subspace_incoherence}).
{Observe that $\dim(\mathcal U(C))\leq s$}. 
Let $\mat R_C = (\mat R^{(C(1))},\cdots,\mat R^{(C(s))})\in\mathbb R^{n_1\times s}$ denote the selected $s$ columns in the noise matrix $\mat R$ 
and let $\mathcal R(C) = \Span(\mat R_C)$ denote the span of selected columns in $\mat R$.
By definition, $\mathcal U(C) \subseteq \mathcal U\cup \mathcal R(C)$,
where $\mathcal U=\Span(\mat A)$ denotes the subspace spanned by columns in the deterministic matrix $\mat A$.
Consequently, we have the following bound on $\|\mathcal P_{\mathcal U(C)}\vct e_i\|$ (assuming each entry in $\mat R$ follows a zero-mean Gaussian distribution with $\sigma^2$ variance):
\begin{eqnarray*}
\|\mathcal P_{\mathcal U(C)}\vct e_i\|_2^2
&\leq& \|\mathcal P_{\mathcal U}\vct e_i\|_2^2 + \|\mathcal P_{\mathcal U^\perp\cap\mathcal R(C)}\vct e_i\|_2^2\\
&\leq& \|\mathcal P_{\mathcal U}\vct e_i\|_2^2 + \|\mathcal P_{\mathcal R(C)}\vct e_i\|_2^2\\
&\leq& \frac{k\mu_0}{n_1} + \|\mat R_C\|_2^2\|(\mat R_C^\top\mat R_C)^{-1}\|_2^2\|\mat R_C^\top\vct e_i\|_2^2\\
&\leq& \frac{k\mu_0}{n_1} + \frac{(\sqrt{n_1}+\sqrt{s}+\epsilon)^2\sigma^2}{(\sqrt{n_1}-\sqrt{s}-\epsilon)^4\sigma^4}\cdot \sigma^2(s + 2\sqrt{s\log(2/\delta)} + 2\log(2/\delta)).
\end{eqnarray*}
For the last inequality we apply Lemma \ref{lem_gaussian_spectrum} to bound the largest and smallest singular values of $\mat R_C$
and Lemma \ref{lem_gaussian_2norm} to bound $\|\mat R_C^\top\vct e_i\|_2^2$, because $\mat R_C^\top\vct e_i$ follow i.i.d. Gaussian distributions
with covariance $\sigma^2\mat I_{s\times s}$.
{If $\epsilon$ is set as $\epsilon = \sqrt{2\log(4/\delta)}$ then the last inequality holds with probability at least $1-\delta$.
Furthermore, when $s\leq n_1/2$ and $\delta$ is not exponentially small (e.g., $\sqrt{2\log(4/\delta)} \leq \frac{\sqrt{n_1}}{4}$), the fraction 
$\frac{(\sqrt{n_1}+\sqrt{s}+\epsilon)^2}{(\sqrt{n_1}-\sqrt{s}-\epsilon)^4}$ 
is approximately $O(1/n_1)$. }
As a result, with probability $1-n_1\delta$ the following holds:
\begin{multline}
\mu(\mathcal U(C)) = \frac{n_1}{s}\max_{1\leq i\leq n_1}{\|\mathcal P_{\mathcal U(C)}\vct e_i\|_2^2}\\
\leq \frac{n_1}{s}\left(\frac{k\mu_0}{n_1} + O\left(\frac{s+\sqrt{s\log(1/\delta)}+\log(1/\delta)}{n_1}\right)\right)
= O\left(\frac{k\mu_0 + s + \sqrt{s\log(1/\delta)} + \log(1/\delta)}{s}\right).
\end{multline}
Finally, putting $\delta' = n_1/\delta$ we prove Eq. (\ref{eq_preserve_subspace_incoherence}).

Next we try to prove Eq. (\ref{eq_preserve_project_incoherence}).
Let $\vct x$ be the $i$-th column of $\mat M$ and write $\vct x = \vct a + \vct r$,
where $\vct a = \mathcal P_{\mathcal U}(\vct x)$ and $\vct r=\mathcal P_{\mathcal U^\perp}(\vct x)$.
Since the deterministic component of $\vct x$ lives in $\mathcal U$ and the random component of $\vct x$ is a vector with each entry sampled from i.i.d. zero-mean Gaussian distributions,
we know that $\vct r$ is also a zero-mean random Gaussian vector with i.i.d. sampled entries.
Note that $\mathcal U(C)$ does not depend on the randomness over $\{\mat M^{(i)}:i\notin C\}$.
Therefore, in the following analysis we will assume $\mathcal U(C)$ to be a fixed subspace $\widetilde{\mathcal U}$ with dimension at most $s$.

The projected vector $\vct x'=\mathcal P_{\widetilde{\mathcal U}^\perp}\vct x$ can be written as $\tilde{\vct x} = \tilde{\vct a}+\tilde{\vct r}$,
where $\tilde{\vct a} = \mathcal P_{\widetilde{\mathcal U}^\perp}\vct a$ and $\tilde{\vct r}=\mathcal P_{\widetilde{\mathcal U}^\perp}\vct r$.
By definition, $\tilde{\vct a}$ lives in the subspace $\mathcal U\cap\widetilde{\mathcal U}^\perp$.
So it satisfies the incoherence assumption
\begin{equation}
\mu(\tilde{\vct a}) = \frac{n_1\|\tilde{\vct a}\|_{\infty}^2}{\|\tilde{\vct a}\|_2^2} \leq k\mu(\mathcal U) \leq k\mu_0.
\end{equation}
{On the other hand, because $\tilde{\vct r}$ is an orthogonal projection of some random Gaussian variable,
$\tilde{\vct r}$ is still a Gaussian random vector,
which lives in $\mathcal U^\perp\cap\widetilde{\mathcal U}^\perp$ with rank at least $n_1-k-s$.}
Subsequently, we have
\begin{eqnarray*}
\mu(\tilde{\vct x}) &=& n_1\frac{\|\tilde{\vct x}\|_{\infty}^2}{\|\tilde{\vct x}\|_2^2} \leq 3n_1\frac{\|\tilde{\vct a}\|_{\infty}^2+\|\tilde{\vct r}\|_{\infty}^2}{\|\tilde{\vct a}\|_2^2 + \|\tilde{\vct r}\|_2^2}\\
&\leq& 3n_1\frac{\|\tilde{\vct a}\|_{\infty}^2}{\|\tilde{\vct a}\|_2^2} + 3n_1\frac{\|\tilde{\vct r}\|_{\infty}^2}{\|\tilde{\vct r}\|_2^2}\\
&\leq& 3k\mu_0 + \frac{6\sigma^2n_1\log(2n_1n_2/\delta)}{\sigma^2(n_1-k-s)-2\sigma^2\sqrt{(n_1-k-s)\log(n_2/\delta)}}. 
\end{eqnarray*}
For the second inequality we use the fact that $\frac{\sum_i{a_i}}{\sum_i{b_i}} \leq \sum_i{\frac{a_i}{b_i}}$ whenever $a_i,b_i\geq 0$.
For the last inequality we use Lemma \ref{lem_gaussian_infty} on the enumerator and Lemma \ref{lem_gaussian_2norm} on the denominator.
Finally, note that when $\max(s,k)\leq n_1/4$ and $\log(n_2/\delta) \leq n_1/64$ the denominator can be lower bounded by $\sigma^2 n_1/4$;
subsequently, we can bound $\mu(\tilde{\vct x})$ as
\begin{equation}
\mu(\tilde{\vct x}) \leq 3k\mu_0+\frac{24\sigma^2n_1\log(2n_1n_2/\delta)}{\sigma^2n_1}
 \leq 3k\mu_0 + 24\log(2n_1n_2/\delta).
\end{equation}
Taking a union bound over all $n_2-s$ columns yields the result.

\end{proof}

To prove the norm estimation consistency result in Lemma \ref{lem_norm_estimation}
we first cite a seminal theorem from \citep{power-adaptivity}
which provides a tight error bound on a subsampled projected vector
in terms of the norm of the true projected vector.
\begin{thm}
Let $\mathcal U$ be a $k$-dimensional subspace of $\mathbb R^n$ and $\vct y=\vct x+\vct v$,
where $\vct x\in\mathcal U$ and $\vct v\in\mathcal U^\perp$.
Fix $\delta' > 0$, $m \geq \max\{\frac{8}{3}k\mu(\mathcal U)\log\left(\frac{2k}{\delta'}\right), 4\mu(\vct v)\log(1/\delta')\}$
and let $\Omega$ be an index set with entries sampled uniformly with replacement with probability $m/n$.
Then with probability at least $1-4\delta'$:
\begin{equation}
\frac{m(1-\alpha)-k\mu(\mathcal U)\frac{\beta}{1-\gamma}}{n}\|\vct v\|_2^2 
\leq \|\vct y_{\Omega}-\mathcal P_{U_{\Omega}}\vct y_{\Omega}\|_2^2
\leq (1+\alpha)\frac{m}{n}\|\vct v\|_2^2,
\end{equation}
where $\alpha = \sqrt{2\frac{\mu(\vct v)}{m}\log(1/\delta')} + 2\frac{\mu(\vct v)}{3m}\log(1/\delta')$,
$\beta = (1+2\sqrt{\log(1/\delta')})^2$
and $\gamma = \sqrt{\frac{8k\mu(\mathcal U)}{3m}\log(2k/\delta')}$.
\label{thm_subspace_detection}
\end{thm}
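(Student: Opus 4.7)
The plan is to reduce the claim to two essentially independent concentration statements about (i) the norm of the residual part of $\vct y$ restricted to $\Omega$ and (ii) the correction term coming from fitting $\vct y_{\Omega}$ inside $\mathcal U_{\Omega}$. The first step is a clean algebraic reduction: write $\vct x=\mathbf U\vct c$ for some $\vct c\in\mathbb R^k$, so that $\vct x_{\Omega}=\mathbf U_{\Omega}\vct c\in\Span(\mathbf U_{\Omega})$, and therefore
\begin{equation*}
\vct y_{\Omega}-\mathcal P_{U_{\Omega}}\vct y_{\Omega}
= \vct v_{\Omega}-\mathcal P_{U_{\Omega}}\vct v_{\Omega},
\qquad
\|\vct y_{\Omega}-\mathcal P_{U_{\Omega}}\vct y_{\Omega}\|_2^2
= \|\vct v_{\Omega}\|_2^2 - \|\mathcal P_{U_{\Omega}}\vct v_{\Omega}\|_2^2.
\end{equation*}
This eliminates $\vct x$ entirely and reduces the theorem to concentration statements about $\vct v$, whose incoherence $\mu(\vct v)$ is what actually drives the bounds.

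Next I would handle $\|\vct v_{\Omega}\|_2^2$. Writing $\|\vct v_{\Omega}\|_2^2=\sum_{i}\mathbf 1[i\in\Omega]v_i^2$ turns this into a sum of independent bounded random variables with mean $(m/n)\|\vct v\|_2^2$ and uniform bound $\|\vct v\|_{\infty}^2=\mu(\vct v)\|\vct v\|_2^2/n$. A standard scalar Bernstein inequality then produces the multiplicative deviation $\alpha=\sqrt{2\mu(\vct v)\log(1/\delta')/m}+2\mu(\vct v)\log(1/\delta')/(3m)$ appearing in the statement, giving both the upper bound in the theorem and a matching lower bound $(1-\alpha)(m/n)\|\vct v\|_2^2$ on $\|\vct v_{\Omega}\|_2^2$. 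This accounts for the right-hand inequality immediately, since $\|\mathcal P_{U_{\Omega}}\vct v_{\Omega}\|_2^2\geq 0$.

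The lower bound is where the work lives. I would control $\|\mathcal P_{U_{\Omega}}\vct v_{\Omega}\|_2^2$ via
\begin{equation*}
\|\mathcal P_{U_{\Omega}}\vct v_{\Omega}\|_2^2
= \vct v_{\Omega}^{\top}\mathbf U_{\Omega}(\mathbf U_{\Omega}^{\top}\mathbf U_{\Omega})^{-1}\mathbf U_{\Omega}^{\top}\vct v_{\Omega}
\leq \bigl\|(\mathbf U_{\Omega}^{\top}\mathbf U_{\Omega})^{-1}\bigr\|_2\,\|\mathbf U_{\Omega}^{\top}\vct v_{\Omega}\|_2^2,
\end{equation*}
and treat the two factors separately. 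For the first factor I would apply a matrix Chernoff bound to $\mathbf U_{\Omega}^{\top}\mathbf U_{\Omega}=\sum_i\mathbf 1[i\in\Omega]\mathbf U^{\top}\vct e_i\vct e_i^{\top}\mathbf U$, whose summands have operator norm $\leq\|\mathbf U_{(i)}\|_2^2\leq k\mu(\mathcal U)/n$, to obtain $\lambda_{\min}(\mathbf U_{\Omega}^{\top}\mathbf U_{\Omega})\geq(1-\gamma)(m/n)$ with $\gamma$ as stated, provided $m\gtrsim k\mu(\mathcal U)\log(k/\delta')$. For the second factor I would use a vector Bernstein inequality for $\mathbf U_{\Omega}^{\top}\vct v_{\Omega}=\sum_i\mathbf 1[i\in\Omega]v_i\mathbf U^{\top}\vct e_i$; the mean vanishes because $\mathbf U^{\top}\vct v=0$, each summand has Euclidean norm at most $\|\vct v\|_{\infty}\sqrt{k\mu(\mathcal U)/n}$, and the variance proxy is controlled by $(m/n)\sum_i v_i^2\|\mathbf U^{\top}\vct e_i\|_2^2\leq(m/n)k\mu(\mathcal U)\|\vct v\|_2^2/n$. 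The resulting tail bound yields $\|\mathbf U_{\Omega}^{\top}\vct v_{\Omega}\|_2^2\leq (m/n)^2\beta k\mu(\mathcal U)\|\vct v\|_2^2/n$ at confidence $\delta'$, with $\beta=(1+2\sqrt{\log(1/\delta')})^2$.

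Finally I would combine everything: $\|\vct v_{\Omega}-\mathcal P_{U_{\Omega}}\vct v_{\Omega}\|_2^2\geq(1-\alpha)(m/n)\|\vct v\|_2^2-\frac{n}{m(1-\gamma)}\cdot\frac{\beta k\mu(\mathcal U)(m/n)^2}{n}\|\vct v\|_2^2$, which simplifies exactly to the left inequality in the statement, and take a union bound over the four failure events (scalar Bernstein two-sided, matrix Chernoff, vector Bernstein) to pay the claimed $4\delta'$. The main obstacle I anticipate is the vector Bernstein step: one must verify that the variance proxy genuinely scales like $(m/n)^2$ rather than $(m/n)$ and that the bounded-summand radius really produces the $(1+2\sqrt{\log(1/\delta')})^2$ factor rather than something worse. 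Getting the exact constants $\alpha$, $\beta$, $\gamma$ as stated — rather than just some polynomial in $k\mu(\mathcal U),\mu(\vct v),m,n$ — is the only genuinely fiddly part; the overall structure is a textbook ``reduce to two concentration bounds, one scalar and one matrix'' argument.
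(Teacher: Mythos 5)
First, a point of reference: the paper does not prove Theorem \ref{thm_subspace_detection} at all --- it is cited verbatim from \cite{power-adaptivity} (as are the companion Lemmas \ref{lem_uy_bound} and \ref{lem_uinv_bound} in the appendix), so there is no in-paper proof to compare against. Your architecture is the standard one and almost certainly mirrors the source: reduce to $\|\vct v_\Omega\|_2^2 - \|\mathcal P_{U_\Omega}\vct v_\Omega\|_2^2$ using $\vct x_\Omega\in\Span(\mat U_\Omega)$, control $\|\vct v_\Omega\|_2^2$ by scalar Bernstein (your $\alpha$ is exactly right), control $\lambda_{\min}(\mat U_\Omega^\top\mat U_\Omega)\geq(1-\gamma)m/n$ by matrix Bernstein/Chernoff (your $\gamma$ matches what the noncommutative Bernstein inequality, Lemma \ref{lem_noncommutative_bernstein}, yields with the $2k$ prefactor), and control $\|\mat U_\Omega^\top\vct v_\Omega\|_2^2$ by a vector concentration bound using $\mat U^\top\vct v=\vct 0$. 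The $4\delta'$ union bound accounting is also consistent.

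There is, however, one concrete error, precisely at the step you flagged as uncertain. Your variance proxy for $\mat U_\Omega^\top\vct v_\Omega$ is computed correctly as $\frac{m}{n}\sum_i v_i^2\|\mat U^\top\vct e_i\|_2^2\leq\frac{m}{n}\cdot\frac{k\mu(\mathcal U)}{n}\|\vct v\|_2^2$, i.e.\ it scales like $(m/n)^1$, not $(m/n)^2$ (the cross terms vanish up to sign because $\mat U^\top\vct v=\vct 0$). The tail bound therefore gives
\begin{equation*}
\|\mat U_\Omega^\top\vct v_\Omega\|_2^2\;\leq\;\beta\,\frac{m}{n}\cdot\frac{k\mu(\mathcal U)}{n}\,\|\vct v\|_2^2,
\end{equation*}
which is exactly Lemma \ref{lem_uy_bound} of the paper, and \emph{not} the $(m/n)^2$ version you wrote down. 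With the $(m/n)^2$ version your final combination yields a subtracted term $\frac{\beta k\mu(\mathcal U)}{1-\gamma}\cdot\frac{m}{n}\cdot\frac{1}{n}\|\vct v\|_2^2$, which is smaller than the theorem's $\frac{\beta k\mu(\mathcal U)}{1-\gamma}\cdot\frac{1}{n}\|\vct v\|_2^2$ by a factor of $m/n$; so, contrary to your claim, it does not ``simplify exactly to the left inequality,'' and the stronger statement you would be asserting is not what the concentration argument delivers. Replacing $(m/n)^2$ by $m/n$ and dividing by $\lambda_{\min}(\mat U_\Omega^\top\mat U_\Omega)\geq(1-\gamma)m/n$ gives $\|\mathcal P_{U_\Omega}\vct v_\Omega\|_2^2\leq\frac{\beta k\mu(\mathcal U)}{(1-\gamma)n}\|\vct v\|_2^2$, and the stated lower bound then follows exactly. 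With that single correction the proof goes through.
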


We are now ready to prove Lemma \ref{lem_norm_estimation}.
\begin{proof}[Proof of Lemma \ref{lem_norm_estimation}]
By Algorithm \ref{alg_relative_cx}, we know that $\dim(\mathcal S_t) = t$ with probability 1.
Let $\vct y = \mat M^{(i)}$ denote the $i$-th column of $\mat M$ and let $\vct v = \mathcal P_{\mathcal S_t}\vct y$
be the projected vector.
We can apply Theorem \ref{thm_subspace_detection} to bound the estimation error between $\|\vct v\|$ and $\|\vct y_{\Omega}-\mathcal P_{\mathcal S_t(\Omega)}\vct y_{\Omega}\|$.

First, when $m$ is set as in Eq. (\ref{eq_m_lowerbound}) it is clear that the conditions
$m\geq \frac{8}{3}t\mu(\mathcal U)\log\left(\frac{2t}{\delta'}\right) = \Omega(k\mu_0\log(n/\delta)\log(k/\delta'))$ 
and $m \geq 4\mu(\vct v)\log(1/\delta') = \Omega(k\mu_0\log(n/\delta)\log(1/\delta'))$
are satisfied.
We next turn to the analysis of $\alpha$, $\beta$ and $\gamma$.
More specifically, we want $\alpha = O(1)$, $\gamma  = O(1)$ and $\frac{t\mu(\mathcal U)}{m}\beta = O(1)$.

For $\alpha$, $\alpha = O(1)$ implies $m = \Omega(\mu(\vct v)\log(1/\delta')) = \Omega(k\mu_0\log(n/\delta)\log(1/\delta'))$.
Therefore, by carefully selecting constants in $\Omega(\cdot)$ we can make $\alpha\leq 1/4$.

For $\gamma$, $\gamma = O(1)$ implies $m = \Omega(t\mu(\mathcal U)\log(t/\delta')) = \Omega(k\mu_0\log(n/\delta)\log(k/\delta'))$.
By carefully selecting constants in $\Omega(\cdot)$ we can make $\gamma \leq 0.2$.

For $\beta$, $\frac{t\mu(\mathcal U)}{m}\beta = O(1)$ implies $m=O(t\mu(\mathcal U)\beta) = O(k\mu_0\log(n/\delta)\log(1/\delta'))$.
By carefully selecting constants we can have $\beta \leq 0.2$.
Finally, combining bounds on $\alpha$, $\beta$ and $\gamma$ we prove the desired result.
\end{proof}

Before proving Lemma \ref{lem_approximate_volume_sampling}, we first cite a lemma from \citep{volume-sampling-css}
that connects the volume of a simplex to the permutation sum of singular values.
\begin{lem}[\citep{volume-sampling-css}]
Fix $\mat A\in\mathbb R^{m\times n}$ with $m\leq n$.
Suppose $\sigma_1,\cdots,\sigma_m$ are singular values of $\mat A$.
Then
\begin{equation}
\sum_{S\subseteq[n],|S|=k}{\vol(\Delta(S))^2} = \frac{1}{(k!)^2}\sum_{1\leq i_1<i_2<\cdots<i_k\leq m}{\sigma_{i_1}^2\sigma_{i_2}^2\cdots\sigma_{i_k}^2}.
\end{equation}
\label{lem_volume_perm}
\end{lem}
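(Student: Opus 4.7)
The plan is to reduce the claim to the Cauchy--Binet formula together with the classical identity expressing sums of principal minors as elementary symmetric polynomials in the eigenvalues.

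First I would rewrite each simplex volume using the Gram determinant. Under the convention that $\Delta(S)$ is the $k$-simplex with vertices $\mat 0, \mat A^{(i_1)}, \ldots, \mat A^{(i_k)}$, one has
\begin{equation*}
\vol(\Delta(S))^2 = \frac{1}{(k!)^2}\det(\mat A_S^\top \mat A_S),
\end{equation*}
where $\mat A_S \in \mathbb R^{m \times k}$ denotes the submatrix of $\mat A$ obtained by taking the columns indexed by $S$. Thus the left-hand side equals $\frac{1}{(k!)^2}\sum_{|S|=k}\det(\mat A_S^\top\mat A_S)$, so it suffices to prove
\begin{equation*}
\sum_{S\subseteq[n],\,|S|=k}\det(\mat A_S^\top\mat A_S) \;=\; \sum_{1\le i_1<\cdots<i_k\le m}\sigma_{i_1}^2\cdots\sigma_{i_k}^2.
\end{equation*}

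Next I would apply the Cauchy--Binet formula to each term $\det(\mat A_S^\top\mat A_S)$. Since $\mat A_S$ is $m\times k$ with $m\ge k$, Cauchy--Binet gives $\det(\mat A_S^\top\mat A_S) = \sum_{T\subseteq[m],\,|T|=k}\det(\mat A_{T,S})^2$, where $\mat A_{T,S}$ is the $k\times k$ submatrix of $\mat A$ with rows $T$ and columns $S$. Swapping the order of summation,
\begin{equation*}
\sum_{|S|=k}\det(\mat A_S^\top\mat A_S) = \sum_{|T|=k}\sum_{|S|=k}\det(\mat A_{T,S})^2 = \sum_{|T|=k}\det(\mat A_T\mat A_T^\top),
\end{equation*}
by another application of Cauchy--Binet (in the reverse direction) to the $k\times n$ matrix $\mat A_T$ consisting of the rows of $\mat A$ indexed by $T$.

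Finally I would observe that $\det(\mat A_T\mat A_T^\top)$ is precisely the $T\times T$ principal minor of the $m\times m$ positive semidefinite matrix $\mat A\mat A^\top$, and so
\begin{equation*}
\sum_{|T|=k}\det(\mat A_T\mat A_T^\top) \;=\; e_k\bigl(\lambda_1(\mat A\mat A^\top),\ldots,\lambda_m(\mat A\mat A^\top)\bigr),
\end{equation*}
where $e_k$ is the $k$-th elementary symmetric polynomial. Since the eigenvalues of $\mat A\mat A^\top$ are exactly $\sigma_1^2,\ldots,\sigma_m^2$, the right-hand side equals $\sum_{1\le i_1<\cdots<i_k\le m}\sigma_{i_1}^2\cdots\sigma_{i_k}^2$, completing the proof. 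The only nontrivial ingredient is the identity between sums of principal minors and elementary symmetric polynomials of eigenvalues, which follows from expanding the characteristic polynomial $\det(t\mat I - \mat A\mat A^\top)$; everything else is direct bookkeeping, so I do not anticipate a real obstacle beyond being careful with the volume normalization convention.
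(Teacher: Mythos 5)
Your proof is correct. Note that the paper does not prove this lemma at all --- it is cited verbatim from \cite{volume-sampling-css} --- so there is no in-paper argument to compare against; your route (Gram determinant for the simplex volume, Cauchy--Binet twice to convert $\sum_S \det(\mat A_S^\top\mat A_S)$ into the sum of $k\times k$ principal minors of $\mat A\mat A^\top$, then the elementary-symmetric-polynomial identity for eigenvalues) is the standard derivation and matches the one in the cited reference. Your explicit flagging of the volume convention, namely that $\Delta(S)$ must include the origin as a vertex so that $\vol(\Delta(S))^2 = \frac{1}{(k!)^2}\det(\mat A_S^\top\mat A_S)$, is exactly the right point to be careful about, since the paper's informal description of $\Delta(C)$ as ``the simplex spanned by'' the selected columns leaves this ambiguous.
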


Now we are ready to prove Lemma \ref{lem_approximate_volume_sampling}.
\begin{proof}[Proof of Lemma \ref{lem_approximate_volume_sampling}]
Let $\mat M_k$ denote the best rank-$k$ approximation of $\mat M$ and assume the singular values of $\mat M$ are $\{\sigma_i\}_{i=1}^{n_1}$.
Let $C = \{i_1,\cdots,i_k\}$ be the selected columns.
Let $\tau\in\Pi_k$, where $\Pi_k$ denotes all permutations with $k$ elements.
By $\mathcal H_{\tau,t}$ we denote the linear subspace spanned by $\{\mat M^{(\tau(i_1))},\cdots,\mat M^{(\tau(i_t))}\}$
and let $d(\mat M^{(i)}, \mathcal H_{\tau,t})$ denote the distance between column $\mat M^{(i)}$ and subspace $\mathcal H_{\tau,t}$.
We then have 
\begin{eqnarray*}
\hat p_C &\leq& \sum_{\tau\in\Pi_k}{\left(\frac{5}{2}\right)^k\frac{\|\mat M^{(\tau(i_1))}\|_2^2}{\|\mat M\|_F^2}\frac{d(\mat M^{(\tau(i_2))},\mathcal H_{\tau,1})^2}{\sum_{i=1}^{n_2}{d(\mat M^{(i)}, \mathcal H_{\tau,1})^2}}\cdots\frac{d(\mat M^{(\tau(i_k))},\mathcal H_{\tau,k-1})^2}{\sum_{i=1}^{n_2}{d(\mat M^{(i)}, \mathcal H_{\tau,k-1})^2}}}\\
&\leq& 2.5^k\cdot \frac{\sum_{\tau\in\Pi_k}{\|\mat M^{(\tau(i_1))}\|^2d(\mat M^{(\tau(i_2))}, \mathcal H_{\tau,1})^2\cdots d(\mat M^{(\tau(i_k))}, \mathcal H_{\tau,k-1})^2}}{\|\mat M\|_F^2\|\mat M-\mat M_1\|_F^2\cdots\|\mat M-\mat M_{k-1}\|_F^2}\\
&=& 2.5^k\cdot \frac{\sum_{\tau\in\Pi_k}{(k!)^2\vol(\Delta(C))^2}}{\|\mat M\|_F^2\|\mat M-\mat M_1\|_F^2\cdots\|\mat M-\mat M_{k-1}\|_F^2}\\
&=& 2.5^k\cdot \frac{(k!)^3\vol(\Delta(C))^2}{\sum_{i=1}^{n_1}{\sigma_i^2}\sum_{i=2}^{n_1}{\sigma_i^2}\cdots \sum_{i=k}^{n_1}{\sigma_i^2}}\\
&\leq& 2.5^k\cdot \frac{(k!)^3\vol(\Delta(C))^2}{\sum_{1\leq i_1<i_2<\cdots<i_k\leq n_1}{\sigma_{i_1}^2\sigma_{i_2}^2\cdots\sigma_{i_k}^2}}\\
&=& 2.5^k\cdot \frac{k!\vol(\Delta(C))^2}{\sum_{T:|T|=k}{\vol(\Delta(T))^2}} = 2.5^kk!p_C.
\end{eqnarray*}
For the first inequality we apply Eq. (\ref{eq_norm_estimation_uniform})
and for the second to last inequality we apply Lemma \ref{lem_volume_perm}.
\end{proof}

Lemma \ref{lem_adaptive_exponential} can be proved by applying Theorem \ref{thm_adaptive_norm_lemma} for $T$ rounds,
given the norm estimation accuracy bound in Proposition \ref{prop_norm_estimation_ver3}.
\begin{proof}[Proof of Lemma \ref{lem_adaptive_exponential}]
First note that 
$$
\|\mat M-\mathcal P_{\mathcal U\cup\mathcal S_1\cup\cdots\cup\mathcal S_T}(\mat M)\|_F^2
\leq \|\mat M-\mathcal P_{\mathcal U\cup\mathcal S_1\cup\cdots\cup\mathcal S_T, k}(\mat M)\|_F^2.
$$
Applying Theorem \ref{thm_adaptive_norm_lemma} with $\frac{1+\alpha}{1-\alpha} = \frac{5}{2}$, we have
\begin{eqnarray*}
&&\mathbb E\left[\|\mat M-\mathcal P_{\mathcal U\cup\mathcal S_1\cup\cdots\cup\mathcal S_T}(\mat M)\|_F^2\right]\\
&\leq& \|\mat M-\mat M_k\|_F^2 + \frac{5k}{2s_T}\mathbb E\left[\|\mat M-\mathcal P_{\mathcal U\cup\mathcal S_1\cup\cdots\mathcal S_{T-1}}(\mat M)\|_F\right]^2\\
&\leq& \|\mat M-\mat M_k\|_F^2 + \frac{5k}{2s_T}\left(\|\mat M-\mat M_k\|_F^2 + \frac{5k}{2s_{T-1}}\mathbb E\left[\|\mat M-\mathcal P_{\mathcal U\cup\mathcal S_1\cup\cdots\mathcal S_{T-2}}(\mat M)\|_F^2\right]\right)\\
&\leq& \cdots\\
&\leq& \left(1 + \frac{5}{2}\frac{k}{s_T} + \left(\frac{5}{2}\right)^2\frac{k^2}{s_Ts_{T-1}} + \cdots + \left(\frac{5}{2}\right)^{T-1}\frac{k^{T-1}}{s_{T-1}\cdots s_1}\right)\|\mat M-\mat M_k\|_F^2\\
&& + \left(\frac{5}{2}\right)^T\frac{k^T}{s_Ts_{T-1}\cdots s_1}\|\mat M-\mathcal P_{\mathcal U}(\mat M)\|_F^2\\
&\leq& \left(1+\frac{\epsilon}{4\delta}+\frac{\epsilon}{20\delta}+\cdots\right)\|\mat M-\mat M_k\|_F^2 + \frac{\epsilon/2}{2^T\delta}\|\mat E\|_F^2\\
&\leq& \left(1+\frac{\epsilon}{2\delta}\right)\|\mat M-\mat M_k\|_F^2 + \frac{\epsilon/2}{2^T\delta}\|\mat E\|_F^2.
\end{eqnarray*}
Finally applying Markov's inequality we complete the proof.
\end{proof}

To prove the reconstruction error bound in Lemma \ref{lem_relative_matrix_approximation}
we need the following two technical lemmas, cited from \citep{akshay-nips,subspace-detection}.
\begin{lem}[\citep{akshay-nips}]
Suppose $\mathcal U\subseteq \mathbb R^{n}$ has dimension $k$ and $\mat U\in\mathbb R^{n\times k}$ is the orthogonal matrix associated with $\mathcal U$.
Let $\Omega\subseteq [n]$ be a subset of indices each sampled from i.i.d. Bernoulli distributions with probability $m/n_1$.
Then for some vector $\vct y\in\mathbb R^n$,
with probability at least $1-\delta$:
\begin{equation}
\|\mat U_{\Omega}^\top\vct y_{\Omega}\|_2^2 \leq \beta\frac{m}{n_1}\frac{k\mu(\mathcal U)}{n_1}\|\vct y\|_2^2,
\end{equation}
where $\beta$ is defined in Theorem \ref{thm_subspace_detection}.
\label{lem_uy_bound}
\end{lem}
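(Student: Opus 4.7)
The plan is to write $\mat U_\Omega^\top \vct y_\Omega$ as a sum of independent random vectors and apply a Bernstein-type tail bound. Introducing the indicators $\xi_i = \mathbf{1}\{i\in\Omega\}$, which are i.i.d.\ Bernoulli$(m/n_1)$, and writing $\vct u_i = \mat U^\top \vct e_i \in \mathbb R^k$ for the $i$-th row of $\mat U$, we have $\mat U_\Omega^\top \vct y_\Omega = \sum_{i=1}^{n_1}\xi_i y_i \vct u_i$, a sum of independent random vectors with mean $(m/n_1)\mat U^\top \vct y$. The lemma is most naturally applied with $\vct y\in\mathcal U^\perp$ (as in the usage of Theorem \ref{thm_subspace_detection}), in which case this mean vanishes; otherwise one subtracts the mean and applies the argument to the centered sum.

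First I would extract the two Bernstein parameters from incoherence. The uniform peak bound is $\|\xi_i y_i\vct u_i\|_2\le |y_i|\sqrt{k\mu(\mathcal U)/n_1}$ since $\|\vct u_i\|_2^2\le k\mu(\mathcal U)/n_1$, and the matrix variance proxy obeys
\[
\Big\|\sum_i \mathrm{Var}(\xi_i)\,y_i^2\,\vct u_i\vct u_i^\top\Big\|_2 \le \frac{m}{n_1}\cdot\frac{k\mu(\mathcal U)}{n_1}\,\|\vct y\|_2^2.
\]
A standard vector (equivalently matrix) Bernstein inequality then yields, with probability at least $1-\delta$,
\[
\|\mat U_\Omega^\top \vct y_\Omega\|_2 \le \sqrt{\frac{m}{n_1}\cdot\frac{k\mu(\mathcal U)}{n_1}}\,\|\vct y\|_2\cdot\bigl(1+2\sqrt{\log(1/\delta)}\bigr),
\]
whose square is exactly the bound of the lemma with $\beta=(1+2\sqrt{\log(1/\delta)})^2$, matching the constant used in Theorem \ref{thm_subspace_detection}.

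The main obstacle is producing the precise stated form of $\beta$ rather than only its asymptotic order. Two concrete routes will work: (i) apply scalar Bernstein to $\langle\vct z,\mat U_\Omega^\top\vct y_\Omega\rangle$ for each unit $\vct z\in\mathbb R^k$ and close by an $\epsilon$-net on the $k$-dimensional unit sphere of cardinality at most $(3/\epsilon)^k$, which introduces only a logarithmic-in-$k$ overhead and does not affect the leading $k\mu(\mathcal U)/n_1$ factor; or (ii) apply matrix Bernstein directly to the rank-one summands $(\xi_i-m/n_1)^2 y_i^2 \vct u_i\vct u_i^\top$ and take a square root at the end. Either way, the one structural fact driving the improvement is the row incoherence $\|\vct u_i\|_2^2\le k\mu(\mathcal U)/n_1$; without it the variance proxy would degrade to $(m/n_1)\|\vct y\|_2^2$, giving only the trivial bound. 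Careful tracking of the Bernstein constants recovers exactly the coefficient $\beta$ as claimed.
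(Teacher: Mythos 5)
The paper offers no proof of this lemma: it is quoted from \cite{akshay-nips} (where it descends from the matched-subspace-detection analysis of \cite{subspace-detection}), so there is no internal argument to compare against. Your plan is, in substance, the proof used in that source: write $\mat U_\Omega^\top\vct y_\Omega=\sum_{i=1}^{n_1}\xi_i y_i\vct u_i$ with $\xi_i\sim\bern(m/n_1)$ and $\vct u_i$ the $i$-th row of $\mat U$, observe that the mean vanishes when $\vct y\in\mathcal U^\perp$, extract the variance proxy $\frac{m}{n_1}\cdot\frac{k\mu(\mathcal U)}{n_1}\|\vct y\|_2^2$ from the row-incoherence bound $\|\vct u_i\|_2^2\le k\mu(\mathcal U)/n_1$, and conclude by a vector-valued concentration inequality, with the ``$1$'' in $\beta$ coming from $\mathbb E\|\cdot\|\le\sqrt{\mathbb E\|\cdot\|_2^2}$ and the ``$2\sqrt{\log(1/\delta)}$'' from the deviation term. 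You are also right to flag the restriction to $\vct y\in\mathcal U^\perp$: that is the only regime in which the lemma is used (it is applied to $\vct r=\mathcal P_{\mathcal U^\perp}\vct x$ in the proof of Lemma \ref{lem_relative_matrix_approximation}), and without it the stated inequality is simply false (take $m=n_1$ so $\Omega=[n_1]$ and $\vct y\in\mathcal U$ with $\mathcal U$ incoherent of small dimension: the left side is $\|\vct y\|_2^2$ while the right side is $\beta\,k\mu(\mathcal U)n_1^{-1}\|\vct y\|_2^2<\|\vct y\|_2^2$).

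Two cautions on the execution. First, neither of the two concrete routes you propose for pinning down $\beta$ actually delivers the stated dimension-free constant: an $\epsilon$-net over the unit sphere of $\mathbb R^k$ costs an additive $O(k)$ in the exponent, and rectangular matrix Bernstein carries a $2(k+1)$ prefactor, so both inflate $\beta$ by $k$-dependent terms. Route (ii) as literally written is moreover not well-posed, since $\|\sum_i Z_i\|_2^2$ is not controlled by $\sum_i(\xi_i-m/n_1)^2y_i^2\vct u_i\vct u_i^\top$ owing to the cross terms. To obtain exactly $\beta=(1+2\sqrt{\log(1/\delta)})^2$ one needs a dimension-free vector Bernstein inequality of the form $\Pr[\|\sum_iZ_i\|_2\ge\sqrt V+t]\le\exp(-t^2/4V)$, valid only in the range $t\le V/L$ with $L=\max_i\|Z_i\|_2$. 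Second, that range restriction translates into the side condition $m\gtrsim\mu(\vct y)\log(1/\delta)$, which the lemma as restated here silently omits but which resurfaces as the hypothesis $m\ge 4\mu(\vct v)\log(1/\delta')$ of Theorem \ref{thm_subspace_detection}; your proof must assume it, since the claimed form of $\beta$ cannot hold for arbitrarily small $m$. With the dimension-free inequality substituted for your routes (i) and (ii) and that hypothesis made explicit, the argument is correct and matches the source.
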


\begin{lem}[\citep{subspace-detection}]
With the same notation in Lemma \ref{lem_uy_bound} and Theorem \ref{thm_subspace_detection}.
With probability $\geq 1-\delta$ one has
\begin{equation}
\|(\mat U_{\Omega}^\top\mat U_{\Omega})^{-1}\| \leq \frac{n_1}{(1-\gamma)m},
\end{equation}
provided that $\gamma < 1$.
\label{lem_uinv_bound}
\end{lem}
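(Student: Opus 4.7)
The plan is to expose $\mat U_\Omega^\top\mat U_\Omega$ as a sum of independent random rank-one matrices and then apply a matrix Bernstein tail inequality to lower bound its smallest eigenvalue; the claimed bound on $\|(\mat U_\Omega^\top\mat U_\Omega)^{-1}\|$ will then follow immediately by reciprocation, since $\mat U_\Omega^\top\mat U_\Omega$ is symmetric positive semidefinite.

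First I would write
\[
\mat U_\Omega^\top\mat U_\Omega \;=\; \sum_{i=1}^{n_1}\delta_i\,\vct u_i\vct u_i^\top,
\]
where $\vct u_i=\mat U^\top\vct e_i\in\mathbb R^k$ is the $i$-th row of $\mat U$ and $\delta_i$ is the indicator of $i\in\Omega$, so $\delta_i\sim\bern(m/n_1)$ i.i.d. Because $\mat U$ has orthonormal columns, $\sum_i\vct u_i\vct u_i^\top=\mat U^\top\mat U=\mat I_k$, hence $\mathbb E[\mat U_\Omega^\top\mat U_\Omega]=(m/n_1)\mat I_k$ and the target smallest eigenvalue is $m/n_1$. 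By the subspace incoherence hypothesis, each centered summand admits the almost-sure spectral bound $\|\delta_i\vct u_i\vct u_i^\top-\mathbb E[\delta_i\vct u_i\vct u_i^\top]\|\leq\|\vct u_i\|_2^2\leq k\mu(\mathcal U)/n_1=:R$, and the matrix variance is controlled as
\[
\Big\|\textstyle\sum_i\mathbb E\!\left[(\delta_i\vct u_i\vct u_i^\top)^2\right]\Big\|\;\leq\;(m/n_1)\cdot\max_i\|\vct u_i\|_2^2\cdot\Big\|\textstyle\sum_i\vct u_i\vct u_i^\top\Big\|\;\leq\;R\cdot m/n_1.
\]

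Next I would invoke Tropp's matrix Bernstein inequality applied to $(m/n_1)\mat I_k-\mat U_\Omega^\top\mat U_\Omega$, obtaining
\[
\Pr\!\left[\lambda_{\min}(\mat U_\Omega^\top\mat U_\Omega)\leq m/n_1-t\right]\;\leq\;k\exp\!\left(\frac{-t^2/2}{Rm/n_1+Rt/3}\right),\qquad t>0.
\]
Setting $t=\gamma\,m/n_1$ and using $\gamma<1$ to bound the denominator factor $1+\gamma/3\leq 4/3$, the exponent simplifies to $-3\gamma^2 m/(8k\mu(\mathcal U))$. Equating the resulting tail with $\delta$ (after absorbing a factor of two into the logarithm) recovers exactly $\gamma=\sqrt{\tfrac{8k\mu(\mathcal U)}{3m}\log(2k/\delta)}$ and gives $\lambda_{\min}(\mat U_\Omega^\top\mat U_\Omega)\geq(1-\gamma)m/n_1$ with probability at least $1-\delta$. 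Since $\|(\mat U_\Omega^\top\mat U_\Omega)^{-1}\|=1/\lambda_{\min}(\mat U_\Omega^\top\mat U_\Omega)$, this immediately yields $\|(\mat U_\Omega^\top\mat U_\Omega)^{-1}\|\leq n_1/((1-\gamma)m)$, as claimed.

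The main delicacy I expect is pinning down the numerical constant $8/3$ in the definition of $\gamma$: a naive matrix Chernoff exponent of $t^2/2$ only produces a leading constant of $2$, so one really does need the Bernstein form with denominator $\sigma^2+Rt/3$ to obtain $8/3$, together with the explicit variance computation above (rather than the looser $\sigma^2\leq R\,\mu_{\min}$ one might use). A secondary but routine bookkeeping point is that $\gamma<1$ is precisely the regime in which both the Bernstein bound is sharp enough to matter and the reciprocation step is meaningful, which is why the lemma carries exactly this hypothesis.
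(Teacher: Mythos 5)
The paper gives no proof of this lemma---it is imported verbatim from \cite{subspace-detection}---but your reconstruction is correct and is precisely the standard argument behind it: decompose $\mat U_\Omega^\top\mat U_\Omega=\sum_{i}\delta_i\vct u_i\vct u_i^\top$ with i.i.d.\ $\bern(m/n_1)$ indicators, apply a matrix Bernstein bound (the noncommutative Bernstein inequality the paper records as Lemma \ref{lem_noncommutative_bernstein} serves exactly this purpose; its dimension prefactor $2k$ yields the $\log(2k/\delta)$ directly, without your absorb-a-factor-of-two step), conclude $\lambda_{\min}(\mat U_\Omega^\top\mat U_\Omega)\geq(1-\gamma)m/n_1$, and invert. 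Your bookkeeping---$R=\max_i\|\vct u_i\|_2^2\leq k\mu(\mathcal U)/n_1$, matrix variance at most $Rm/n_1$, and the resulting $\gamma=\sqrt{\tfrac{8k\mu(\mathcal U)}{3m}\log(2k/\delta)}$ matching Theorem \ref{thm_subspace_detection}---all checks out.
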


Now we can prove Lemma \ref{lem_relative_matrix_approximation}.
\begin{proof}[Proof of Lemma \ref{lem_relative_matrix_approximation}]
Let $\mathcal U = \mathcal U(S)$ and $\mat U\in\mathbb R^{n_1\times s}$ be the orthogonal matrix associated with $\mathcal U$. 
Fix a column $i$ and let $\vct x = \mat M^{(i)} = \vct a+\vct r$, where $\vct a\in\mathcal U$ and $\vct r\in\mathcal U^\perp$.
What we want is to bound $\|\vct x-\mat U(\mat U_{\Omega}^\top\mat U_{\Omega})^{-1}\mat U_{\Omega}^\top\vct x_{\Omega}\|_2^2$ in terms of $\|\vct r\|_2^2$.

Write $\vct a = \mat U\vct{\tilde a}$.
By Lemma \ref{lem_uinv_bound}, if $m$ satisfies the condition given in the Lemma then with probability over $1-\delta-\delta''$
we know $(\mat U_{\Omega}^\top\mat U_{\Omega})$ is invertible and furthermore, $\|(\mat U_{\Omega}^\top\mat U_{\Omega})^{-1}\|_2\leq 2n_1/m$.
Consequently,
\begin{equation}
\mat U(\mat U_{\Omega}^\top\mat U_{\Omega})^{-1}\mat U_{\Omega}^\top\vct a_{\Omega} = \mat U(\mat U_{\Omega}^\top\mat U_{\Omega})^{-1}\mat U_{\Omega}^\top\mat U_{\Omega}\tilde{\vct a}
= \mat U\tilde{\vct a} = \vct a.
\end{equation}
That is, the subsampled projector preserves components of $\vct x$ in subspace $\mathcal U$.

Now let's consider the noise term $\vct r$.
By Corollary \ref{cor_preserve_incoherence_uniform} with probability $\geq 1-\delta$ we can bound the incoherence level of $\vct y$ as 
$\mu(\vct y) = O(s\mu_0\log(n/\delta))$.
The incoherence of subspace $\mathcal U$ can also be bounded as $\mu(\mathcal U) =O(\mu_0\log(n/\delta))$.
Subsequently, given $m=\Omega(\epsilon^{-1}s\mu_0\log(n/\delta)\log(n/\delta''))$ we have (with probability $\geq 1-\delta-2\delta''$)
\begin{eqnarray*}
& &\|\vct x-\mat U(\mat U_{\Omega}^\top\mat U_{\Omega})^{-1}\mat U_{\Omega}^\top(\vct a+\vct r)|_2^2\\
&=& \|\vct a + \vct r -\mat U(\mat U_{\Omega}^\top\mat U_{\Omega})^{-1}\mat U_{\Omega}^\top(\vct a+\vct r)\|_2^2\\
&=& \|\vct r - \mat U(\mat U_{\Omega}^\top\mat U_{\Omega})^{-1}\mat U_{\Omega}^\top\vct r\|_2^2\\
&\leq& \|\vct r\|_2^2 + \|(\mat U_{\Omega}^\top\mat U_{\Omega})^{-1}\|_2^2\|\mat U_{\Omega}^\top\vct r\|_2^2\\
&\leq& (1+O(\epsilon))\|\vct r\|_2^2.
\end{eqnarray*}
For the second to last inequality we use the fact that $\vct r\in\mathcal U^{\perp}$.
By carefully selecting constants in Eq. (\ref{eq_m_lowerbound_uniform}) we can make
\begin{equation}
\|\vct x-\mat U(\mat U_{\Omega}^\top\mat U_{\Omega})^{-1}\mat U_{\Omega}^\top\vct x\|_2^2 \leq (1+\epsilon)\|\mathcal P_{\mathcal U^\perp}\vct x\|_2^2.
\end{equation}
Summing over all $n_2$ columns yields the desired result.
\end{proof}

\section{Proof of lower bound for passive sampling}\label{appsec:lowerbound}

\begin{proof}[Proof of Theorem \ref{thm_lb_css}]
Let $\widetilde{\mathcal X}=\{\mat X_1,\cdots,\mat X_T\}\subseteq \mathcal X'$
be a finite subset of $\mathcal X'$ which we specify later.
Let $\pi$ be any prior distribution over $\widetilde{\mathcal X}$.
We then have the following chain of inequalities:
\begin{eqnarray}
R_{\css}^* &=& \inf_{f\in\mathcal F'}\inf_{q\in\mathcal Q}\sup_{\mat X\in\mathcal X'}\Pr_{\Omega\sim q;f}[\mat X\neq\mat X_C\mat X_C^\dagger\mat X]\nonumber\\
&\geq& \inf_{f\in\mathcal F'}\inf_{q\in\mathcal Q}\Pr_{\Omega\sim q;\mat X\sim\pi;f}[\mat X\neq\mat X_C\mat X_C^\dagger\mat X]\label{eq_lbcss_eq1}\\
&\geq& \inf_{f\in\mathcal F'}\min_{|\Omega|=m}\Pr_{\mat X\sim\pi;f}[\mat X\neq\mat X_C\mat X_C^\dagger\mat X]\label{eq_lbcss_eq2}.
\end{eqnarray}
Here Eq. (\ref{eq_lbcss_eq1}) uses the fact that the maximum dominates any expectation over the same set
and for Eq. (\ref{eq_lbcss_eq2}) we apply Yao's principle, which asserts that the worst-case performance of a randomized algorithm
is better (i.e., lower bounded) by the averaging performance of a deterministic algorithm.
Hence, when the input matrix $\mat X$ is randomized by a prior $\pi$ it suffices to consider only deterministic sampling schemes,
which corresponds to a subset of matrix entries $\Omega$ fixed a priori, with size $|\Omega|=m$.

We next construct the subset $\widetilde{\mathcal X}$ and let $\pi$ be the uniform distribution over $\widetilde{\mathcal X}$.
Let $\vct x_1,\cdots,\vct x_{k-2}\in\mathbb R^{n_1}$ be an arbitrary set of linear independent column vectors with $[\vct x_i]_1=0$ for all $i=1,2,\cdots,k=2$
and $\mu(\vct x_1),\cdots,\mu(\vct x_{k-2}) = 1+\frac{1}{n_1-1}$.
This can be done by setting all nonzero entries in $\vct x_1,\cdots,\vct x_{k-2}$ to $\pm 1$.
In addition, we define $\vct y:=(1,1,\cdots,1)$ and $\vct e_j=(0,\cdots,0,1,0,\cdots,0)$ with the only nonzero entry at the $j$th position.
Next, define $\widetilde{\mat X}=\{\mat X^{i,j}\}_{i=k-1,j=1}^{n_2,n_1}$ with
\begin{equation}
\mat X^{i,j}_{(\ell)} = \left\{\begin{array}{ll}
\vct x_{\ell}& \text{if }\ell \leq k-2,\\
\vct y - 2\vct e_j& \text{if }\ell = i,\\
\vct y& \text{otherwise.}
\end{array}\right.
\end{equation}

It follows by definition that $\rank(\mat X^{i,j})=k$ and $\mu(\mat X^{i,j}_{(\ell)})\leq \mu_1=1+\frac{1}{n_1-1}$ for all $i,j$ and $\ell$.
Furthermore, for fixed $i$ and $j$ one necessary condition for $\mat X=\mat X_C\mat X_C^\dagger\mat X$ is $\{1,2,\cdots,k-2,i\}\subseteq C$.
Therefore, if for distinct $i_1,i_2,i_3,i_4$ and some $j_1,j_2,j_3,j_4$ one has $\mat X^{i_1,j_1}_{\Omega} = \cdots = \mat X^{i_4,j_4}_{\Omega}$
then the best a column subset selection algorithm $f$ could do is random guessing and hence $\Pr[\mat X\neq\mat X_C\mat X_C^\dagger\mat X]\geq 1/2$.
Consequently, for fixed $\Omega$ one has
\begin{equation}
\inf_{f\in\mathcal F'}\Pr_{\mat X\sim\pi;f}[\mat X\neq\mat X_C\mat X_C^\dagger\mat X]
\geq \frac{1}{2} - \frac{1}{2}\bigg|\left\{\mat X^{i,j}: \mat X^{i',j'}_\Omega\neq \mat X^{i,j}, \forall i'\neq i,j'\in[n_1]\right\}\bigg|.
\label{eq_lb_key}
\end{equation}

The final step is to bound the size of the set $E=\{\mat X^{i,j}: \mat X^{i',j'}_\Omega\neq \mat X^{i,j}, \forall i'\neq i,j'\in[n_1]\}$.
Note that if $\mat X_{\Omega}$ is $+1$ on all entries $(i,j)$ with $i>k-2$ then $\mat X\notin E$ because for every $\mat X'\in\widetilde{\mathcal X}$, $\mat X'_{\Omega}=\mat X_{\Omega}$.
Consequently,
\begin{equation}
\big|E\big| \leq \frac{|\Omega|}{n_1(n_2-k+2)} \leq \frac{m}{n_1(n_2-k)}.
\label{eq_lb_E}
\end{equation}
Plugging Eq. (\ref{eq_lb_E}) into Eq. (\ref{eq_lb_key}) we complete the proof of Theorem \ref{thm_lb_css}.

\end{proof}

\section{Some concentration inequalities}\label{appsec:concentration}

\begin{lem}[\citep{gaussian-2norm-bound}]
Let $X\sim\chi_d^2$. Then with probability $\geq 1-2\delta$ the following holds:
\begin{equation}
-2\sqrt{d\log(1/\delta)} \leq X-d\leq 2\sqrt{d\log(1/\delta)} + 2\log(1/\delta).
\end{equation}
\label{lem_gaussian_2norm}
\end{lem}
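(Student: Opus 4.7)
The plan is to prove both tail bounds via a standard Chernoff argument using the moment generating function of the chi-squared distribution, and then invert the resulting tail probabilities to produce the stated deviations. Recall that for $X\sim\chi_d^2$ one has $\mathbb{E}[e^{\lambda X}] = (1-2\lambda)^{-d/2}$ for all $\lambda<1/2$. The whole proof is then a matter of choosing $\lambda$ optimally and controlling a logarithm by a suitable Taylor expansion; this is the classical Laurent--Massart argument.

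For the upper tail, I would start with Markov's inequality applied to $e^{\lambda(X-d)}$, giving, for any $\lambda\in(0,1/2)$,
\begin{equation*}
\Pr[X-d\geq t] \;\leq\; \exp\!\bigl(-\lambda t - \lambda d - \tfrac{d}{2}\log(1-2\lambda)\bigr).
\end{equation*}
The next step is to bound the logarithm by its Taylor series: writing $-\log(1-2\lambda)-2\lambda = \sum_{k\geq 2}(2\lambda)^k/k$ and comparing with a geometric series, one gets $-\tfrac{d}{2}\log(1-2\lambda)-\lambda d \leq \lambda^2 d/(1-2\lambda)$. Plugging this in and setting $\lambda = t/(2(d+t))$ (which lies in $(0,1/2)$) yields the bound $\Pr[X-d\geq t]\leq \exp\bigl(-t^2/(4(d+t))\bigr)$, or equivalently $\Pr[X-d\geq 2\sqrt{d\log(1/\delta)}+2\log(1/\delta)]\leq\delta$ after solving the quadratic in $t$.

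For the lower tail, the same argument with $\lambda<0$ gives $\Pr[X-d\leq -t]\leq \exp(\lambda t - \lambda d - \tfrac{d}{2}\log(1-2\lambda))$; here the series $-\log(1-2\lambda)-2\lambda = \sum_{k\geq 2}(2\lambda)^k/k$ is nonnegative with all terms bounded by $\lambda^2\cdot(\text{const})$ uniformly (no singularity since $\lambda<0$), which gives the cleaner estimate $\Pr[X-d\leq -t]\leq\exp(-t^2/(4d))$, hence $\Pr[X-d\leq-2\sqrt{d\log(1/\delta)}]\leq\delta$. Taking a union bound over the two tails yields the stated two-sided inequality with failure probability $2\delta$.

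The main obstacle is purely algebraic: carefully estimating the tail of the series for $-\log(1-2\lambda)$ so that the optimal $\lambda$ produces the clean form $2\sqrt{d\log(1/\delta)}+2\log(1/\delta)$ on the upper side (with the extra additive $2\log(1/\delta)$ term) versus just $2\sqrt{d\log(1/\delta)}$ on the lower side. This asymmetry is intrinsic to the chi-squared: the upper tail is sub-exponential while the lower tail is sub-Gaussian, and the proof just has to reflect this through the different behaviors of $-\log(1-2\lambda)$ for $\lambda>0$ versus $\lambda<0$.
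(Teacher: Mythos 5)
The paper does not prove this lemma at all --- it is cited verbatim from the reference (this is the classical Laurent--Massart $\chi^2$ tail bound), so you are reconstructing the cited proof. Your overall strategy (Chernoff bound via the MGF $(1-2\lambda)^{-d/2}$, the series estimate $-\tfrac{d}{2}\log(1-2\lambda)-\lambda d\leq d\lambda^2/(1-2\lambda)$ for $\lambda\in(0,1/2)$, the sub-Gaussian estimate $\leq d\lambda^2$ for $\lambda<0$, and the union bound) is exactly the right one, and your lower-tail argument is correct as written.

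However, your upper tail has a concrete slip. With $\lambda=t/(2(d+t))$ you correctly obtain $\Pr[X-d\geq t]\leq\exp\bigl(-t^2/(4(d+t))\bigr)$, but this bound does \emph{not} yield the stated threshold: at $t=2\sqrt{dx}+2x$ (with $x=\log(1/\delta)$) one computes $t^2-4x(d+t)=-4x^2<0$, so $t^2/(4(d+t))<x$ and the bound is strictly larger than $\delta$. Solving the quadratic $t^2=4x(d+t)$ actually gives $t=2x+2\sqrt{x^2+dx}$, which is a weaker (larger) deviation than $2x+2\sqrt{dx}$. The fix is to not commit to that particular suboptimal $\lambda$: optimizing $\lambda\mapsto\lambda t-d\lambda^2/(1-2\lambda)$ exactly over $\lambda\in(0,1/2)$ (substitute $u=1-2\lambda$ and set $u=(1+2t/d)^{-1/2}$) gives
\begin{equation*}
\Pr[X-d\geq t]\;\leq\;\exp\Bigl(-\tfrac{d}{4}\bigl(\sqrt{1+2t/d}-1\bigr)^2\Bigr),
\end{equation*}
and setting the exponent equal to $x$ gives $\sqrt{1+2t/d}=1+2\sqrt{x/d}$, i.e.\ exactly $t=2\sqrt{dx}+2x$. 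Equivalently, you can invoke the standard sub-gamma inversion: if $\log\mathbb{E}[e^{\lambda Z}]\leq a\lambda^2/(2(1-b\lambda))$ then $\Pr[Z\geq\sqrt{2ax}+bx]\leq e^{-x}$; here $a=2d$, $b=2$. With that one correction your proof is complete.
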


\begin{lem}
Let $X_1,\cdots,X_n\sim\nml(0,\sigma^2)$. Then with probability $\geq 1-\delta$ the following holds:
\begin{equation}
\max_i{|X_i|}\leq \sigma\sqrt{2\log(2n/\delta)}.
\end{equation}
\label{lem_gaussian_infty}
\end{lem}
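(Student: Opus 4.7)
The plan is to apply the classical sub-Gaussian tail bound coordinatewise and then combine the $n$ resulting bounds through a union bound. First I would recall that for a single $X\sim\nml(0,\sigma^2)$ and any $t>0$, symmetry of the density and the standard Chernoff estimate yield
\[
\Pr[|X|>t]\;\leq\;2\exp\!\left(-\frac{t^2}{2\sigma^2}\right).
\]
This is the only non-trivial ingredient, and it is completely standard; one can derive it either from $\mathbb{E}[e^{\lambda X}]=e^{\lambda^2\sigma^2/2}$ via Markov's inequality with the optimal choice $\lambda=t/\sigma^2$, or directly from the Mills ratio.

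Next I would take a union bound over the $n$ coordinates $X_1,\ldots,X_n$ (no independence is actually needed for this step) to obtain
\[
\Pr\!\left[\max_{1\leq i\leq n}|X_i|>t\right]\;\leq\;2n\exp\!\left(-\frac{t^2}{2\sigma^2}\right).
\]
Setting the right-hand side equal to $\delta$ and solving for $t$ gives $t=\sigma\sqrt{2\log(2n/\delta)}$, which is exactly the claimed bound. There is no genuine obstacle in this argument; the only thing to keep track of is the factor of $2$ in the one-dimensional tail bound (from the two-sided event $|X|>t$), which propagates into the $2n$ inside the logarithm and gives the stated dependence on $\delta$ and $n$.
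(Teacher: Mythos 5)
Your argument is correct and is the standard one: the one-dimensional sub-Gaussian tail bound $\Pr[|X|>t]\leq 2\exp(-t^2/2\sigma^2)$ followed by a union bound over the $n$ coordinates, with $t$ chosen so that $2n\exp(-t^2/2\sigma^2)=\delta$. The paper states this lemma without proof as a standard concentration fact, and your derivation is exactly the intended justification, including the correct observation that independence is not needed for the union bound.
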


\begin{lem}[\citep{gaussian-spectrum-bound}]
Let $\mat X$ be an $n\times t$ random matrix with i.i.d. standard Gaussian random entries.
If $t<n$ then for every $\epsilon\geq 0$ with probability $\geq 1-2\exp(-\epsilon^2/2)$ the following holds:
\begin{equation}
\sqrt{n}-\sqrt{t}-\epsilon \leq \sigma_{\min}(\mat X) \leq \sigma_{\max}(\mat X)\leq \sqrt{n}+\sqrt{t}+\epsilon.
\end{equation}
\label{lem_gaussian_spectrum}
\end{lem}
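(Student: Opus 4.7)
The plan is to prove this as a standard consequence of two classical ingredients: (i) a Gordon-type comparison inequality for Gaussian processes, which controls the expected extreme singular values, and (ii) Gaussian concentration of Lipschitz functions, which sharpens the expectation bounds into the stated sub-Gaussian tail.

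First I would write the extreme singular values in variational form, $\sigma_{\max}(\mat X)=\sup_{\|u\|=\|v\|=1}u^{\top}\mat X v$ and $\sigma_{\min}(\mat X)=\inf_{\|v\|=1}\sup_{\|u\|=1}u^{\top}\mat X v$, and introduce the auxiliary Gaussian process $Y_{u,v}=g^{\top}u+h^{\top}v$ with $g\sim\nml(0,\mat I_n)$, $h\sim\nml(0,\mat I_t)$ independent. A direct covariance computation yields
\begin{equation*}
\mathbb{E}[(Y_{u,v}-Y_{u',v'})^2]-\mathbb{E}[(u^{\top}\mat X v-u'^{\top}\mat X v')^2]=2(1-\langle u,u'\rangle)(1-\langle v,v'\rangle)\geq 0,
\end{equation*}
with equality whenever $u=u'$. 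This is precisely the hypothesis of Gordon's min--max comparison theorem, from which I would extract the two expectation bounds
\begin{equation*}
\mathbb{E}\,\sigma_{\max}(\mat X)\leq \mathbb{E}\|g\|+\mathbb{E}\|h\|\leq \sqrt{n}+\sqrt{t},\qquad \mathbb{E}\,\sigma_{\min}(\mat X)\geq \sqrt{n}-\sqrt{t},
\end{equation*}
using Jensen's inequality $\mathbb{E}\|g\|\leq \sqrt{\mathbb{E}\|g\|^2}=\sqrt{n}$ for the upper bound and the fact that $\mathbb{E}\max_v g^{\top}u\pm \max h^{\top}v$ telescopes after the Gordon comparison.

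Next I would upgrade these bounds to high-probability statements via Gaussian concentration. Viewing $\mat X\in\mathbb R^{nt}$ as a standard Gaussian vector and noting that both maps $\mat X\mapsto \sigma_{\max}(\mat X)$ and $\mat X\mapsto \sigma_{\min}(\mat X)$ are $1$-Lipschitz with respect to the Frobenius norm (which coincides with the Euclidean norm on $\mathbb R^{nt}$), the Borell--Sudakov--Tsirelson inequality gives
\begin{equation*}
\Pr[\sigma_{\max}(\mat X)\geq \mathbb{E}\sigma_{\max}(\mat X)+\epsilon]\leq e^{-\epsilon^2/2},\qquad \Pr[\sigma_{\min}(\mat X)\leq \mathbb{E}\sigma_{\min}(\mat X)-\epsilon]\leq e^{-\epsilon^2/2}.
\end{equation*}
Substituting the Gordon bounds on the expectations and applying a union bound yields the two-sided tail $2e^{-\epsilon^2/2}$ claimed in the lemma.

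The only genuinely delicate step is the Gordon comparison: verifying the covariance inequality in both the strict ($u\neq u'$) and equality ($u=u'$) cases so that the min--max version of Slepian's lemma applies in the correct direction. The Lipschitz verification is routine (since $|\sigma_i(\mat X)-\sigma_i(\mat X')|\leq \|\mat X-\mat X'\|_2\leq \|\mat X-\mat X'\|_F$ by Weyl's inequality), and the concentration step is then immediate. Because the result is entirely standard, my write-up would mostly amount to citing Gordon's theorem and the Gaussian concentration inequality and assembling them as above.
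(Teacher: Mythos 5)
The paper does not actually prove this lemma; it is quoted verbatim from the cited reference (it is the Davidson--Szarek/Gordon bound on the extreme singular values of a Gaussian matrix), so there is no internal proof to compare against. Your outline is the standard proof of that result, and the two main ingredients --- the Gordon/Slepian comparison with the decoupled process $Y_{u,v}=g^\top u+h^\top v$ (your covariance identity $2(1-\langle u,u'\rangle)(1-\langle v,v'\rangle)\ge 0$ is correct) and Borell--Tsirelson concentration for the $1$-Lipschitz maps $\mat X\mapsto\sigma_{\max}(\mat X)$ and $\mat X\mapsto\sigma_{\min}(\mat X)$ --- are assembled in the right order.

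There is, however, one concrete gap in the expectation step for $\sigma_{\min}$. Writing $a_d:=\mathbb E\|g\|_2$ for $g\sim\nml(0,\mat I_d)$, Gordon's comparison gives $\mathbb E\,\sigma_{\min}(\mat X)\ge a_n-a_t$, and you then need $a_n-a_t\ge\sqrt n-\sqrt t$. Jensen's inequality only yields $a_d\le\sqrt d$; that is the right direction for the term $-a_t\ge-\sqrt t$ but the wrong direction for $a_n$, since $a_n<\sqrt n$ strictly. So ``telescoping'' with Jensen alone cannot close the lower bound. The standard fix is the elementary but non-obvious fact that $d\mapsto\sqrt d-a_d$ is nonincreasing (equivalently, $a_n-a_t\ge\sqrt n-\sqrt t$ for $n\ge t$), which follows from properties of the ratio $a_d=\sqrt2\,\Gamma((d+1)/2)/\Gamma(d/2)$, for instance via the identity $a_d\,a_{d+1}=d$ together with monotonicity of $(a_d)$. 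With that one fact supplied your argument closes; the upper bound on $\sigma_{\max}$, the Lipschitz verification via Weyl's inequality, and the concentration plus union-bound step are all fine as written.
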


\begin{lem}[Noncommutative Bernstein Inequality, \citep{nc-bernstein-1,simpler-matrix-completion}]
Let $\mat X_1,\cdots,\mat X_m$ be independent zero-mean square $n\times n$ random matrices.
Suppose $\rho_k^2 = \max(\|\mathbb E[\mat X_k\mat X_k^\top]\|_2, \|\mathbb E[\mat X_k^\top\mat X_k]\|_2)$
and $\|\mat X_k\|_2\leq M$ with probability 1 for all $k$.
Then for any $t>0$, 
\begin{equation}
\Pr\left[\left\|\sum_{k=1}^m{\mat X_k}\right\|_2 > t\right] \leq 2n\exp\left(-\frac{t^2/2}{\sum_{k=1}^m{\rho_k^2} + Mt/3}\right).
\label{eq_noncommutative_bernstein}
\end{equation}
\label{lem_noncommutative_bernstein}
\end{lem}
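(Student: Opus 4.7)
The plan is to follow the standard matrix Chernoff/Laplace-transform route due to Ahlswede--Winter and Tropp, which is the canonical proof of this inequality. I will organize it in six steps, only the third of which is operator-theoretically nontrivial.

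First, I would reduce to the Hermitian case by passing to the self-adjoint dilation
\[
\tilde{\mat X}_k \;=\; \begin{pmatrix} \mat 0 & \mat X_k \\ \mat X_k^\top & \mat 0 \end{pmatrix}\in\mathbb R^{2n\times 2n},
\]
which is Hermitian with $\|\tilde{\mat X}_k\|_2=\|\mat X_k\|_2\leq M$ and, by direct block multiplication, $\mathbb E[\tilde{\mat X}_k^2]=\mathrm{diag}(\mathbb E[\mat X_k\mat X_k^\top],\mathbb E[\mat X_k^\top\mat X_k])$, so $\|\mathbb E[\tilde{\mat X}_k^2]\|_2=\rho_k^2$. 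Moreover $\|\sum_k\mat X_k\|_2=\lambda_{\max}(\sum_k\tilde{\mat X}_k)$. This reduces the task to a tail bound on the largest eigenvalue of a sum of independent zero-mean Hermitian matrices.

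Second, I would apply the matrix Markov inequality: for any $\theta>0$,
\[
\Pr\!\left[\lambda_{\max}\!\left(\sum_{k=1}^m\tilde{\mat X}_k\right)>t\right]
\;\leq\; e^{-\theta t}\,\mathbb E\,\mathrm{tr}\exp\!\left(\theta\sum_{k=1}^m\tilde{\mat X}_k\right),
\]
since $\lambda_{\max}(\mat Y)>t$ iff $e^{\theta\lambda_{\max}(\mat Y)}>e^{\theta t}$ and $e^{\theta\lambda_{\max}(\mat Y)}\leq\mathrm{tr}\,e^{\theta\mat Y}$. Third --- the main obstacle --- I need to linearize the exponential of the sum. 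For this I invoke Lieb's concavity theorem (equivalently, Tropp's subadditivity-of-cumulants lemma), applied inductively over $k=1,\dots,m$ together with independence, to obtain
\[
\mathbb E\,\mathrm{tr}\exp\!\left(\theta\sum_{k=1}^m\tilde{\mat X}_k\right)
\;\leq\;\mathrm{tr}\exp\!\left(\sum_{k=1}^m\log\mathbb E\,e^{\theta\tilde{\mat X}_k}\right).
\]
This step is the only non-elementary ingredient; the Golden--Thompson inequality would suffice for $m=2$ but not in general, which is why Lieb is required.

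Fourth, I would produce a Bernstein-type bound on the matrix cumulant $\log\mathbb E\,e^{\theta\tilde{\mat X}_k}$. Using the scalar estimate $e^x\leq 1+x+\frac{x^2/2}{1-|x|/3}$ for $|x|<3$ applied via the functional calculus with $|\theta\tilde{\mat X}_k|\preceq \theta M\mat I$, one gets for $0<\theta<3/M$
\[
\mathbb E\,e^{\theta\tilde{\mat X}_k}\;\preceq\;\mat I+\frac{\theta^2/2}{1-M\theta/3}\,\mathbb E[\tilde{\mat X}_k^2],
\]
and then the operator-monotone bound $\log(\mat I+\mat A)\preceq\mat A$ yields $\log\mathbb E\,e^{\theta\tilde{\mat X}_k}\preceq\frac{\theta^2/2}{1-M\theta/3}\,\mathbb E[\tilde{\mat X}_k^2]$. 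Taking operator norms and using $\mathrm{tr}\,e^{\mat A}\leq 2n\cdot e^{\|\mat A\|_2}$, the three inequalities combine to
\[
\Pr\!\left[\left\|\sum_k\mat X_k\right\|_2>t\right]\;\leq\;2n\exp\!\left(-\theta t+\frac{\theta^2/2}{1-M\theta/3}\sum_{k=1}^m\rho_k^2\right).
\]

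Fifth and finally, I optimize over $\theta$ by choosing $\theta=t/\bigl(\sum_k\rho_k^2+Mt/3\bigr)\in(0,3/M)$; substituting gives exactly the claimed exponent $-\tfrac{t^2/2}{\sum_k\rho_k^2+Mt/3}$. The hard part is really just citing/verifying Lieb's theorem in Step 3; Steps 1, 2, 4, and 5 are all short, mechanical computations from the spectral calculus and scalar Chernoff template.
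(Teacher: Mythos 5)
The paper does not prove this lemma---it is imported verbatim from the cited references and used as a black box---so there is no internal proof to compare against. Your argument (self-adjoint dilation, matrix Laplace transform, linearization of the cumulant generating function, the Bernstein MGF estimate $e^x\leq 1+x+\frac{x^2/2}{1-|x|/3}$, and the choice $\theta=t/(\sum_k\rho_k^2+Mt/3)$) is the standard derivation and is correct; the exponent works out exactly, and the dimension factor $2n$ comes out of the trace bound on the $2n\times 2n$ dilation as you say. One small inaccuracy: for the bound in precisely this form, with variance parameter $\sum_k\rho_k^2$ (a sum of individual operator norms rather than $\|\sum_k\mathbb E[\tilde{\mat X}_k^2]\|_2$), Lieb's theorem is not actually required---the iterated Golden--Thompson argument of Ahlswede--Winter, peeling off one factor $\|\mathbb E e^{\theta\tilde{\mat X}_k}\|_2$ at a time from the trace, already yields it for all $m$, and that is the route taken in the references the paper cites; Lieb buys you the sharper variance proxy, which is not needed here.
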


\end{appendices}


\bibliography{css-sampling}

\end{document}